\pgfplotsset{compat=newest}
\setlist{nosep}
\renewcommand{\@algocf@capt@plain}{above}
\def\assign{\leftarrow}
\renewcommand{\O}{\mathscr{O}}
\newcommand{\Otilde}{\widetilde{\mathscr{O}}}
\newcommand{\zeroone}{\mbox{$0\hspace*{-2pt}-\hspace*{-3pt}1$}\xspace}
\newcommand{\ie}{\emph{i.e.\xspace}}
\newcommand{\Id}[1]{\mathds{1}\hspace{-2.5pt}\left[#1\right]}
\let\oldforall=\forall
\renewcommand{\forall}{\hspace{1pt}\oldforall\hspace{1pt}}
\let\oldexists=\exists
\renewcommand{\exists}{\hspace{1pt}\oldexists\hspace{1pt}}
\newcommand{\pr}[1]{\left(#1\right)}
\newcommand{\cb}[1]{\left\{#1\right\}}
\newcommand{\abs}[1]{\left|#1\right|}
\newcommand{\norm}[1]{\left\lVert#1\right\rVert}
\newcommand{\D}{\mathcal{D}}
\newcommand{\E}{\mathcal{E}}
\newcommand{\F}{\mathcal{F}}
\newcommand{\f}{\mathbf{f}}
\renewcommand{\H}{{\mathcal{H}}}
\newcommand{\K}{\mathcal{K}}
\newcommand{\N}{\mathcal{N}}
\newcommand{\x}{\mathbf{x}}
\newcommand{\X}{\mathcal{X}}
\newcommand{\Y}{\mathcal{Y}}
\newcommand{\fhat}{\hat{\f}}
\newcommand{\z}{\mathbf{z}}
\newcommand{\gammacover}{\frac{\gamma}{2^{2-1/p}}}
\newcommand{\fcover}{\widehat{\F}_T}
\newcommand{\reals}{\mathds{R}}
\newcommand{\naturals}{\mathds{N}}
\newcommand{\eqdef}{\overset{\smash{\mbox{\tiny \rm def}}}{=}}
\DeclareMathOperator*{\Prob}{\mathds{P}}
\newcommand{\prob}[2]{\Prob_{#1}\left[#2\right]}
\DeclareMathOperator*{\Expectation}{\mathds{E}}
\newcommand{\exv}[2]{\Expectation_{#1}\left[#2\right]}
\DeclareMathOperator*{\argmax}{argmax}
\newcommand{\Bin}{\textnormal{Bin}}
\newcommand{\BinInv}{\overline{\textnormal{Bin}}}
\newcommand{\hyp}{\textnormal{hyp}}
\newcommand{\Hyp}{\textnormal{Hyp}}
\newcommand{\HypInv}{\overline{\textnormal{Hyp}}}
\newcommand{\HypInvLower}{\underline{\textnormal{Hyp}}}
\begin{document}

\title{Improving Generalization Bounds for VC Classes\\ Using the Hypergeometric Tail Inversion}

\author{\name Jean-Samuel Leboeuf \email jean-samuel.leboeuf.1@ulaval.ca \\
        \addr Department of Computer Science\\
        Universit\'e Laval\\
        Qu\'ebec, QC, Canada
        \AND
        \name Fr\'ed\'eric LeBlanc \email flebl035@uottawa.ca\\
        \addr Department of Mathematics and Statistics\\
        University of Ottawa\\
        Ottawa, ON, Canada
        \AND
        \name Mario Marchand \email mario.marchand@ift.ulaval.ca \\
        \addr Department of Computer Science\\
        Universit\'e Laval\\
        Qu\'ebec, QC, Canada
        }

\editor{TBD}

\maketitle

\begin{abstract}
We significantly improve the generalization bounds for VC classes by using two main ideas.
First, we consider the hypergeometric tail inversion to obtain a very tight non-uniform distribution-independent risk upper bound for VC classes.
Second, we optimize the ghost sample trick to obtain a further non-negligible gain.
These improvements are then used to derive a relative deviation bound, a multiclass margin bound, as well as a lower bound.
Numerical comparisons show that the new bound is nearly never vacuous, and is tighter than other VC bounds for all reasonable data set sizes.
\end{abstract}

\begin{keywords} 
  Non-uniform generalization bound, classification, VC dimension, hypergeometric tail inversion, ghost sample
\end{keywords}

\section{Introduction}


In their pioneering work, \citet{vapnik71} identified a key combinatorial property of classes of functions, now called the Vapnik-Chervonenkis (VC) dimension, that characterizes the generalization properties of any learning algorithm operating on classes whose VC dimension is finite.
They showed that the difference between the true and empirical risks of a classifier is bounded from above in $\O\big(\!\pr{\frac{d}{m} \log \frac{m}{d}}^{\scriptscriptstyle 1/2}\big)$
with high probability, where $m$ is the sample size and $d$ the VC dimension of the hypothesis class---a result known as \emph{Vapnik's pessimistic bound}.
A second result of \citet{vapnik71} (improved by \citet{as-93} and \citet{vapnik98}), called \emph{Vapnik's relative deviation bound}, upper bounds the difference between the true and empirical risks of a classifier in $\smash[t]{\Otilde\big(\frac{d+\sqrt{dk}}{m}\big)}$%
\footnote{We use the $\Otilde$ notation to specify that $\log\frac{m}{d}$ factors are ignored.}%
, where $k$ is the number of errors made by the classifier on the sample.
An advantage of this bound is that it interpolates correctly between the asymptotic behaviors of the agnostic ($k\ne 0$) and the realizable ($k=0$) cases (the latter is known to be in $\widetilde{\Theta}\big(\frac{d}{m}\big)$), albeit at the cost of a larger universal constant than the pessimistic bound.

In parallel to these bounds, a complementary result showed that generalization bounds based on VC dimension were also bounded from below in $\Omega\big(\sqrt{d/m}\big)$, \ie, without the logarithmic factor (see \citet{vapnik98}, \citet{as-93}, recently improved to optimality by \citet{kontorovich2019exact}).
Not long after, some upper bounds making use of \citet{dudley1978central}'s chaining technique successfully matched the asymptotic behavior of the lower bound (\citet{li2001improved,lugosi2002pattern}).

Rather than discouraging the community to pursue this line of work, these minimax results incited researchers to find new ways to improve the convergence rate of generalization bounds.
These so-called ``fast rate'' bounds enjoy a favorable asymptotic behavior in $\smash[t]{\Otilde\big(\frac{1}{m}\big)}$ at the expense of generality.
They all rely on some assumptions about the nature of the data labeling distribution, such as \emph{Tsybakov's margin condition} \citep{tsybakov04optimal,massart06risk,zhivotovskiy18localization}, the \emph{Bernstein condition} \citep{bartlett06empirical} and the \emph{central condition on proper learning} \citep{vanerven15fast}, or on some distribution-dependent quantities like local Rademacher complexities \citep{bartlett2002localized} and local VC dimension \mbox{\citep{oneto2016local}}.
Some have even been able to obtain exact results on the generalization capabilities of VC classes, such as \citet{vayatis2003exact}, \citet{vorontsov2010exact} and \citet{bottou2015making}.
Even though these bounds can be very powerful in the right situation, they suffer from a limited scope of application---either because the assumptions made on the unknown generating distribution are hard to verify, or because they make use of quantities hard to compute or approximate (such as Rademacher complexities) \citep{bartlett2002rademacher}.

While the community was mainly concerned with improving the convergence rate of generalization bounds, few actually tried to further decrease the constant factor of the original bounds of Vapnik.
Yet, it would be of great interest to have general distribution-independent bounds that are tight for real-life situations, where the sample size is relatively small and the distribution is unknown.
Indeed, to see that constant factors matter, one can show fairly easily (see Appendix~\ref{app:comparison_lugosi}) that \citet{lugosi2002pattern}'s $\Theta(\sqrt{d/m})$ chaining bound, which relies on \citet{haussler1995sphere}'s discretization trick, becomes tighter than \citet{vapnik98}'s pessimistic bound only when the sample size is on the order of $m \gtrsim 10^{335}d$. 
Hence, even if it was possible to reduce the constant factor of \citeauthor{lugosi2002pattern}'s bound, it is unlikely that this would lead to a tighter bound than Vapnik's bound for reasonable data set sizes.

In fact, the work of \citet{catoni2004improved} is the only one, to the best of our knowledge, to provide a meaningful improvement to the constant factors of Vapnik's bounds.
\citeauthor{catoni2004improved} achieves this by introducing the concept of ``partially exchangeable'' prior distributions on multiple ``shadow'' copies of the original sample in the transductive setting.
Thus, it feels as if bounds for VC classes are somewhat outdated compared to other types of bounds such as sample compression bounds \citep{floyd95sample} or PAC-Bayesian bounds \citep{mcallester1999some}, which have received a fair amount of attention over the last years.
However, contrary to the case of sample compression, where \citet{langford05} and \citet{Laviolette09} have shown that the bounds are nearly as tight as one might hope, distribution-independent bounds for VC classes still have room for improvements.

While it is true that several recent research avenues indicate that the VC dimension is not the relevant complexity parameter to characterize the generalization error of over-parametrized deep neural networks (see for examples \citet{bartlett2019nearly} and \citet{zhang2021understanding}), neurals networks suffer from a lack of interpretability.
This shortcoming often prevents them from being used in areas where human safety, health, or well-being is at stake.
Interpretable classifiers most often take the form of hard threshold decisions such as those provided by Boolean formulas (for example DNFs), decision lists, decision trees, rule sets, etc.
For these models, the VC dimension is relevant and can be computed (or approximated), and a generalization upper bound can guide learning algorithms to achieve the proper bias-complexity trade-off, as well as make them certifiable.

In light of these observations, we derive a tight non-uniform distribution-independent generalization bound for VC classes by exploiting two new ideas that have appeared since Vapnik's original work.
The proofs of Vapnik's bounds rely on two critical steps, called the ghost sample trick (which replaces the dependence on the true risk by a dependence on the risk of a ``ghost sample'') and the symmetrization trick (which allows simplifications by mixing the real sample with the ghost sample).
Taking inspiration from \citet{langford05}'s binomial tail inversion bound, we improve on the symmetrization step by considering the \emph{hypergeometric tail inversion}.
Then, we optimize the size of the ``ghost sample''---a step neglected by \citet{vapnik98} and \citet{as-93}---to obtain noticeable gains, in agreement with the observations of \citet{catoni2004improved}.
The new technique is general enough to also apply to a relative deviation bound, a margin bound, and a lower bound.

These improvements allow us to make very few approximations in the derivation of the bound; as such, the new bound is significantly tighter than Vapnik's bounds, Catoni's bound, and Lugosi's chaining bound for various practical sets of parameters.
It is asymptotically optimal up to logarithmic factors for both the realizable case and the agnostic case.
Furthermore, as we show with numerical comparisons, it is almost never vacuous, even for a sample size as small as 100 examples or for large risks, contrary to the previous bounds.

The paper is divided as follows.
In the next section, we present the notation, definitions, and the mathematical setting.
The third section is dedicated to the main result, where we give the proof of the new upper bound and analyze its strengths and weaknesses in depth.
We also discuss the trade-off offered by the ghost sample size.
We then briefly present a relative deviation bound, a margin bound, and a lower bound derived following the same improvements.
We conclude by numerically comparing the multiple bounds in order to be able to appreciate the actual gain offered by the new bound.

\section{Definitions and Notation}

Throughout this paper, we consider the multi-class classification problem where $\X$ denotes the instance space and $\Y = [n]$ for $n \in \naturals$, with $[n] \eqdef \{1,2,\dots,n\}$, denotes the label space, although we will assume $n=2$ in practical applications.
We also define $S \eqdef \big( (\x_1, y_1), \dots, (\x_m, y_m) \big) \in (\X\times \Y)^m$ as a collection of $m$ independent and identically distributed (i.i.d.) examples sampled according to some unknown probability distribution $\D$ on $\X\times\Y$.
Let $\H \subseteq [n]^{\X}$ be some hypothesis class.
We consider the \zeroone loss function, from which we define the true and empirical risk of a classifier $h\in\H$ as
\begin{equation*}
  R_\D(h) \eqdef \exv{(\x,y)\sim \D}{\Id{h(\x) \ne y}} \qquad \textnormal{and} \qquad
  R_S(h) \eqdef \frac{1}{m} \sum_{i=1}^m \Id{h(\x_i) \ne y_i},
\end{equation*}
where $\Id{\cdot}$ denotes the indicator function.
We adopt a notation where, given a random variable $Z$ following some distribution, $\exv{Z}{f(Z)}$ denotes the expectation of $f(Z)$ and $\prob{Z}{A(Z)}$ denotes the probability that event $A(Z)$ occurs.

To produce generalization bounds, VC theory makes use of the \emph{growth function} to characterize a given hypothesis class $\H$.
It is defined by $\tau_\H(m) \eqdef \max_{S:\left|S\right|=m} \left| \left\{ h|_S : h \in \H\right\} \right|$, where $h|_S \eqdef (h(\x_1), h(\x_2), \dots, h(\x_m))$, for $(\x_j,y_j) \in S$.
Note that this definition makes sense for any number of classes.
In the binary classification setting, one can use the \emph{Vapnik-Chervonenkis (VC) dimension} to approximate $\tau_\H$.
We define the VC dimension of $\H$ as the largest integer $d$ such that $\tau_\H(d) = 2^d$.
This quantity can be used to upper bound the growth function by Sauer-Shelah's lemma \citep{vapnik71}, which implies that $\tau_\H(m) \le \left( \frac{em}{d} \right)^d$ whenever $m \ge d$.
In the case where $n > 2$, one can use the Natarajan dimension \citep{bendavid95characterizations} to characterize $\H$ instead of the VC dimension so that a similar result holds for the growth function.

The main results of this paper rely on the hypergeometric distribution and its inverse.
We only give the essential definitions here; extensive detail is provided in Appendix~\ref{app:hypergeometric_distribution}.

A \emph{hypergeometric experiment} is the event of drawing $k$ successes in a sample of $m$ elements from a population of $M$ elements containing $K$ successes.
The \emph{hypergeometric distribution} gives the probability of such an event, which is equal to $\hyp(k,m,K,M) \eqdef \binom{K}{k}\binom{M-K}{m-k} / \binom{M}{m}$.
We refer to the cumulative distribution function as the \emph{hypergeometric tail function}, which we denote by $\Hyp(k,m,K,M) \eqdef \sum_{j=0}^k \hyp(j,m,K,M)$.
Figure~\ref{fig:hyp_tail_plot} of Appendix~\ref{app:hyp_tail} presents the hypergeometric tail function in terms of its four parameters.
Our main result requires us to invert the tail seen as a function of the parameter $K$.
Since $K$ is a discrete variable, the hypergeometric tail cannot be inverted straightforwardly.
Instead, one must use a pseudo-inverse, defined below.

\begin{definition}[Hypergeometric tail pseudo-inverse]\label{def:hyp_tail_inv}
We define the pseudo-inverse of the hypergeometric tail function, for any $k < m$ and any $\delta \in (0,1)$, as
\begin{equation*}
    \HypInv(k, m, \delta, M) \eqdef \min \cb{ K : \Hyp(k, m, K, M) \le \delta }.
\end{equation*}
\end{definition}
This choice of pseudo-inverse implies that $\Hyp\big(k,m,\HypInv(k,m,\delta,M),M\big) \le \delta$, a fact which will be useful later on.
Figure~\ref{fig:hyp_tail_plot_inv} in Appendix~\ref{app:hyp_tail_inv} plots the pseudo-inverse as a function of its four parameters.
We also provide two algorithms to compute this pseudo-inverse in Appendix~\ref{app:computing_HypInv}, accompanied by an implementation in Python available on GitHub\footnote{\url{https://github.com/jsleb333/hypergeometric_tail_inversion}}.
Note that pseudo-inverses are generally not unique; in fact, \citet{le2016validation} also introduce an similar but slightly different pseudo-inverse for the hypergeometric tail in the context of validation of matching algorithms.

\section{The Hypergeometric Tail Inversion Bound}
\label{sec:main_bound}

The new technique takes inspiration from \citet{langford05}'s binomial tail inversion theorem, which states that with probability $1-\delta$ on the choices of sample $S$ of size $m$, the true risk of a classifier $h$ is bounded from above by $\BinInv(mR_S(h),m,\delta)$, where $\BinInv$ is the binomial tail inverse (defined in Appendix~\ref{app:langfords_binomial_tail_inversion_theorem}, similarly to $\HypInv$).
The proof given by Langford feels more like an intuitive argumentation about why it should be true rather than an actual proof.
Therefore, we allow ourselves to present, in Appendix~\ref{app:langfords_binomial_tail_inversion_theorem}, a new formal proof of the theorem, which has in fact inspired this work.
The binomial tail inversion bound is ``essentially perfectly tight'' since no approximation is made in deriving it, with a major caveat: it is a test bound, meaning it does not hold simultaneously for all classifiers in a hypothesis class $\H$.
While one can use the union bound to make it hold for all $h\in\H$, this strategy is limited to the cases where $\H$ is countable.

In this section, we broaden the scope of \citeauthor{langford05}'s strategy by generalizing it to hypothesis classes with polynomial growth function.
We first state and explain the bound, and then prove it in detail.
Next, we examine the improvements over the traditional derivations of \cite{vapnik98} and \cite{as-93}, and analyze the weaker steps of the proof.
Finally, we discuss the effect of the ghost sample size and its implications.

\subsection{The Main Bound}

Typical \emph{uniform} bounds require that the absolute difference between the true risk $R_\D(h)$ and the empirical risk $R_S(h)$ is bounded by a function $\epsilon(m, \delta)$, which depends only on the number $m$ of examples and the confidence parameter $\delta$.
On the other hand, \emph{non-uniform} bounds allow the upper bound $\epsilon$ to depend on the particular classifier $h$ in addition to $m$ and $\delta$.
Both approaches have in common that they bound $\abs{R_\D(h)-R_S(h)}$.
This can provide interesting theoretical results, but in practice we are mainly concerned with finding an upper bound on the true risk.

The upper bound part of a uniform bound takes the form $R_\D(h) \le R_S(h) + \epsilon(m, \delta)$.
The linear dependence on the empirical risk being arbitrary, we lift this constraint and we look for a bound of the general form $R_\D(h) \le \epsilon(R_S(h), m, \delta)$ instead.
This transforms the bound into a non-uniform one, since $\epsilon$ now depends on $h$, but only via its empirical risk---the only dependence on the data that we allow ourselves.
As such, our main theorem, stated below, proposes a kind of middle ground between the uniform and non-uniform settings, similar to what \citeauthor{vapnik98}'s relative deviation bound offers.

\begin{theorem}[Hypergeometric tail inversion bound]
\label{thm:main}
Let $\H \subseteq \mathcal{Y}^\mathcal{X}$ be a hypothesis class.
Then for any distribution $\D$ on $\mathcal{X} \times \mathcal{Y}$, any confidence parameter $\delta \in (0, 1)$ and any integers $m, m'>0$, the following inequality holds:
\begin{equation*}
\prob{S\sim\D^m}{\forall h \in \H, R_\D(h) \leq \epsilon(m R_S(h), m, \delta)} > 1 - \delta,
\end{equation*}
where $\epsilon(k,m,\delta) = 1$ if $k=m$ and otherwise
\begin{equation*}
\epsilon(k, m, \delta) = \frac{1}{m'} \max\cb{ 1,\; \HypInv\big(k, m, \textstyle\frac{\delta}{4 \tau_\H(m + m')}, m + m'\big) - 1 - k }.
\end{equation*}
\end{theorem}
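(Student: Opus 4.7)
The strategy is to combine a ghost-sample reduction with a symmetrization-and-union-bound step driven by the hypergeometric tail pseudo-inverse. Let $S'\sim\D^{m'}$ be an independent ghost sample, and for a fixed $h\in\H$ write $k\eqdef mR_S(h)$, $k'\eqdef m'R_{S'}(h)$, $K\eqdef k+k'$, $T\eqdef S\cup S'$, and $\delta'\eqdef\delta/(4\tau_\H(m+m'))$. The centerpiece is the distribution-free ``witness'' event
\[
E_h\eqdef\cb{K\geq\HypInv(k,m,\delta',m+m')}=\cb{\Hyp(k,m,K,m+m')\leq\delta'}\,,
\]
where the equivalence of the two forms uses the definition of the pseudo-inverse together with the monotonicity of $\Hyp$ in its third argument. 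Note that $E_h$ depends on $(S,S',h)$ only, not on $\D$. The bad event to bound is $A_h\eqdef\cb{R_\D(h)>\epsilon(k,m,\delta)}$.

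The first step is a ghost-sample-type inequality $\prob{S}{A_h}\leq 4\,\prob{S,S'}{A_h\cap E_h}$. By the tower rule, this reduces to verifying $\prob{S'}{E_h\mid S,h}\geq 1/4$ on any $S$ on which $A_h$ holds. The case $k=m$ is trivial, since $\epsilon=1$ makes $A_h$ empty. In the main regime $\HypInv-1-k\geq 1$, the definition of $\epsilon$ gives $m'R_\D(h)>\HypInv-1-k$, \ie the mean of $k'\sim\Bin(m',R_\D(h))$ is strictly greater than the integer $\HypInv-1-k$. The binomial median-mean inequality (the median of $\Bin(n,p)$ always belongs to $\cb{\lfloor np\rfloor,\lceil np\rceil}$) then yields $\prob{}{k'\geq\HypInv-k}\geq 1/4$, with the constant being tight as witnessed by $\Bin(2,\tfrac12+\eta)$ as $\eta\downarrow 0$. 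In the boundary regime $\HypInv-1-k<1$ (where the $\max\cb{1,\cdot}$ clause is active), $\epsilon=1/m'$ and an elementary calculation gives a strictly larger probability, at least $1-1/e>1/4$.

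The second step bounds $\prob{S,S'}{\exists h:E_h}$ by conditioning on the unordered multiset $T$ and exploiting exchangeability. Given $T$, the sample $S$ is a uniformly random size-$m$ subset of $T$, so for every fixed $h$ the count $k=mR_S(h)$ is $\Hyp(m,K,m+m')$-distributed, where $K$---the total number of errors of $h$ on $T$---is a deterministic function of $(T,h)$. The discrete probability integral transform (that a random variable's own CDF evaluated at itself falls below $\delta'$ with probability at most $\delta'$) then gives $\prob{S,S'}{E_h\mid T,h}\leq\delta'$. Since $E_h$ depends on $h$ only through the error pattern $(\Id{h(\x_i)\ne y_i})_{i=1}^{m+m'}$ on $T$, a union bound over the at most $\tau_\H(m+m')$ distinct such patterns yields $\prob{S,S'}{\exists h:E_h\mid T}\leq\tau_\H(m+m')\cdot\delta'=\delta/4$. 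Integrating over $T$ and combining with the ghost-sample inequality delivers $\prob{S}{\exists h:A_h}\leq 4\cdot\delta/4=\delta$, which is equivalent to the stated bound.

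The main obstacle is the joint calibration of the two slack parameters appearing in the definition of $\epsilon$: the $-1$ in $\HypInv-1-k$ and the constant $4$ in $\delta/(4\tau_\H(m+m'))$. Both are driven by the same unit-of-rounding loss in the binomial median-mean inequality, and they have to be tuned together so that the ghost-sample lower bound $\prob{S'}{E_h\mid S,h}\geq 1/4$ matches the inverse of the constant in front of the probability, and so that the event $E_h$ is consistent with the $\epsilon$ appearing in $A_h$. The boundary verifications ($k=m$ and $\HypInv-1-k<1$) are routine once this calibration is in place.
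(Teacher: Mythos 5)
Your proof follows essentially the same two-stage scheme as the paper's (ghost sample, then symmetrization over the pooled sample $T$, union bound over $\H|_T$, and the pseudo-inverse to bound the hypergeometric tail), and your packaging of the symmetrization step as a discrete probability integral transform is a clean and equivalent restatement of the paper's explicit truncation-and-sum argument. Your boundary case analysis ($k=m$ trivially; $\epsilon=1/m'$ giving $\prob{}{k'\ge 1} > 1-1/e$) also matches.

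There is, however, one concrete gap in the ghost-sample step. You claim that the lower bound $\prob{}{k'\geq\HypInv-k}\geq 1/4$ follows from the \emph{median-mean inequality} for the binomial (that the median of $\Bin(n,p)$ lies in $\cb{\lfloor np\rfloor,\lceil np\rceil}$). That inequality only gives $\prob{}{k'\geq\lfloor np\rfloor}\geq 1/2$, which says nothing useful about $\prob{}{k'\geq\lceil np\rceil}$ when the median sits at $\lfloor np\rfloor$---and this is precisely the case in your own tightness example $\Bin(2,\tfrac12+\eta)$, where the median is $1=\lfloor np\rfloor$ but the target probability is $\prob{}{k'\geq 2}\to 1/4$. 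The constant $1/4$ for $\prob{}{\Bin(n,p)\geq np}$ under $np\geq 1$ is a genuinely harder fact (asserted without proof by Anthony and Shawe-Taylor and proved by Greenberg and Mohri, which is what the paper cites); the median-mean inequality is not strong enough to produce it. Once you replace the appeal to median-mean by an appeal to the Greenberg--Mohri result, the calibration between the $\max\cb{1,\cdot}$ clause, the $-1$ shift in $\HypInv-1-k$, and the factor $4$ in $\delta/(4\tau_\H(m+m'))$ works out exactly as you describe.
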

For convenience, we let $\epsilon$ depend on the number $m R_S(h)$ of errors made by the classifier $h$ instead of its empirical risk.
It is necessary to set $\epsilon(m,m,\delta)=1$, since $\HypInv$ is ill-defined for $k=m$.
We discuss the optimal choice of the parameter $m'$ in Section~\ref{ssec:ghost_sample_size_trade-off}.

The theorem is stated such that it holds for multi-class learning problems, as long as one possesses an upper bound for the growth function.
Moreover, the new bound is non-trivial only if the growth function is polynomial in the number $m$ of examples (which is equivalent to ask that the VC dimension be finite in the case of binary classification).
\smallskip

\begin{proof}
To prove the theorem, it is sufficient to show that $\prob{S}{\exists h \in \H, R_\D(h) > \epsilon(k_S(h))} < \delta$, where we mute some dependencies to alleviate the notation and we use $k_S(h)\eqdef m R_S(h)$ hereafter.
The left-hand side can be written as
\begin{equation}\label{eq:main_1}
\prob{S}{\exists h \in \H, R_\D(h) > \epsilon(k_S(h))} = \exv{S}{\sup_{h\in \H} \Id{R_\D(h) > \epsilon(k_S(h))} }.
\end{equation}

First, we proceed with the ghost sample trick as usual to transfer the dependence on the true risk $R_\D(h)$ onto the empirical risk $R_{S'}(h)$ of a ghost sample $S'$ consisting of $m'$ i.i.d.\ examples drawn from $\D$.
Fix some $h\in\H$ and notice that $k_{S'}(h) \eqdef m' R_{S'}(h)$ is a random variable following a binomial distribution with parameters $m'$ and $R_\D(h)$.
Hence, we can use the inequality
\begin{equation}
\prob{S'}{ m'R_{S'}(h) \geq m'R_\D(h)} > \frac{1}{4}, \label{eq:ghost_sample}
\end{equation}
which holds provided that $R_\D(h) > 1 / m'$, as proven by \cite{greenberg14} (but claimed to be true by \cite{as-93}).
Since we have that $\epsilon(k) \geq 1/m'$ according to its definition in the theorem, it is sufficient to assume that $R_\D(h) > \epsilon(k_S(h))$ to have
\begin{equation*}
\prob{S'}{ m' R_{S'}(h) > m' \epsilon(k_S(h)) } \geq \prob{S'}{ m' R_{S'}(h) \geq m' R_\D(h) } > \frac{1}{4}.
\end{equation*}
This implies that for all $h\in\H$ and all $S$, we have
\begin{equation*}
\Id{R_\D(h) > \epsilon(k_S(h))} < 4 \prob{S'}{ m' R_{S'}(h) > m' \epsilon(k_S(h))}.
\end{equation*}

Inserting this result inside Equation~\eqref{eq:main_1} and then using the fact that the supremum of an expectation is less than or equal to the expectation of the supremum, one has
\begin{align}
\prob{S}{\exists h \in \H, R_\D(h) > \epsilon(k_S(h))}
    &< 4 \exv{S}{\sup_{h\in \H} \exv{S'}{\Id{k_{S'}(h) > m'\epsilon(k_S(h))}} }\nonumber\\
    &\le 4\! \exv{S, S'}{\sup_{h\in \H} \Id{k_{S'}(h) > m'\epsilon(k_S(h))}}.\label{eq:main_2}
\end{align}

Now, let $T \eqdef (S, S') = ((\x_1,y_1), \dots, (\x_{m+m'}, y_{m+m'}))$ be an i.i.d.\ sample constructed by re-indexing $S$ and $S'$, and let $\H|_T = \cb{(h(\x_1),\dots,h(\x_{m+m'})) : h \in \H}$
be the restriction of $\H$ to $T$.
Furthermore, let $\sigma \in \Sigma \eqdef \{ \sigma \subseteq [m + m'] \: : \: \abs{\sigma} = m \}$ be some set of $m$ indices taken from $[m+m']$, and denote its elements by $\sigma_1$, $\sigma_2$, ..., $\sigma_m$ such that they are in increasing order.
We define $T(\sigma)\eqdef((\x_{\sigma_1}, y_{\sigma_1}), \dots, (\x_{\sigma_m}, y_{\sigma_m}))$ to be the collection of examples in $T$ whose indices are in $\sigma$.
With this notation, we have that $S = T([m])$, and the right-hand side of Inequality~\eqref{eq:main_2} becomes
\begin{align*}
4\! \exv{S, S'}{\sup_{h\in \H} \Id{k_{S'}(h) > m'\epsilon(k_S(h))}}
    \!= 4 \exv{T}{\max_{h\in \H|_T} \Id{ k_{T}(h) > m'\epsilon(k_{T([m])}(h)) + k_{T([m])}(h)}}.
\end{align*}

We are then ready to apply the symmetrization trick.
Because the examples of $T$ are i.i.d.\ and because we are taking the expectation over $T$, the right-hand side of the last equation would be the same if one were to replace $T([m])$ with $T(\sigma)$ for any set $\sigma \in \Sigma$.
Thus, we can take the expectation of this expression over all possible sets $\sigma\in\Sigma$ with the uniform probability distribution on $\Sigma$ to give
\begin{align*}
4\! \exv{S, S'}{\sup_{h\in \H} \Id{k_{S'}(h) > m'\epsilon(k_S(h))}}
    \!= 4 \exv{\sigma}{\exv{T}{\max_{h\in \H|_T} \Id{ k_{T}(h) > m'\epsilon(k_{T(\sigma)}(h)) + k_{T(\sigma)}(h)}}}\!.
\end{align*}

Next, we decompose the (disjoint) events over the values that $k_{T(\sigma)}(h)$ can take, taking into account that the term $k=m$ of the sum vanishes by definition of $\epsilon(m)$, to obtain
\begin{align}
&4\! \exv{S, S'}{\sup_{h\in \H} \Id{k_{S'}(h) > m'\epsilon(k_S(h))}}\nonumber\\
    &\hspace{50pt}= 4 \exv{\sigma}{\exv{T}{\max_{h\in \H|_T} \sum_{k=0}^{m-1}\Id{ k_{T}(h) > m'\epsilon(k) + k} \Id{k_{T(\sigma)}(h) = k} }}\nonumber\\
    &\hspace{50pt}\le 4 \exv{\sigma}{\exv{T}{\sum_{h\in \H|_T} \sum_{k=0}^{m-1}\Id{ k_{T}(h) > m'\epsilon(k) + k} \Id{k_{T(\sigma)}(h) = k} }}\nonumber\\
    &\hspace{50pt}= 4 \exv{T}{\sum_{h\in \H|_T} \sum_{k=0}^{m-1}\Id{ k_{T}(h) > m'\epsilon(k) + k} \exv{\sigma}{\Id{k_{T(\sigma)}(h) = k} }}.\label{eq:main_3}
\end{align}

Notice that $\exv{\sigma}{\Id{k_{T(\sigma)}(h) = k} }$ corresponds to the probability of choosing $m$ examples (those of $T(\sigma)$), $k$ of which are incorrectly classified by $h$, among the $m + m'$ examples of $T$ (which contains $k_T(h)$ examples incorrectly classified by $h$).
This is a hypergeometric experiment and thus $\exv{\sigma}{\Id{k_{T(\sigma)}(h) = k}} = \hyp(k, m, k_T(h), m+m')$.
Moreover, note that $m'\epsilon(k) + k$ is increasing with $k$ because $\HypInv(k,m,\delta,M)$ is also increasing with $k$, as shown in Lemma~\ref{lem:monotonicity_hyp_tail_inv} of Appendix~\ref{app:hyp_tail_inv}.
Hence, the indicator function $\Id{ k_{T}(h) > m'\epsilon(k) + k}$ simply truncates the end of the summation.
Letting $\kappa(h) \eqdef \max\cb{k \in [m-1] : k_T(h) > m'\epsilon(k) + k}$ and substituting all results back into Inequality~\eqref{eq:main_2}, we have
\begin{align}
\nonumber\prob{S}{\exists h \in \H, R_\D(h) > \epsilon(k_S(h))}
    &< 4 \exv{T}{\sum_{h\in \H|_T} \sum_{k=0}^{\kappa(h)}\hyp(k, m, k_T(h), m+m') }\\
    &=4 \exv{T}{\sum_{h\in \H|_T} \Hyp(\kappa(h), m, k_T(h), m+m') }.\label{eq:original}
\end{align}

On the other hand, because $k_T(h)$ and $\HypInv(k,m,\delta,M)$ must both be integers, we have that $k_T(h) > m'\epsilon(k)+k$ implies $k_T(h) \ge \HypInv(k, m, \frac{\delta}{4\tau_\H(m+m')}, m+m')$.
Furthermore, since $\Hyp(k, m, K, M)$ is decreasing with respect to parameter $K$ as shown in Lemma~\ref{lem:monotonicity_hyp_tail} of Appendix~\ref{app:hyp_tail}, it follows that
\begin{align}
    \Hyp(\kappa, m, k_T(h), m+m') &\le \Hyp(\kappa, m, \HypInv(\kappa, m, \textstyle\frac{\delta}{4\tau_\H(m+m')}, m+m'), m+m')\label{eq:main_4}\\
    &\le \frac{\delta}{4\tau_\H(m+m')},\nonumber
\end{align}
by the definition of $\HypInv$.
Then Inequality~\eqref{eq:original} simplifies to
\begin{equation}\label{eq:main_5}
\prob{S}{\exists h \in \H, R_\D(h) > \epsilon(k_S(h))}
    <4 \exv{T}{\sum_{h \in\H|_T} \frac{\delta}{4\tau_\H(m+m')} }
    \le \delta \frac{\max_{T} \bigl| \H|_T \bigr| }{\tau_\H(m+m')}  =\delta,
\end{equation}
as desired.
\end{proof}

\subsection{The Tightness of Theorem~\ref{thm:main}}
\label{ssec:tightness_thm_main}

The above proof improves upon the derivation of \cite{vapnik98} and \cite{as-93} in three ways.
First, in the ghost sample trick, both \cite{vapnik98} and \cite{as-93} set $m'$ equal to $m$.
\citeauthor{vapnik98} does so arbitrarily, while \citeauthor{as-93} require it to be able to use Hoeffding's inequality.
However, as discussed in the next section, one can improve the bound significantly by making a better choice for $m'$.
Note that \citet{catoni2004improved}'s approach does not use a ghost sample in the same way as what is done here.
However, it requires a ``shadow sample'' which plays a similar role, and they also find that optimizing the shadow sample size is beneficial.

Second, we use the indicator function to truncate the summation on $k$ in Equation~\eqref{eq:main_3}.
This step is neglected by \cite{vapnik98}, and allows us to avoid summing unnecessary terms.
Furthermore, it enables us to proceed to a third improvement: the use of the hypergeometric tail pseudo-inverse, which 
provides the tightest possible upper bound on the right-hand side of Equation~\eqref{eq:original}.

The new bound is nearly optimal in the context of VC classes since few inequalities remain that could be improved.
Let us inspect the aforementioned inequalities.

The first inequality is introduced for the ghost sample trick.
It relies on the result~\eqref{eq:ghost_sample} of \cite{greenberg14} which bounds the probability of a binomial variable exceeding its expectation.
Tighter bounds with the same purpose exist, but they are distribution dependent \citep{pelekis2016lower,doerr2018elementary}.
As \cite{cortes2019relative} observed, the bound of \cite{greenberg14} presents a trade-off between the minimum value $\epsilon$ can take and the factor $4$ in the bound (for example, one could require $\epsilon \ge 2/m'$ instead of $1/m'$ to replace $4\tau_\H$ in the bound by $3.375\tau_\H$).
However optimizing this trade-off appears negligible, since the best we could achieve is to replace the factor $4$ by a factor $2$, and this factor contributes only logarithmically to the bound.

The second inequality is obtained by changing the order of the supremum over the classifiers and the expectation over the ghost sample in Equation~\eqref{eq:main_2}.
Then, in order to use the union bound, a third inequality occurs where we replace the maximum over the classifiers by a sum.
The last inequality occurs at Equation~\eqref{eq:main_5}, where we replace the expectation by a maximum.
These costly steps, which introduce the growth function, are required to be able to use the hypergeometric tail inverse.
To do better, we would need to consider distribution-dependent (possibly localized) complexity measures, such as in \citet{oneto2016local,zhivotovskiy18localization}.
Note that the use of the growth function here is not asymptotically optimal, as it introduces an ``unnecessary'' $\log m$ factor in the bound.
However, typical chaining techniques used to avoid the growth function usually lead to very large constant factors, making the final bound worse for most practical applications, as discussed in Appendix~\ref{app:comparison_lugosi}.

Note that the inequalities used in Equation~\eqref{eq:main_4} are not approximations, but rather consequences of the discrete nature of the hypergeometric distribution.
They are therefore unavoidable and minimal, and tend to disappear as the sample grows in size.

Finally, notice that we can recover known bounds by making approximations.
Indeed, one obtains \citeauthor{vapnik98}'s pessimistic bound when applying Hoeffding's inequality on the hypergeometric tail, which yields $\Hyp(k,m,K,M) \le e^{-2t^2m}$ for $t \eqdef \frac{k}{m} - \frac{K}{M}$ \citep{chvatal1979tail}.
On the other hand, by definition of the pseudo-inverse, we have, for any $\delta$, that $\Hyp\big(k,m,K,M\big) > \delta$ whenever $K \le \HypInv(k,m,\delta,M)-1$.
Combining both results implies $\HypInv(k,m,\delta,M) - 1 < M\sqrt{\frac{-\ln\delta}{2m}} + \frac{kM}{m}$.
Substituting this inequality in the expression of $\epsilon(k)$ of Theorem~\ref{thm:main} (neglecting the first part of the maximum), one obtains
\begin{equation*}\label{eq:bound_hoeffding_reduction}
    \epsilon(k) < \frac{m+m'}{m'}\sqrt{\frac{-\ln\delta+\ln(4\tau_\H(m+m'))}{2m}} + \frac{k}{m}\, .
\end{equation*}
Setting $m'=m$, we end up with the pessimistic bound, but with an extra factor of $\sqrt{2}$ in front of the square root, and without the $1/m$ additive term.
Note that in \citeauthor{vapnik98}'s derivation, this $1/m$ term was originally introduced to satisfy the requirements of the ghost sample trick, whereas our approach handles it by using a maximum in the expression of $\epsilon$.

Furthermore, by inspecting the case where $k=0$ (\ie, the realizable case), we can also simplify the bound to recover the $\widetilde{\Theta}(d/m)$ asymptotic behavior.
It is easy to verify that $\Hyp(0, m, K, M) \le \pr{\frac{M-K}{M}}^m$, which then implies that $\HypInv(0,m,\delta,M) \le M(1-\delta^{1/m})+1$.
Substituting this approximation into Theorem~\ref{thm:main} and then using $1-e^x \le -x$ leads to
\begin{equation*}
    \epsilon(0) \le \pr{1 + \frac{m}{m'}} \frac{1}{m} \ln\pr{ \frac{4\tau_\H(m+m')}{\delta}},
\end{equation*}
which shows that the new bound, similarly to Vapnik's relative deviation bound, correctly interpolates between the realizable and agnostic cases.


\subsection{The Ghost Sample Size Trade-off}
\label{ssec:ghost_sample_size_trade-off}

The upper bound $\epsilon$ of Theorem~\ref{thm:main} presents an interesting dependence in $m'$.
Indeed, increasing $m'$ makes the factor of $\frac{1}{m'}$ shrink, while $\HypInv(k,m,\delta,M)$ decreases when $M$ grows (keeping the other three parameters fixed), and increases when $\delta$ becomes smaller, as shown in Appendix~\ref{app:hyp_tail_inv}.
Since the confidence parameter $\delta$ is divided by the growth function $\tau_\H(m+m')$, which we assume is polynomial in $m+m'$ (for $m+m'$ large enough), we have that at some point, increasing $m'$ becomes less beneficial than decreasing it, thus presenting a non trivial trade-off.

Therefore, we can further tighten the new bound by carefully choosing the optimal value of $m'$.
Due to the discrete nature of the hypergeometric tail pseudo-inverse, we must resort to using a numerical solver to do so.
By finding the best ghost sample size $m'_\text{best}$ for a variety of parameters, we can notice that $m'_\text{best}$ is increasing with $k$ and $m$, and is decreasing with $\delta$ and the VC dimension $d$ of the hypothesis class $\H$.
We observe that for low empirical risk ($k/m \lesssim 10\%$), $m'_\text{best}$ is about four times greater than $m$ (which can provide a good heuristic for finding $m'_\text{best}$). But for some particular sets of parameters $k$, $m$, $\delta$ and $d$, $m'_\text{best}$ can be almost as large as $100\, m$.
This shows that the choice of \citeauthor{vapnik98} and of \citeauthor{as-93} of setting $m'$ equal to $m$ can be very far from the optimal solution.
Computing the new bound for 880 different combinations of parameters, we gather that optimizing the bound leads to a relative gain of 8\% to 10\% over the non-optimized version where $m'=m$, a substantial improvement. 

A more detailed discussion of our findings is presented in Appendix~\ref{app:ghost_sample_size_tradeoff}.
As a final note, we emphasize that Theorem~\ref{thm:main} requires $m'$ to be chosen ahead of time; hence, we cannot optimize its value using the number of errors $k$ made by our classifier.
We can, however, optimize it for an \emph{anticipated} number of errors as some sort of prior knowledge.

\section{More Results Using the Hypergeometric Tail Inversion}
\label{sec:other_results}

The approach we have exploited for deriving the new bound can be adapted to various other settings in a similar fashion, which shows the versatility of the technique.
We consider a relative deviation version of the new bound, a margin bound inspired by the work of \citet{anthonybartlett99}, as well as a lower bound.

\subsection{The Relative Deviation Bound Revisited}
\label{ssec:relative_deviation_bound_revisited}

It is known that relative deviation bounds, which bound $(R_\D(h) - R_S(h))/\sqrt{R_\D(h)}$, often provide a better asymptotic rate than the pessimistic bound when the empirical risk is small.
Bounding the relative deviation with the new technique, we obtain Theorem~\ref{thm:relative_deviation}, presented and proved in Appendix~\ref{app:hyptailinv_reldev}.
From this theorem directly follows the corollary below.

\begin{corollary}
\label{coro:relative_deviation}
Let $\H \subseteq \mathcal{Y}^\mathcal{X}$ be a hypothesis class.
Then for any distribution $\D$ on $\mathcal{X} \times \mathcal{Y}$, any confidence parameter $\delta \in (0, 1)$ and any integers $m, m'>0$, the following inequality holds:
\begin{equation*}
\prob{S\sim\D^m}{\forall h \in \H, R_\D(h) \leq R_S(h) + \frac{\eta(m R_S(h))^2}{2} \pr{ 1 + \sqrt{1 + \frac{4 R_S(h)}{\eta(m R_S(h))^2 }} }} > 1 - \delta,
\end{equation*}
where $\eta(k) = 0$ for $k=m$, and otherwise
\begin{equation*}
\eta(k) = 
    \max\cb{ \scalebox{.9}{
        $\displaystyle \frac{1}{\sqrt{m'}}, \frac{m+m'}{m'} \frac{u(k) - \frac{k}{m}}{\sqrt{u(k)}}$
    }}
    \text{ with }
    u(k) = \frac{\HypInv\big(k, m, \textstyle\frac{\delta}{4 \tau_\H(m + m')}, m + m'\big) - 1}{m+m'}.
\end{equation*}
\end{corollary}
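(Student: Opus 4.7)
The plan is to derive Corollary~\ref{coro:relative_deviation} from Theorem~\ref{thm:relative_deviation} by algebraically solving a quadratic inequality in $\sqrt{R_\D(h)}$. I expect Theorem~\ref{thm:relative_deviation}, proved in Appendix~\ref{app:hyptailinv_reldev}, to guarantee with probability at least $1-\delta$ the standard relative-deviation inequality
\[
R_\D(h) - R_S(h) \;\leq\; \eta(m R_S(h))\,\sqrt{R_\D(h)} \qquad \text{for every } h\in\H,
\]
which is what one naturally obtains by applying the hypergeometric tail inversion of Theorem~\ref{thm:main} to the scaled quantity $(R_\D(h) - R_S(h))/\sqrt{R_\D(h)}$ (and calibrating the ghost-sample argument so that the relative-deviation threshold becomes exactly $\eta$ as defined in the statement). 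The rest of the argument then simply rearranges this inequality to isolate $R_\D(h)$.

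First, I would dispose of the boundary case $k \eqdef m R_S(h) = m$: by definition $\eta(k)=0$ and $R_S(h)=1$, and the expression $\frac{\eta^2}{2}\pr{1 + \sqrt{1 + 4 R_S(h)/\eta^2}}$ has a well-defined limit as $\eta\downarrow 0$ equal to $\eta\sqrt{R_S(h)} \to 0$, so the stated bound reduces to the trivially true $R_\D(h) \leq 1$. This edge case is the only real pitfall of the proof, but it is resolved by reading the right-hand side in the natural limiting sense.

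For the main case $k < m$, set $x \eqdef \sqrt{R_\D(h)} \geq 0$, $\eta \eqdef \eta(k)$ and $a \eqdef R_S(h) \geq 0$. The relative-deviation inequality rewrites as the upward-opening quadratic constraint
\[
x^2 - \eta\, x - a \;\leq\; 0,
\]
which holds precisely when $x$ lies between the two real roots $\frac{\eta \pm \sqrt{\eta^2 + 4a}}{2}$. Since $x \geq 0$ and the smaller root is nonpositive, only the upper bound matters:
\[
\sqrt{R_\D(h)} \;\leq\; \frac{\eta + \sqrt{\eta^2 + 4 R_S(h)}}{2}.
\]
Squaring both nonnegative sides and collecting terms then yields
\[
R_\D(h) \;\leq\; R_S(h) + \frac{\eta^2}{2} + \frac{\eta}{2}\sqrt{\eta^2 + 4 R_S(h)} \;=\; R_S(h) + \frac{\eta(k)^2}{2}\pr{1 + \sqrt{1 + \frac{4 R_S(h)}{\eta(k)^2}}},
\]
which is exactly the statement of the corollary. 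There is no substantive mathematical obstacle here, as all the heavy lifting (ghost sample, symmetrization, hypergeometric tail inversion) is already absorbed into Theorem~\ref{thm:relative_deviation}; the only care needed is in the algebraic bookkeeping and the $\eta\downarrow 0$ limit.
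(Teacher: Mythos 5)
Your proposal is correct and follows exactly the route the paper intends: Theorem~\ref{thm:relative_deviation} gives $R_\D(h) - R_S(h) \le \eta(mR_S(h))\sqrt{R_\D(h)}$ with probability $1-\delta$, and the corollary is obtained by treating this as an upward quadratic in $\sqrt{R_\D(h)}$ and keeping the positive root. The paper simply states that the corollary ``directly follows'' without spelling out the algebra, and your one-line rearrangement supplies precisely that; your handling of the $k=m$ edge case via the limit $\eta\downarrow 0$ (or equivalently noting that the bound becomes the trivial $R_\D(h)\le 1$) is also the right reading.
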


The steps of the proof of Corollary~\ref{coro:relative_deviation} follow closely those of Theorem~\ref{thm:main}---the main exception being the ghost sample trick.
Indeed, to bound the relative deviation, we need to use a looser version of the result of \cite{greenberg14}.
As such, this latest bound can only be looser, a phenomenon observed in Appendix~\ref{app:numerical_comparison} (although the difference between both bounds is small).
As a final note, Appendix~\ref{app:ghost_sample_tradeoff_hti-rd} offers a brief analysis of the ghost sample trade-off for this latter bound, which shows that the benefit of optimizing $m'$ is even more important in this case.

\subsection{A Bound for Multi-class Margin Classifiers}
\label{ssec:margin_classifiers}

Taking inspiration in the work of \citet{anthonybartlett99}, we can also apply our framework to margin classifiers.
In margin theory, we consider hypotheses of the form $h_\f(\x) = \argmax_{i\in[n]} f_i(\x)$ (with arbitrary rule for breaking ties), where $\f \in \F \subseteq ([0,1]^n)^\X$ is a vector-valued function that outputs a score between $0$ and $1$ for each class.
The \emph{margin}
of a classifier $h_\f$ is $\mu(\f(\x), y) \eqdef f_y(\x) - \max_{j\neq y} f_j(\x)$ and can be interpreted as the confidence of $h_\f$ in its prediction.


Contrary to the previous settings, where the hypothesis class $\H|_S$ restricted to a sample $S$ is finite, here the class $\F|_S$ is generally infinite.
This issue can be fixed by considering a discretization of $\F$ instead.
Such a process is called a \emph{cover} of $\F$, from which we can define uniform covering numbers (with respect to a mixed $L_p$-$L_\infty$ metric $\ell$ on $\reals^{n \times m}$) as follows.
\begin{definition}[Uniform covering numbers]
Let $W \subseteq \reals^{n \times m}$ and $c > 0$.
Then, we say that $\widehat{W} \subseteq W$ is a $c$-cover of $W$ if and only if, for every $w \in W$, there exists $\widehat{w} \in \widehat{W}$ such that $\ell(w,\widehat{w}) < c$.
$W$ is totally bounded if, for each $c > 0$, there exists a finite $c$-cover of $W$.
In this case, we define the $c$-covering number of $W$, $\N^p(c, W)$, to be the minimum cardinality of a $c$-cover of $W$ w.r.t. a $L_p$-$L_\infty$ metric. 
Then, for any integer $m>0$ and any $p > 0$, the uniform $c$-covering number of $\F$ is
\begin{equation*}
    \N^p_\F(c, m) =\! \max_{S:\abs{S} =m} \N^p(c, \F|_S),
\end{equation*}
where $\F|_S\eqdef \{ (\f(\x_1),\dots,\f(\x_m)) : \f \in \F \}$ denotes the restriction of $\F$ to the sample $S$.
\end{definition}
These uniform covering numbers are analogous to the growth function and play a similar role.
They allow us to prove the following theorem, as is done in Appendix~\ref{app:margin_bound}.

\begin{theorem}\label{thm:margin_bound}
Let $\F \subseteq ([0,1]^n)^\X$ be a vector-valued function class, $h_\f(\x) = \argmax_{i\in[n]} f_i(\x)$ be a classifier and $R^\gamma_S(\f) \eqdef \frac{1}{m} \sum_{(\x,y)\in S} \Id{\mu(\f(\x)),y) < \gamma}$ be the empirical margin risk.
Then for any distribution $\D$ on $\X \times \Y$, any confidence parameter $\delta \in (0, 1)$, any margin $\gamma \in (0,1)$, any $p\ge1$ and any integers $m$ and $m' > 0$,
\begin{equation*}
\prob{S\sim \D^m}{ \forall \f \in \F, R_\D(h_\f) \leq \epsilon(m R^\gamma_S(\f), m, \delta) } > 1 - \delta,
\end{equation*}
where $\epsilon(k,m,\delta) = 1$ if $k=m$ and otherwise
\begin{equation*}
\epsilon(k, m, \delta) = \frac{1}{m'} \max\cb{ 1, \HypInv\Bigl(k, \frac{\delta}{4 \N^p_\F\pr{\frac{\gamma}{2^{2 - 1/p}}, m + m'}}, m, m + m'\Bigr) - 1 - k }.
\end{equation*}
\end{theorem}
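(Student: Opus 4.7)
The plan is to mirror the proof of Theorem~\ref{thm:main}, substituting the growth function $\tau_\H$ with the uniform covering number $\N^p_\F(\gammacover, m+m')$ via a discretization argument that handles the infinite richness of the vector-valued class $\F$. The margin parameter $\gamma$ enters as the slack that makes the discretization step lossless for the relevant indicator functions.

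First, I would rewrite the probability as $\exv{S}{\sup_{\f \in \F} \Id{R_\D(h_\f) > \epsilon(m R^\gamma_S(\f))}}$ and apply the ghost sample trick exactly as in Theorem~\ref{thm:main}. Since $R_\D(h_\f) = \prob{(\x,y)\sim\D}{\mu(\f(\x),y) \leq 0}$, the inequality of \citet{greenberg14} recalled in Equation~\eqref{eq:ghost_sample} applies verbatim, bounding $R_\D(h_\f)$ by the empirical $0$-margin risk $R_{S'}^0(h_\f)$ on a ghost sample $S'$ of size $m'$, at the price of a factor $4$. I would then symmetrize using a uniform random subset $\sigma \in \Sigma$ of size $m$ drawn from $[m+m']$, exactly as in Theorem~\ref{thm:main}, so that all quantities live on the joint sample $T = (S, S')$.

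The technical departure comes at the step where the supremum over $\f \in \F|_T$ is exchanged for a sum: since $\F|_T$ is typically uncountable, I would instead introduce a minimal $\gammacover$-cover $\fcover$ of $\F|_T$ with respect to the $L_p$-$L_\infty$ metric $\ell$, and pick for each $\f$ a representative $\fhat \in \fcover$ with $\ell(\f|_T, \fhat) < \gammacover$. The scale $\gammacover$ is chosen so that, after converting $L_p$ average closeness into a per-sample bound via a Markov-style argument on $\ell(\cdot)^p$, the discretization error in each individual margin can be absorbed by a margin shift of $\gamma/2$. This yields the two sandwich inequalities $\Id{\mu(\f(\x_i),y_i) \leq 0} \leq \Id{\mu(\fhat(\x_i),y_i) \leq \gamma/2}$ (used on the ghost half) and $\Id{\mu(\fhat(\x_i),y_i) \leq \gamma/2} \leq \Id{\mu(\f(\x_i),y_i) < \gamma}$ (used on the $S$ half), effectively replacing all occurrences of $\f$ by $\fhat$ at a common margin threshold of $\gamma/2$.

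Once the bound is restated in terms of $\fhat \in \fcover$ at margin $\gamma/2$, the remainder is a line-by-line reproduction of the proof of Theorem~\ref{thm:main}: decompose the indicator over the possible values of $k^{\gamma/2}_{T(\sigma)}(\fhat)$, recognize that $\exv{\sigma}{\Id{k^{\gamma/2}_{T(\sigma)}(\fhat)=k}}$ is a hypergeometric probability, truncate the sum using the monotonicity of $m'\epsilon(k) + k$ (Lemma~\ref{lem:monotonicity_hyp_tail_inv}), and then invoke the definition of $\HypInv$ together with the monotonicity of $\Hyp$ in $K$ (Lemma~\ref{lem:monotonicity_hyp_tail}) to obtain a factor of $\delta / (4\N^p_\F(\gammacover, m+m'))$ per cover element. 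Summing over the at most $\N^p_\F(\gammacover, m+m')$ representatives yields the target bound $\delta$. The main obstacle is the discretization step: tracking the $L_p$-to-per-sample conversion, and checking that the cover scale $\gammacover = \gamma/2^{2-1/p}$ is exactly what balances the $L_p$-to-$L_\infty$ constant against the target margin shift of $\gamma/2$. Everything else is a straightforward transcription of the argument of Theorem~\ref{thm:main}.
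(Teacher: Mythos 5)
Your plan matches the paper's proof almost step for step: ghost sample via \citet{greenberg14}, symmetrization over random $\sigma$, replacing the (possibly uncountable) $\F|_T$ by a minimal $\gammacover$-cover $\fcover$, pushing the true/ghost misclassification to the $\gamma/2$-margin risk of the cover representative $\fhat$ and the $\gamma$-margin risk of $\f$ down to the $\gamma/2$-margin risk of $\fhat$, then running the hypergeometric decomposition, truncation (Lemma~\ref{lem:monotonicity_hyp_tail_inv}), and inversion (Lemma~\ref{lem:monotonicity_hyp_tail}) exactly as in Theorem~\ref{thm:main}, summing over the at most $\N^p_\F(\gammacover,m+m')$ representatives.

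The one place you mischaracterize the argument is the ``Markov-style argument on $\ell(\cdot)^p$'' that you claim converts ``$L_p$ average closeness into a per-sample bound.'' There is no such conversion step, and none is needed: the metric in the paper is already $\ell(A,B)=\max_{i\in[m]}\|\mathbf a_i-\mathbf b_i\|_p$, i.e.\ an $L_\infty$ (maximum) over examples, with the $L_p$-norm taken over the $n$ class scores for each example. Membership in the $\gammacover$-cover therefore gives the per-example inequality $\|\f(\x_i)-\fhat^i\|_p<\gammacover$ immediately, for every $i$, with no averaging and no Markov. What actually determines the margin shift is an elementary two-variable optimization over the class-score perturbation: for $\Delta^i=\f(\x_i)-\fhat^i$, one bounds $-\Delta^i_{y_i}+\Delta^i_l \le \sup_{\|\z\|_p<c}(-z_1+z_2)=2^{1-1/p}c$ (for $p\ge 1$), and the choice $c=\gammacover$ makes this equal to $\gamma/2$. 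So the ``obstacle'' you flag does not exist; the discretization step is just this short computation, after which the indicator sandwiches you wrote down hold (with the slight correction that the threshold for $\fhat$ is $<\gamma/2$, not $\le\gamma/2$, since the supremum is not attained under the strict constraint $\|\z\|_p<c$). With that fix, your transcription of the rest of the proof is sound.
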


\subsection{A Lower Bound}
\label{ssec:lower_bound}

To derive Theorem~\ref{thm:main}, we relaxed the requirement to bound the absolute value of the difference between the true and empirical risks of a classifier in order to gain more flexibility.
This lead us to consider only an upper bound on the true risk; however, one can use similar arguments to obtain a lower bound that holds simultaneously for all classifiers of a hypothesis class.
This section presents such a bound.

The main observation to be made is that the definition of the pseudo-inverse $\HypInv$ of the hypergeometric tail is no longer appropriate for a lower bound.
As we have already discussed below Definition~\ref{def:hyp_tail_inv}, the definition of a pseudo-inverse is not unique.
In fact, for the lower bound, the correct definition of the pseudo-inverse is the one used by \cite{le2016validation}, which is as follows.
\begin{definition}[Hypergeometric tail lower pseudo-inverse]\label{def:hyp_tail_lower_inv}
We define the lower pseudo-inverse of the hypergeometric tail function, for any $0 < k \le m$ and any $\delta \in (0,1)$ as
\begin{equation*}
    \HypInvLower(k, m, \delta, M) \eqdef \max \cb{ K : \Hyp(k, m, K, M) \ge \delta }.
\end{equation*}
\end{definition}
Note that $\HypInv$ and $\HypInvLower$ differ at most by one.
It is straightforward to adapt the proof of the monotonic properties of $\HypInv$ of Lemma~\ref{lem:monotonicity_hyp_tail_inv} to the lower pseudo-inverse.
Furthermore, while we had $\Hyp(k,m,\HypInv(k,m,\delta,M),M) \le \delta$, the new definition implies instead $\Hyp(k,m,\HypInvLower(k,m,\delta,M),M) \ge \delta$, which is more relevant for a lower bound.

Using the lower pseudo-inverse, our lower bound theorem is stated below.
\begin{theorem}[Hypergeometric tail inversion lower bound]
\label{thm:lower_bound}
Let $\H \subseteq \mathcal{Y}^\mathcal{X}$ be a hypothesis class.
Then for any distribution $\D$ on $\mathcal{X} \times \mathcal{Y}$, any confidence parameter $\delta \in (0, 1)$ and any integers $m, m'>0$, the following inequality holds:
\begin{equation*}
\prob{S\sim\D^m}{\forall h \in \H, R_\D(h) \geq \epsilon(m R_S(h), m, \delta)} > 1 - \delta,
\end{equation*}
where $\epsilon(k,m,\delta) = 0$ if $k=0$ and otherwise
\begin{equation*}
\epsilon(k, m, \delta) = \frac{1}{m'} \min\cb{ m'-1,\; \HypInvLower\big(k-1, m, 1- \textstyle\frac{\delta}{4 \tau_\H(m + m')}, m + m'\big) + 1 - k }.
\end{equation*}
\end{theorem}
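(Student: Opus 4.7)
The plan is to mirror the proof of Theorem~\ref{thm:main} step by step, with every ingredient replaced by its lower-tail counterpart. The goal is to show $\prob{S}{\exists h \in \H, R_\D(h) < \epsilon(k_S(h))} < \delta$, where $k_S(h) \eqdef m R_S(h)$. First I would rewrite this probability as $\exv{S}{\sup_{h\in\H} \Id{R_\D(h) < \epsilon(k_S(h))}}$, exactly as in Equation~\eqref{eq:main_1}. For the ghost sample step, instead of the usual inequality $\prob{S'}{k_{S'}(h) \ge m' R_\D(h)} > 1/4$, I would apply its mirror image $\prob{S'}{k_{S'}(h) \le m' R_\D(h)} > 1/4$, which is Greenberg--Mohri's bound applied to the complementary Bernoulli and holds whenever $R_\D(h) < 1 - 1/m'$. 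Since $\epsilon(k) \le (m'-1)/m' = 1 - 1/m'$ by construction of the min in the statement, the event $R_\D(h) < \epsilon(k_S(h))$ already forces the hypothesis of the ghost sample lemma, and we may pass to the ghost sample at the cost of a factor $4$, obtaining the analogue of Inequality~\eqref{eq:main_2} with $\Id{k_{S'}(h) < m'\epsilon(k_S(h))}$.

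Next I would set $T = (S, S')$, enlarge the supremum to the finite restriction $\H|_T$, and apply the symmetrization trick by averaging over all $\sigma \in \Sigma$, following the derivation that led to Equation~\eqref{eq:main_3}. After rewriting $k_{S'}(h) = k_T(h) - k_S(h)$, the indicator becomes $\Id{k_T(h) < m'\epsilon(k) + k}$. Decomposing over the possible values $k$ of $k_{T(\sigma)}(h)$, the $k=0$ term vanishes by the convention $\epsilon(0)=0$, so the sum runs for $k$ from $1$ to $m$. The adaptation of Lemma~\ref{lem:monotonicity_hyp_tail_inv} to $\HypInvLower$ (increasing in its first argument) ensures that $m'\epsilon(k)+k$ is non-decreasing in $k$, so the indicator is on for $k \ge \kappa(h)$ for some threshold $\kappa(h)$, and the truncated inner sum becomes the upper hypergeometric tail $\sum_{k=\kappa(h)}^{m} \hyp(k, m, k_T(h), m+m') = 1 - \Hyp(\kappa(h)-1, m, k_T(h), m+m')$.

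The main obstacle will be turning the indicator being on at $k=\kappa(h)$ into a bound on this upper tail via $\HypInvLower$; the direction is now reversed compared to Theorem~\ref{thm:main}. The key observation is that when the indicator is on, the $\HypInvLower$ branch of the min must be active (for otherwise $m'\epsilon(k)+k = m'-1+k$ would dominate $k_T(h)$ trivially by integrality), and we conclude $k_T(h) \le \HypInvLower(\kappa(h)-1, m, 1-\delta', m+m')$ with $\delta' \eqdef \delta/(4\tau_\H(m+m'))$. Applying the defining property $\Hyp(\kappa-1, m, \HypInvLower(\kappa-1, m, 1-\delta', m+m'), m+m') \ge 1-\delta'$ together with the decreasing monotonicity of $\Hyp$ in its third argument (Lemma~\ref{lem:monotonicity_hyp_tail}), we obtain $\Hyp(\kappa(h)-1, m, k_T(h), m+m') \ge 1 - \delta'$, hence the upper tail is at most $\delta'$.

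Finally, summing over $h \in \H|_T$ and using $|\H|_T| \le \tau_\H(m+m')$ collapses the growth-function factor exactly as in Equation~\eqref{eq:main_5}, producing $\delta/4$, which the factor $4$ from the ghost sample step turns into the desired $\delta$. I expect the proof to go through cleanly provided one is careful about the strict/non-strict inequalities that arise from the integrality of $k_T(h)$ and $\HypInvLower$ (mirroring the discussion around Equation~\eqref{eq:main_4}); this is where the need for the $+1$ shift and the $k-1$ argument inside $\HypInvLower$ in the statement of $\epsilon(k)$ becomes transparent.
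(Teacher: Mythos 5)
Your proposal is correct and takes essentially the same route as the paper's proof in Appendix~\ref{app:proof_lower_bound}: ghost sample via the complementary Greenberg--Mohri bound (valid when $R_\D(h) < (m'-1)/m'$, forced by the $\min$ in $\epsilon$), symmetrization, decomposing over $k=1,\dots,m$ with the $k=0$ term killed by $\epsilon(0)=0$, truncating to the upper tail $1-\Hyp(\kappa(h)-1,m,k_T(h),m+m')$, and closing with the defining property of $\HypInvLower$ and monotonicity of $\Hyp$ in $K$. One small imprecision worth fixing: your parenthetical claim that ``the $\HypInvLower$ branch of the min must be active'' is not actually needed and is not true in general. The cleaner observation (and the one implicit in the paper) is simply that $m'\epsilon(\kappa) + \kappa \leq \HypInvLower(\kappa-1,\dots)+1$ because the $\min$ is at most its second argument; hence the indicator being on gives $k_T(h) < \HypInvLower(\kappa-1,\dots)+1$, and integrality converts the strict inequality to $k_T(h) \leq \HypInvLower(\kappa-1,\dots)$ \emph{regardless} of which branch of the $\min$ is active.
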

The proof is presented in Appendix~\ref{app:proof_lower_bound}, and follows closely the steps of the proof of the upper bound.
Appendix~\ref{app:comp_upper_lower} compares briefly the upper and lower bounds, where we can see that both are perfectly symmetric.

\section{Numerical Comparison to Other Bounds}
\label{sec:numerical_comparison}

To show that the new bound is indeed tighter than previous existing VC bounds, we proceed with some numerical comparisons.
The bounds we consider are \citeauthor{vapnik98}'s pessimistic (VP) bound, given by
\begin{equation*}
    \epsilon_\textnormal{VP}(R_S(h)) = R_S(h) + \frac{1}{m} + \sqrt{\E(m)},
\end{equation*}
where $\E(m) = \frac{1}{m}( \ln 4\tau_\H(2m) - \ln \delta )$, \citeauthor{vapnik98}'s relative deviation (VRD) bound, given by
\begin{equation*}
    \epsilon_\textnormal{VRD}(R_S(h)) = R_S(h) + 2\E(m)\big(1 + \sqrt{1 + R_S(h)/\E(m)} \big),
\end{equation*}
the new hypergeometric tail inversion (HTI) bound (Theorem~\ref{thm:main}), \citet{lugosi2002pattern}'s chaining bound (Theorem 1.16) given by
\begin{equation*}
    \epsilon_\textnormal{Lugosi}(R_S(h)) = R_S(h) + 24\sqrt{\frac{2d}{m}} \pr{ \sqrt{a} + \frac{\sqrt{\pi} e^a}{2} (1- \text{erf}(\sqrt{a})) } + \sqrt{\frac{-\ln \delta}{2m}},
\end{equation*}
with
\begin{equation*}
    a \eqdef \frac{(d+1)(2+\ln2)}{2d},
\end{equation*}
and finally \citet{catoni2004improved}'s Theorem~4.6 (C4.6), given by
\begin{equation*}
    \epsilon_{C4.6}(R_S(h)) = \pr{1 + \frac{2d'}{m}}^{-1} \pr{ R_S(h) + \frac{d'}{m} + \frac{1}{m}\sqrt{2d'mR_S(h)(1-R_S(h)) + d'^2} },
\end{equation*}
with $d' = \big(\frac{m+m'}{m'}\big)^2\pr{\ln \tau_\H(m+m') - \ln \delta}$.
This bound, which is the tightest among those present in the work of \citet{catoni2004improved}, is only valid when both $R_S(h)$ and $\epsilon_{C4.6}(R_S(h))$ are less than or equal to $0.5$.
Here, $m'$ is the ``shadow'' sample size of Catoni.
In Appendix~\ref{ssec:sc_comparaison}, we also compare against sample compression bounds~\citep{floyd95sample}, which can, in some situations, be used instead of VC bounds.

We assume a binary classification problem so that we can parametrize the growth function in terms of the VC dimension via Sauer-Shelah's lemma.
We would like our comparison to be representative of real-life applications; however, the VC dimension depends greatly on the hypothesis class chosen for the problem at hand.
For example, a VC dimension of $50$ is probably large for polynomial regression, approximately correct for medium sized decision trees \citep{leboeuf2020decision}, and small for neural networks \citep{bartlett2019nearly}.
Figure~\ref{fig:bounds_comp} presents the different bounds on the true risk as functions of the empirical risk for a moderate VC dimension, and as a function of the VC dimension for a small empirical risk.
We set the confidence parameter $\delta$ to $5\%$.
We showcase more sets of parameters in Appendix~\ref{ssec:other_numerical_settings}, where we can see that this choice of parameters is fair for all four bounds.
The code to produce all figures is also available in the GitHub repository\footnote{\url{https://github.com/jsleb333/hypergeometric_tail_inversion/tree/main/scripts}}.

\begin{figure}[h!]
\centering
\begin{subfigure}[t]{0.485\textwidth}
    \centering
    \includegraphics[width=\textwidth]{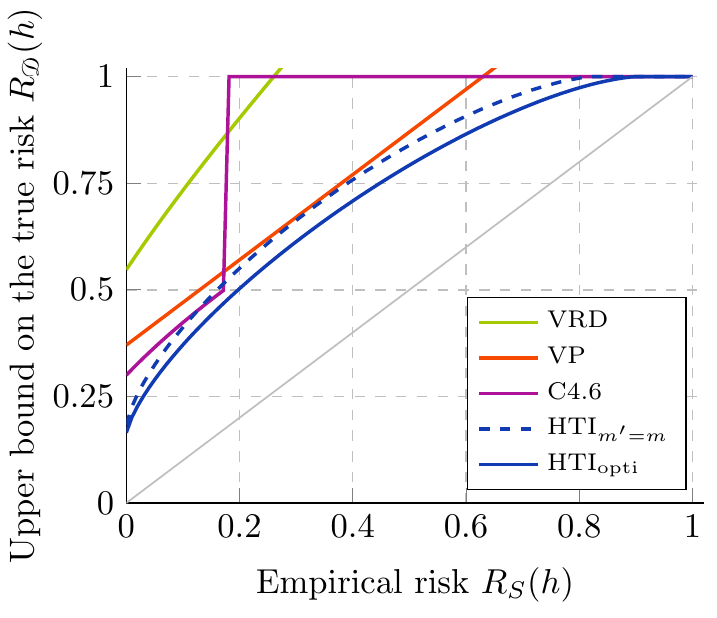}
    \caption{Bounds as functions of the empirical risk. The gray line corresponds to the empirical risk. The dashed line is the non-optimized HTI bound.}
    \label{fig:bounds_comp_risk}
\end{subfigure}\hfill
\begin{subfigure}[t]{0.485\textwidth}
    \centering
    \includegraphics[width=\textwidth]{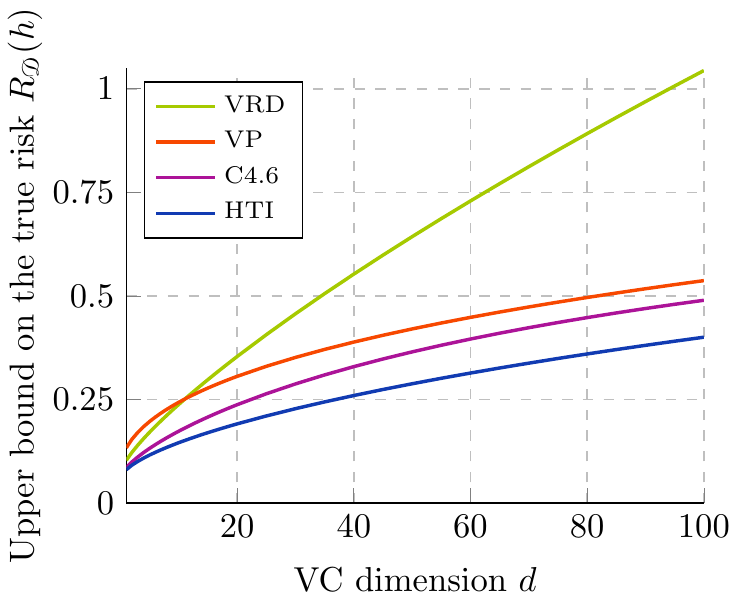}
    \caption{Bounds as functions of the VC dimension.}
    \label{fig:bounds_comp_d}
\end{subfigure}
\caption{
Bounds for a sample size of $m=2000$ and a confidence parameter $\delta=0.05$.
Left: the bounds as functions of the empirical risk and a VC dimension of $d=50$. Right: the bounds as functions of the VC dimension for $R_S(h) = 0.05$.
Lugosi's chaining bound does not appear as it is vacuous for all values of $R_S(h)$ and $d$ considered.
}
\label{fig:bounds_comp}
\end{figure}

In Figure~\ref{fig:bounds_comp_risk}, the HTI\textsubscript{opti} bound (solid blue) presents the bound with $m'=6970$, which is optimized for $k=0$, while the HTI$_{m'=m}$ bound (dashed blue) presents the bound for $m' = m = 2000$.
One can appreciate the improvement provided by the optimization, which is non-negligible for almost all risks.
The shadow sample size of Catoni's C4.6 (purple) bound is also optimized for 0 error, but in this case $m'$ can only be an integer multiple of $m$.
We find $m'=16m$ to be the best choice.
Note that Lugosi's chaining bound is vacuous for these sets of parameters and therefore does not show in the figure.
The new HTI bound is always tighter than the other bounds, and is significantly more for small empirical risks.
Indeed, at $R_S(h)=0$, the HTI bound is equal to $0.165$, the C4.6 bound to $0.300$, the VP (orange) bound to $0.370$, and the VRD (green) bound to $0.547$.
Hence, for these parameters, the HTI bound is almost twice as tight as Catoni's bound.
Notice that the HTI bound becomes vacuous only for empirical risks over approximately $0.85$, contrary to Catoni's bound, which is only valid when both the $R_S(h)$ and the bound are lower than $0.5$.
Since the worst risk a classifier can realize is given by random guessing, this constraint on Catoni's bound is inconsequential for binary classification.
However, for $n$ classes problems, random guessing yields a risk of $1-\frac{1}{n}$, which can be much worse than $0.5$.

In Figure~\ref{fig:bounds_comp_d}, the ghost samples size of the HTI bound and of Catoni's bound are optimized for each value of $d$ for an empirical risk of $0.05$.
Again, the new HTI bound supersedes all previous bounds for all values of VC dimension (Lugosi's bound does not show because it is vacuous).
Furthermore, even though the bounds are similar for small VC dimension, the relative gain of the HTI bound against the others increases as $d$ increases, and for $d=100$, the HTI bound is more than $1.5$ times tighter than Catoni's bound.

To compare the asymptotic convergence rates of the different bounds, Figure~\ref{fig:bounds_comparison_m} presents, in a log-scale graph, the bounds on $R_\D(h) - R_S(h)$ as functions of the sample size ($m=50$ up to $1$,$000$,$000$), for a zero empirical risk $R_S(h) = 0$ (realizable case) and non-zero empirical risk of $R_S(h) = 0.1$ (agnostic case).
In both cases, the VC dimension is set to a moderate value of $d=50$ and a confidence parameter of $\delta=0.05$.
The ghost sample size of the HTI and C4.6 bounds are optimized independently for each value of $m$ and for an empirical risk of $0$ (left) and $0.1$ (right).
The optimal convergence rates ($\Theta(\frac{d}{m}\log \frac{m}{d})$ in the realizable case and $\Theta(\sqrt{d/m})$ in the agnostic case) are plotted in gray for comparison purposes.

\begin{figure}[h]
\centering
\begin{subfigure}[t]{0.485\textwidth}
    \centering
    \includegraphics[width=\textwidth]{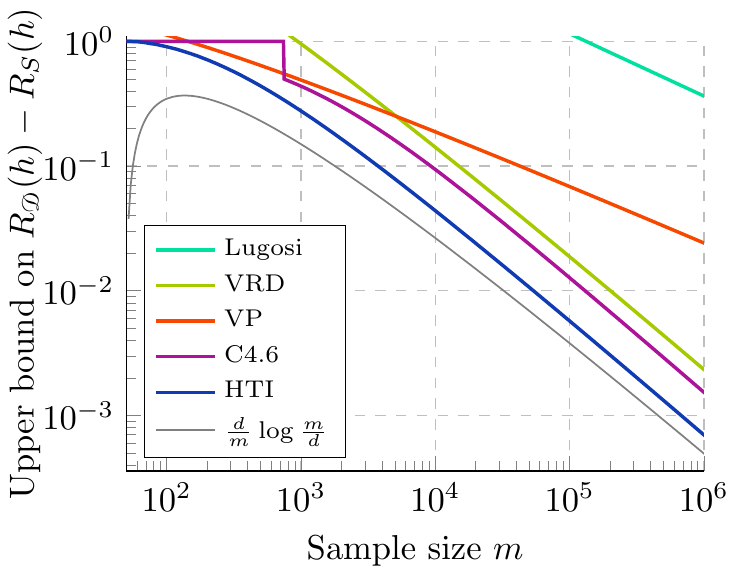}
    \caption{Bounds as functions of the sample size for $R_S(h)=0$.}
    \label{fig:bounds_comp_m_risk=0_d=50}
\end{subfigure}\hfill
\begin{subfigure}[t]{0.485\textwidth}
    \centering
    \includegraphics[width=\textwidth]{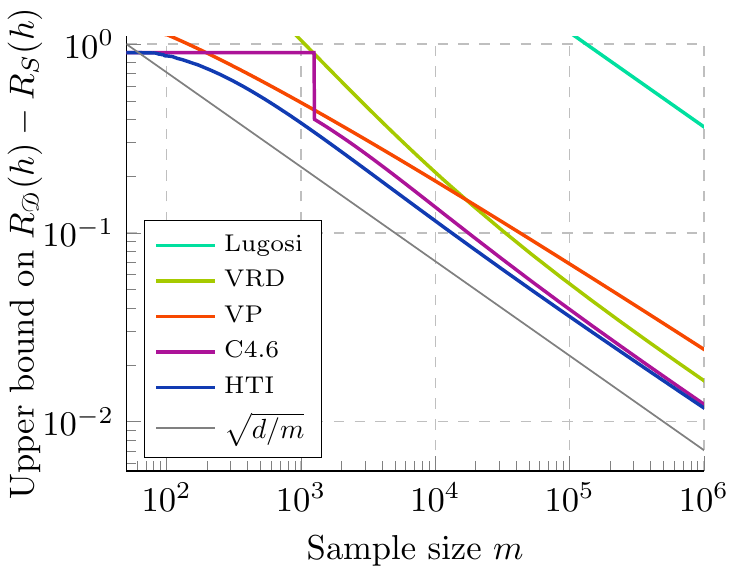}
    \caption{Bounds as functions of the sample size in for $R_S(h)=0.1$.}
    \label{fig:bounds_comp_m_risk=0.1_d=50}
\end{subfigure}
\caption{Comparison of the bounds on $R_\D(h) - R_S(h)$ as functions of the sample size $m$ for a VC dimension $d=50$ and a confidence parameter $\delta=0.05$. Left is the realizable case, right is the agnostic case. The optimal asymptotic rates are shown in gray.
}
\label{fig:bounds_comparison_m}
\end{figure}

In both Figures~\ref{fig:bounds_comp_m_risk=0_d=50} and \ref{fig:bounds_comp_m_risk=0.1_d=50}, we can observe that the new HTI bound is tighter than all the other bounds across the values of $m$ considered.
Moreover, the improvements is significant over Catoni's C4.6 bound for all values of $m$ in the realizable case, and for small sample sizes (up to $m\approx 1250$, where Catoni's bound becomes non-vacuous) for the agnostic case.

In the realizable case, the HTI bound becomes non-vacuous as soon as $m\approx 54$, which is only 4 more than the VC dimension, and it takes a value below $0.5$ at around $m\approx 400$.
In contrast, Catoni's C4.6 bound starts being informative at $m\approx 750$, almost twice the sample size needed for the HTI bound.
The situation is similar for the agnostic case, where the HTI bound is less than one for $m\ge 66$ and is less than one half for $m\gtrsim 565$.
This is less than half of the sample size required ($m\approx 1260$) by Catoni's bound to be non-vacuous.

Remark that in both cases, Lugosi's chaining bound starts becoming non-vacuous only at $m\gtrsim 100$,$000$.
Furthermore, even if it possesses an optimal convergence rate for the agnostic case, recall from the introduction that it would become tighter than Vapnik's pessimistic bound only for $m\gtrsim 10^{335}d$.
Let us also note that, even though the new HTI bound contains an extra unnecessary $\sqrt{\log(m/d)}$ in the agnostic case, it is still more than 30 times smaller than Lugosi's optimal-rate chaining bound at $m\approx 10^6$---a fact which highlights that constant factors matter more than logarithmic ones in practical settings.
As a final observation, the HTI bound has the non-negligible advantage that it has the correct asymptotic rate for the realizable case, something the chaining bound does not, as seen in Figure~\ref{fig:bounds_comp_m_risk=0_d=50}.

\section{Conclusion}

In conclusion, we proposed a new way to improve generalization bounds for VC classes.
Taking inspiration from~\citet{langford05}, we considered the hypergeometric tail pseudo-inverse, from which we derived a novel non-uniform distribution-independent very tight bound with few approximations.
We tightened even more the bound by optimizing the ghost sample size.
Applying the new technique to other settings, we developed a relative deviation bound---which turned out less tight than the novel bound---as well as a bound for multi-class margin classifiers and a lower bound.
We provided numerical comparisons against the previously existing bounds of \citet{vapnik98}, \citet{catoni2004improved} and \citet{lugosi2002pattern}, which has shown that the new generalization bound is significantly tighter for multiple settings, and that it is almost never vacuous.
These experimentations give evidence that constant factors are at the least as important as logarithmic factors in asymptotic rates for practical settings.

Even though the improvements to the constant factor achieved here have proven relevant for practical applications, the hypergeometric tail inversion bound is still not asymptotically optimal in the agnostic case.
Therefore, it would be very interesting to see if chaining techniques could be combined profitably with the hypergeometric tail inversion to obtain even further gain for (reasonably) large sample sizes.



\acks{This work was supported by NSERC Discovery grant RGPIN-2016-05942, by NSERC ES D scholarship PGSD3–505004–2017 and by NSERC BRPC scholarship BRPC-540188-2019.}


\newpage

\appendix

\section{Comparison between Lugosi's chaining bound and Vapnik's pessimistic bound}
\label{app:comparison_lugosi}

In the introduction, we have claimed that Lugosi's chaining bound becomes better than Vapnik's pessimistic bound only for extremely large sample sizes of $m \gtrsim 10^{335}d$.
We show the details of the computation here.

We use \citet{lugosi2002pattern}'s derivations of both bounds for the comparison.
These results are expressed as bounds on the expected deviation of the empirical risk from the true risk, \ie, of the form
\begin{equation*}
    \exv{S}{\sup_{h\in\H} \abs{R_\D(h) - R_S(h)}} \le \beta.
\end{equation*}

These results can be converted to generalization bounds of the form
\begin{equation*}
    \prob{S}{\exists h \in \H : \abs{R_\D(h) - R_S(h)} > \epsilon} \le \delta
\end{equation*}
by using McDiarmid's inequality (also called the bounded difference inequality) (Theorem~1.8 of \citet{lugosi2002pattern}), where $\epsilon \ge \beta + \sqrt{\frac{-\ln \delta}{2m}}$.
Therefore, comparing the bounds on the expected deviation $\beta$ is equivalent to compare the bounds $\epsilon$.

For Vapnik's pessimistic bound, we have $\beta_{\text{pessimistic}} = \sqrt{2\frac{\ln 2 \tau_\H(2m)}{m}}$ (Theorem~1.9 of \citet{lugosi2002pattern} without the last approximation of the proof).
Using Sauer-Shelah's lemma to approximate the growth function by $\tau_\H(2m) \le \big(\frac{2em}{d}\big)^d$, we get
\begin{equation*}
    \beta_{\text{pessimistic}} \le \sqrt{2\frac{\ln(2) + d\ln(2em/d)}{m}} \le \sqrt{\frac{2(d+1)\ln(4em/d)}{m}},
\end{equation*}
where we used $\ln 2 \le \ln (2em/d)$ to simplify things (which will favor the chaining bound in the comparison).

For the chaining bound, we use \citeauthor{lugosi2002pattern}'s Theorem~1.16:
\begin{equation}\label{eq:chaining_1}
    \beta_{\text{chaining}} = \frac{24}{\sqrt{m}} \max_{S:|S|=m} \int_0^1\!\! \sqrt{\ln \pr{ 2 \N(r, \H|_S)}} \textrm{d}r,
\end{equation}
where $\N(r, \H|_S)$ is the $r$-covering number of $\H|_S$.
When the VC dimension of $\H$ is $d$, Haussler showed that
\begin{equation*}
    \N(r, \H|_S) \le e (d+1) \pr{\frac{2e}{r^2}}^d.
\end{equation*}
Substituting this inequality in \eqref{eq:chaining_1} and proceeding to the integration, one has (without further approximations):
\begin{equation*}
    \beta_{\text{chaining}} \le 24\sqrt{\frac{2d}{m}} \pr{ \sqrt{a} + \frac{\sqrt{\pi} e^a}{2} (1- \text{erf}(\sqrt{a})) },
\end{equation*}
with
\begin{equation*}
    a \eqdef \frac{(d+1)(2+\ln2)}{2d}.
\end{equation*}

The term $\frac{\sqrt{\pi} e^a}{2} (1- \text{erf}(\sqrt{a}))$ is always positive, and can be shown to be less than $\frac{\sqrt{\pi}}{2} < 1$.
Neglecting this term (which favors the chaining bound in the comparison) and setting it approximately equal to the pessimistic bound, we have
\begin{align*}
    24 \sqrt{\frac{(d+1)(2+\ln 2)}{m}} \approx \sqrt{\frac{2(d+1)\ln(2em/d)}{m}}.
\end{align*}
Evaluating constants, it simplifies to
\begin{align*}
    775.63 \approx \ln(2em/d).
\end{align*}
Taking the exponential on each side and changing to base 10 gives the claimed result.

\newpage

\section{Overview of the hypergeometric distribution}
\label{app:hypergeometric_distribution}

This Appendix is dedicated to the hypergeometric distribution.
We cover the formal definition of the tail and its pseudo-inverse, their monotonicity properties, and algorithms to compute the pseudo-inverse.

\subsection{The hypergeometric distribution}
\label{app:hyp_tail}

The hypergeometric distribution is an elementary distribution that occurs often.
It deals with the probability of obtaining a given amount of successes out of some number of random draws (without replacement) from a population with a fixed total amount of successes.
This is similar to the binomial distribution, with the exception that the probability of success may change with each draw, since we draw without replacement when sampling according to a hypergeometric distribution.
The following definition gives the mathematical formula for the probability mass function of the hypergeometric distribution.

\begin{definition}[Hypergeometric distribution probability mass function]
Given an $M$-element set containing $K$ successes, the action of drawing $k$ successes in a sample of $m$ elements is called a \emph{hypergeometric experiment} and its probability is given by the \emph{hypergeometric distribution probability mass function}, defined by
\begin{equation*}
    \hyp(k,m,K,M) \eqdef \frac{\binom{K}{k}\binom{M-K}{m-k}}{\binom{M}{m}},
\end{equation*}
where $\binom{N}{n}$ is a binomial coefficient.
\end{definition}

The tail function, also called the cumulative distribution, is particularly useful in our work.
We give its formal definition below.

\begin{definition}[Hypergeometric tail function]
\label{def:app_hyp_tail}
Given a set of $M$ elements containing $K$ successes, the probability of drawing $k$ successes or less in a sample of $m$ elements is defined by the \emph{hypergeometric tail function} (also called the cumulative distribution function) and is expressed as
\begin{equation*}
    \Hyp(k,m,K,M) \eqdef \sum_{j=0}^k\frac{\binom{K}{j}\binom{M-K}{m-j}}{\binom{M}{m}}.
\end{equation*}
\end{definition}

Figure~\ref{fig:hyp_tail_plot} shows the hypergeometric tail function as a function of its four parameters for various sets of default parameters to help the reader picture the behavior of this distribution.

\begin{figure}[p]
\centering
\begin{subfigure}[t]{0.485\textwidth}
    \centering
    \includegraphics[width=\textwidth]{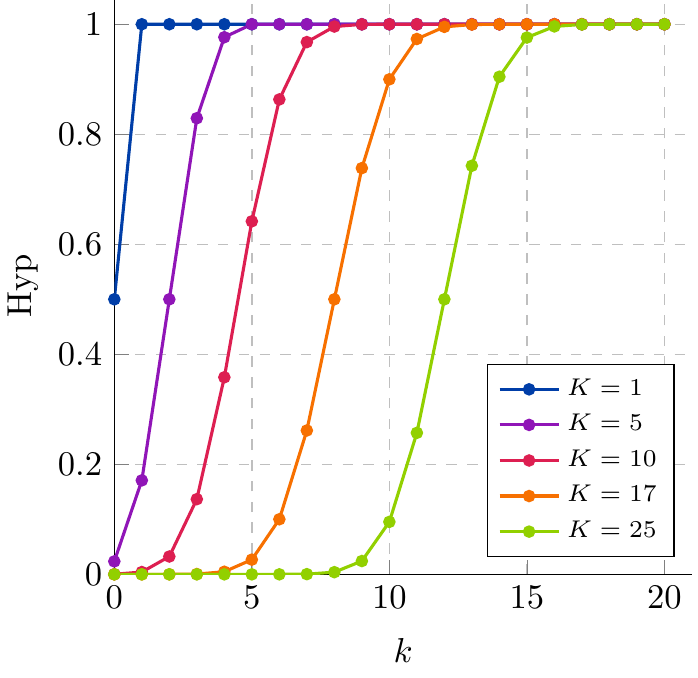}
    \caption{$\Hyp(k,m,K,M)$ as a function of $k$ for various values of $K$.}
    \label{fig:hyp_tail_plot_k}
\end{subfigure}\hfill
\begin{subfigure}[t]{0.485\textwidth}
    \centering
    \includegraphics[width=\textwidth]{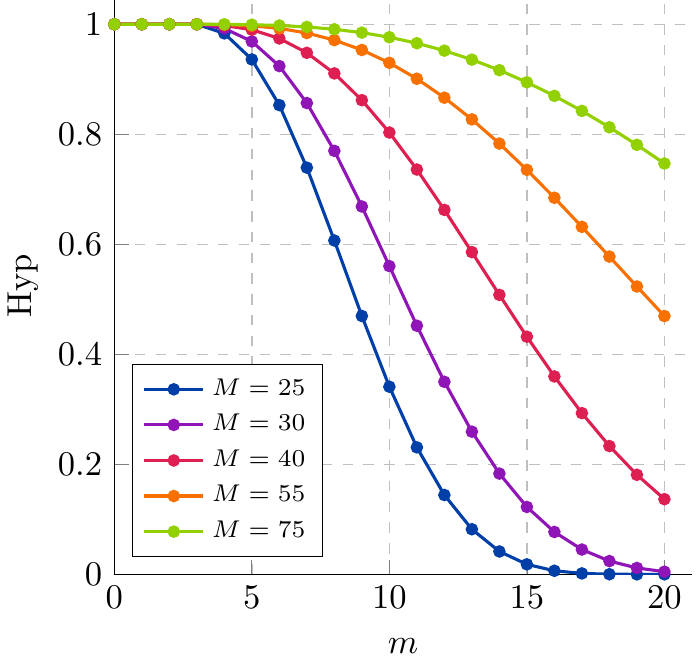}
    \caption{$\Hyp(k,m,K,M)$ as a function of $m$ for various values of $M$.}
    \label{fig:hyp_tail_plot_m}
\end{subfigure}

\vspace{15pt}

\begin{subfigure}[t]{0.485\textwidth}
    \centering
    \includegraphics[width=\textwidth]{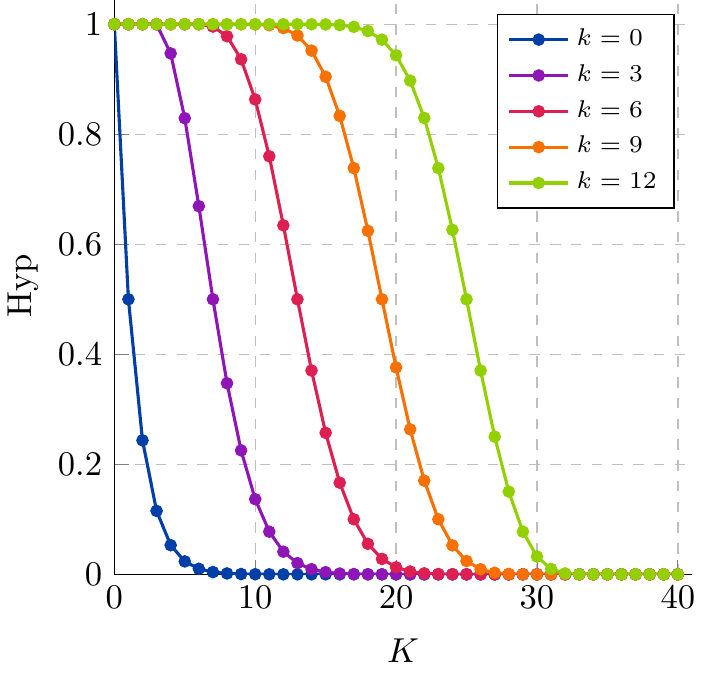}
    \caption{$\Hyp(k,m,K,M)$ as a function of $K$ for various values of $k$.}
    \label{fig:hyp_tail_plot_K}
\end{subfigure}\hfill
\begin{subfigure}[t]{0.485\textwidth}
    \centering
    \includegraphics[width=\textwidth]{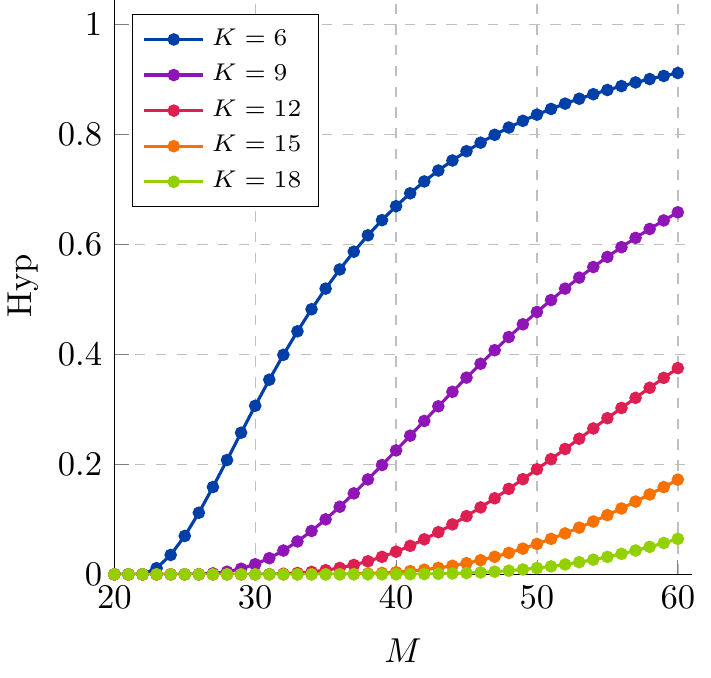}
    \caption{$\Hyp(k,m,K,M)$ as a function of $M$ for various values of $K$.}
    \label{fig:hyp_tail_plot_M}
\end{subfigure}
\caption{Hypergeometric tail function as a function of each of its four parameters $k$, $m$, $K$ and $M$, presented in this order. If not specified, the default parameters are $k=3$, $m=20$, $K=10$ and $M=40$. The hypergeometric tail is discrete and so the lines joining the marks are only there to ease the reading. One can observe the monotonicity properties of the hypergeometric tail function.}
\label{fig:hyp_tail_plot}
\end{figure}

Recently, \cite{berkopec07} has found a new expression for the hypergeometric tail, which is stated in the following proposition.
\begin{proposition}[Berkopec's identity]\label{prop:berkopec}
For any integers $k$, $m$, $K$ and $M$ such that the inequality $0 \le k \le \min\cb{m,K} \le \max\cb{m,K} \le M$ holds, we have that
\begin{equation*}
    \sum_{j=0}^{k} \binom{K}{j} \binom{M-K}{m-j} = \sum_{J=K}^{M-m+k} \binom{J}{k} \binom{M-J-1}{m-k-1},
\end{equation*}
where it is assumed that $\binom{N}{-n} = 0$ for any $N \in \mathds{Z}$ and $n>0$.
\end{proposition}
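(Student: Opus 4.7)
The plan is to prove Berkopec's identity combinatorially, by showing that both sides count the same family of subsets. I would label the ambient set as $\{1, 2, \ldots, M\}$ and designate $\{1, \ldots, K\}$ as ``successes'' and $\{K+1, \ldots, M\}$ as ``failures''. Let $\mathcal{A}$ be the collection of $m$-element subsets of $\{1,\ldots,M\}$ containing at most $k$ successes. The left-hand side is immediately the cardinality of $\mathcal{A}$: the summand $\binom{K}{j}\binom{M-K}{m-j}$ counts $m$-subsets with exactly $j$ successes, and summing from $j = 0$ to $k$ gives $|\mathcal{A}|$.

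For the right-hand side, I would argue that $\binom{J}{k}\binom{M-J-1}{m-k-1}$ counts the $m$-subsets $S$ whose $(k+1)$-th smallest element is exactly $J+1$: the factor $\binom{J}{k}$ picks the $k$ elements of $S$ from $\{1,\ldots,J\}$, the element $J+1$ is forced into $S$, and $\binom{M-J-1}{m-k-1}$ picks the remaining $m-k-1$ elements from $\{J+2,\ldots,M\}$. Summing over $J \in \{K, K+1, \ldots, M-m+k\}$ then counts the $m$-subsets whose $(k+1)$-th smallest element lies in $\{K+1,\ldots,M-m+k+1\}$. The upper endpoint is automatic, since any $m$-subset of $\{1,\ldots,M\}$ has $(k+1)$-th smallest element at most $M-m+k+1$ (there must be $m-k-1$ elements strictly above it), so the real constraint is simply ``$(k+1)$-th smallest element $\ge K+1$,'' which is equivalent to ``at most $k$ elements of $S$ lie in $\{1,\ldots,K\}$,'' i.e., $S \in \mathcal{A}$. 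Since every such $S$ has a unique $(k+1)$-th smallest element (provided $m > k$), the sum partitions $\mathcal{A}$ and equals $|\mathcal{A}|$.

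The main obstacle is handling boundary cases cleanly. One should verify that the summation indices yield non-negative binomial coefficients throughout and that degenerate cases behave correctly: when $k = 0$, the argument collapses to a hockey-stick identity; when $K = M$, only one value of $J$ contributes; when $m = k+1$, the factor $\binom{M-J-1}{0}$ forces all contributions to be $\binom{J}{k}$. The implicit restriction $k < m$ must be stated explicitly, since if $k = m$ the left-hand side equals $\binom{M}{m}$ by Vandermonde whereas the right-hand side vanishes under the convention $\binom{N}{-n}=0$; the identity is invoked in the paper only in the range $k < m$, which is the only range relevant to the hypergeometric tail of Definition~\ref{def:app_hyp_tail}. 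Alternative routes---induction on $k$ via Pascal's rule, or an algebraic manipulation through repeated Vandermonde convolutions---also succeed, but are substantially less transparent than the bijective argument sketched above.
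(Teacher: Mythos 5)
The paper does not supply its own proof of Proposition~\ref{prop:berkopec}; it simply cites \cite{berkopec07} and takes the identity as given, so there is no internal derivation to compare your argument against. Your double-counting proof is self-contained and correct. On the left, $\sum_{j=0}^{k}\binom{K}{j}\binom{M-K}{m-j}$ counts the $m$-element subsets $S \subseteq \{1,\dots,M\}$ with at most $k$ elements in the ``success'' block $\{1,\dots,K\}$; on the right, $\binom{J}{k}\binom{M-J-1}{m-k-1}$ counts, for fixed $J$, the $m$-subsets whose $(k{+}1)$-st smallest element equals $J+1$, and letting $J$ run from $K$ to $M-m+k$ ranges this order statistic over $\{K+1,\dots,M-m+k+1\}$. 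Your two supporting remarks are what close the bijection: the upper limit is automatic because the $(k{+}1)$-st smallest element of an $m$-subset of $[M]$ can never exceed $M-m+k+1$ (there must be $m-k-1$ larger elements), and the lower limit $J \ge K$ encodes precisely the condition ``at most $k$ elements of $S$ lie in $\{1,\dots,K\}$,'' recovering the same family counted on the left. The algebraic alternatives you mention (induction on $k$ via Pascal's rule, iterated Vandermonde) work but are indeed less illuminating.

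Your boundary observation is also correct and substantive. As printed, the hypothesis $0 \le k \le \min\{m,K\} \le \max\{m,K\} \le M$ permits $k=m$, in which case the left side equals $\binom{M}{m}$ by Vandermonde convolution while every summand on the right contains the factor $\binom{M-J-1}{-1}=0$ under the stated convention, so the identity fails. The correct hypothesis should additionally require $k<m$; this is also exactly the regime in which your bijection makes sense (an $m$-subset has a $(k{+}1)$-st smallest element only when $k<m$). Since the paper only invokes the proposition with $k<m$ (the same restriction placed on the pseudo-inverse in Definition~\ref{def:app_hyp_tail_inv}), the omission is harmless in context, but you are right to flag it.
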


With this identity, we have 
\begin{equation*}
    \Hyp(k,m,K,M) = \sum_{J=K}^{M-m+k}\frac{\binom{J}{k} \binom{M-J-1}{m-k-1}}{\binom{M}{m}}.
\end{equation*}
This will prove particularly useful when showing the monotonicity properties of the hypergeometric tail, since it allows us to exchange the sum over values of $k$ with a sum over values of $K$.
This brings us to the following lemma.

\begin{lemma}[Monotonicity of the hypergeometric tail function]
\label{lem:monotonicity_hyp_tail}
Let $\Hyp(k,m,K,M)$ be the tail of the hypergeometric distribution as described in Definition~\ref{def:app_hyp_tail}, and let $\binom{N}{-n} \eqdef 0$ for all integers $N \in \mathds{Z}$ and $n > 0$.
Then, for all $k \in \mathds{Z}$ and $0 \le m,K \le M$, $\Hyp(k,m,K,M)$ is increasing in its first and last parameters $k$ and $M$, and decreasing in its second and third parameters $m$ and $K$.
Furthermore, the monotonicity is strict whenever the inequalities $0 \le k < \min(m,K)$ and $m-k \le M-K$ hold simultaneously.
\end{lemma}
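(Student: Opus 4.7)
The plan is to take the four parameters one at a time, using symmetry and an identity-based rewriting to reduce most of the work to a single calculation. Monotonicity in $k$ is immediate from the definition, since $\Hyp(k+1,m,K,M)-\Hyp(k,m,K,M)=\hyp(k+1,m,K,M)\ge 0$; this difference is strictly positive whenever $\binom{K}{k+1}$ and $\binom{M-K}{m-k-1}$ are both nonzero, which under the joint hypotheses $k<\min(m,K)$ and $m-k\le M-K$ is automatic.

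For monotonicity in $K$, I would invoke Berkopec's identity (Proposition~\ref{prop:berkopec}) to rewrite
\[
\Hyp(k,m,K,M) = \sum_{J=K}^{M-m+k}\frac{\binom{J}{k}\binom{M-J-1}{m-k-1}}{\binom{M}{m}}.
\]
Incrementing $K\mapsto K+1$ simply removes the leading $J=K$ term, so $\Hyp(k,m,K,M)-\Hyp(k,m,K+1,M)=\binom{K}{k}\binom{M-K-1}{m-k-1}/\binom{M}{m}\ge 0$, with strict inequality precisely when $K\ge k$ and $M-K\ge m-k$. Monotonicity in $m$ then comes for free from the standard symmetry $\hyp(k,m,K,M)=\hyp(k,K,m,M)$ (both sides collapse to the same ratio of factorials upon expansion), which, summed over $j=0,\dots,k$, yields $\Hyp(k,m,K,M)=\Hyp(k,K,m,M)$ and reduces the $m$-case to the $K$-case.

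The only direction requiring a genuinely new argument is $M$. I would proceed by a coupling/conditioning argument: passing from a population of $M$ items to $M+1$ items while keeping $K$ fixed amounts to adding one extra failure, and conditioning on whether this new failure is drawn in the sample of size $m$ gives the recurrence
\[
\Hyp(k,m,K,M+1) = \frac{m}{M+1}\Hyp(k,m-1,K,M) + \frac{M+1-m}{M+1}\Hyp(k,m,K,M).
\]
Rearranging yields $\Hyp(k,m,K,M+1)-\Hyp(k,m,K,M) = \frac{m}{M+1}[\Hyp(k,m-1,K,M)-\Hyp(k,m,K,M)]\ge 0$ by the (now established) monotonicity in $m$, and strictness follows from the corresponding strict inequality between $m-1$ and $m$, which under $k<\min(m,K)$ and $m-k\le M-K$ holds since $m-1\ge k$ and $M-(m-1)\ge K-k$.

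The main obstacle, if any, is spotting the right $M$-to-$M+1$ coupling; everything else is essentially bookkeeping once Berkopec's identity and the elementary symmetry of $\hyp$ are on the table. The strict-monotonicity verifications are the most error-prone part and must be carried out direction by direction, checking in each case that the single term separating the two values of $\Hyp$ is a product of nonzero binomial coefficients under the joint hypothesis.
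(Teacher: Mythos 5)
Your proposal is correct and follows essentially the same route as the paper: monotonicity in $k$ directly from the defining sum, in $K$ via Berkopec's identity, in $m$ by the symmetry $\Hyp(k,m,K,M)=\Hyp(k,K,m,M)$, and in $M$ by conditioning on whether the $(M+1)$-th element lands in the sample and then invoking the $m$-case. The only wrinkle worth noting is that your strictness bookkeeping is slightly looser than the paper's (you omit the explicit requirement $m-k\ge 1$ in the $K$-step, though it is implied by $k<m$), but the argument is otherwise the same.
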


One can observe the monotonicity of the tail in Figure~\ref{fig:hyp_tail_plot}.

\begin{proof}
We begin with the first parameter, $k$, fixing $m$, $K$ and $M$.
To show that the tail is monotonically increasing, we need to show that $\Hyp(k+1,m,K,M)-\Hyp(k,m,K,M) \geq 0$ for all $k\in \mathds{Z}$. Starting from Definition~\ref{def:app_hyp_tail}, we have
\begin{align*}
    \Hyp(k,m,K,M)-\Hyp(k-1,m,K,M)
    &= \frac{\binom{K}{k}\binom{M-K}{m-k}}{\binom{M}{m}} \ge 0.
\end{align*}
Moreover, this is strictly greater than zero whenever $0 \le k \le m$ and $K-M+m \le k \le K$, which is the standard support for the distribution.

For the second and third parameters $m$ and $K$, first observe that the tail is symmetric under the exchange of $m$ and $K$, \ie, $\Hyp(k,m,K,M) = \Hyp(k,K,m,M)$.
This can be shown by reorganizing the binomial coefficients of each term in the sum.
This tells us that monotonic decreasing in $m$ implies monotonic decreasing in $K$ and vice-versa.

We will show monotonicity in $K$ with the help of Berkopec's identity \citep{berkopec07} stated in Proposition~\ref{prop:berkopec}.
We have
\begin{align*}
    \Hyp(k,m,K,M)-\Hyp(k,m,K+1,M)
    &= \frac{\binom{K}{k} \binom{M-K-1}{m-k-1}}{\binom{M}{m}} \ge 0,
\end{align*}
and we have strict monotonicity whenever $0 \leq k \leq K$ and $1 \le m-k \le M-K$.
Monotonicity in $m$ follows by exchanging $m$ with $K$ in the previous argument, and is strict when $0\le k \le m$ and $1 \le K-k \le m-K$.

Finally, to show the tail is monotonic increasing in $M$, consider a population of $M+1$ and decompose the probability $\Hyp(k,m,K,M+1)$ into two events: we picked the $(M+1)$-th element among the $m$ elements drawn, or we did not.
The former case has a probability $p \eqdef \frac{m}{M+1}$ of happening and must be multiplied by the probability of drawing $k$ successes among $m-1$ elements within a population of $M$ elements with $K$ successes.
The latter case happens with probability $1-p$ and we multiply it by the probability of drawing $k$ successes among $m$ elements within a population of $M$ elements with $K$ successes.
Putting everything together, we obtain
\begin{align*}
    \Hyp(k,m,K,M+1)
    &= \frac{m}{M+1}\cdot \Hyp(k,m-1,K,M) + \frac{M+1-m}{M+1}\cdot\Hyp(k,m,K,M)\\
    &\ge \frac{m}{M+1}\cdot \Hyp(k,m,K,M) + \frac{M+1-m}{M+1}\cdot\Hyp(k,m,K,M)\\
    &= \Hyp(k,m,K,M),
\end{align*}
where we used the monotonicity of $\Hyp(k,m,K,M)$ in the parameter $m$ at the second step, which implies strict monotonicity in $M$ for the same intervals as $m$.
Alternatively, note that it is possible to prove the first equality by using Pascal's identity inside Definition~\ref{def:app_hyp_tail}.

Taking the intersection of all the intervals of strict monotonicity, we find the intervals specified in the lemma, which concludes the proof.
\end{proof}

\subsection{The pseudo-inverse of the hypergeometric tail}
\label{app:hyp_tail_inv}

Our main result requires us to invert the tail over the parameter $K$.
This differs from the quantile function (also known as the ``percent point'' function), which inverts the tail with respect to the parameter $k$.
This section is concerned with this particular inverse and its subtleties.

Strictly Monotonic functions of a real variable enjoy the property of being invertible.
However, the hypergeometric tail having discrete parameters, defining an inverse is not as straightforward.
Instead, one must consider a pseudo-inverse.
Note that there exist many ways to define such an inverse; the next definition states the pseudo-inverse appropriate for our purpose.

\begin{definition}[Hypergeometric tail pseudo-inverse]\label{def:app_hyp_tail_inv}
The pseudo-inverse of the hypergeometric tail function, for any $k < m$ and any $\delta \in (0,1)$, is defined as
\begin{equation*}
    \HypInv(k, m, \delta, M) \eqdef \min \cb{ K : \Hyp(k, m, K, M) \le \delta }.
\end{equation*}
\end{definition}
We restrict the domain of the pseudo-inverse to $k < m$ because $\Hyp(m,m,K,M) = 1$ for all $K$ and $M$ greater than or equal to $m$, and thus it is ill-defined for $k=m$.
Notice that the domain of $K$ is restricted to $k < K \le M - m + k + 1$ since $\Hyp(k,m,K,M)=1$ for every $K \leq k$, and because we have that $\Hyp(k,m,K,M)=0$ for every $K > M-m+k$.

In particular, this choice of pseudo-inverse implies $\Hyp(k,m,\HypInv(k,m,\delta,M),M) \le \delta$.
Figure~\ref{fig:hyp_tail_plot_inv} gives a visualization of the pseudo-inverse as a function of its four parameters for various sets of default parameters.

\begin{figure}[p]
\centering
\begin{subfigure}[t]{0.47\textwidth}
    \centering
    \includegraphics[width=\textwidth]{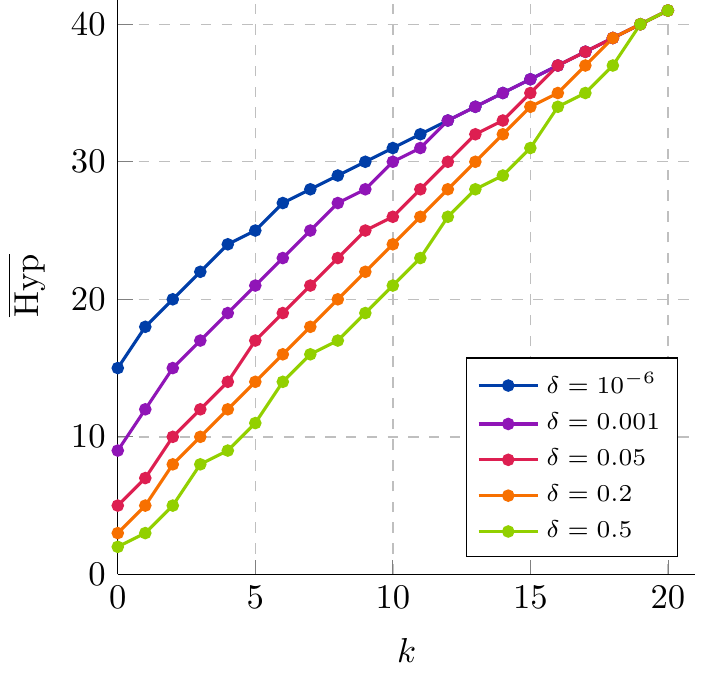}
    \caption{$\HypInv(k,m,\delta,M)$ as a function of $k$ for various values of $\delta$.}
    \label{fig:hyp_tail_inv_plot_k}
\end{subfigure}\hfill
\begin{subfigure}[t]{0.47\textwidth}
    \centering
    \includegraphics[width=\textwidth]{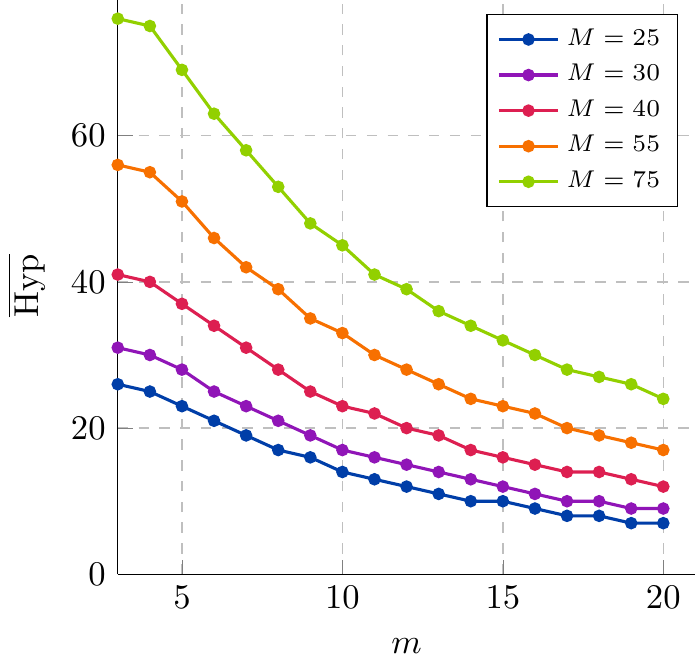}
    \caption{$\HypInv(k,m,\delta,M)$ as a function of $m$ for various values of $M$.}
    \label{fig:hyp_tail_inv_plot_m}
\end{subfigure}

\vspace{15pt}

\begin{subfigure}[t]{0.47\textwidth}
    \centering
    \includegraphics[width=\textwidth]{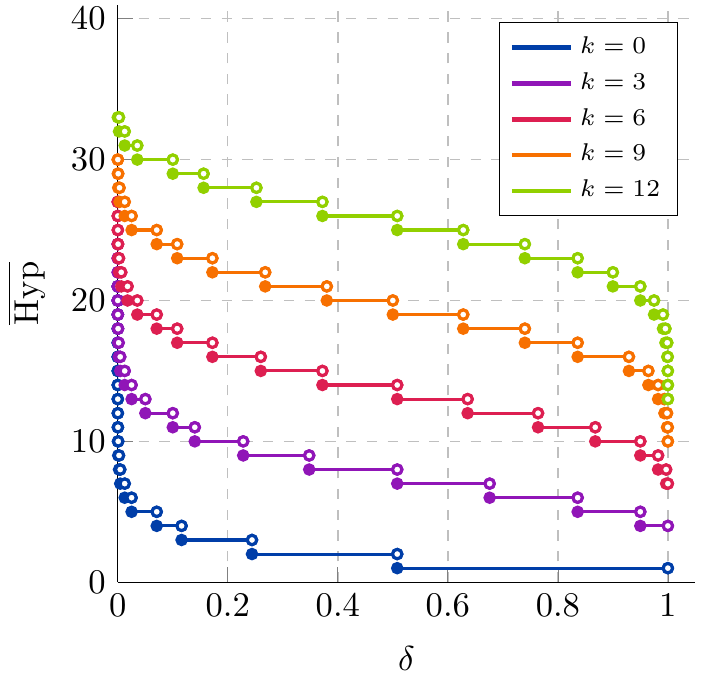}
    \caption{$\HypInv(k,m,\delta,M)$ as a function of $\delta$ for various values of $k$. Contrary to the other parameters, $\HypInv$ is piecewise continuous in $\delta$. Full marks indicate that the endpoint is closed while empty marks mean an open endpoint.}
    \label{fig:hyp_tail_plot_inv_delta}
\end{subfigure}\hfill
\begin{subfigure}[t]{0.47\textwidth}
    \centering
    \includegraphics[width=\textwidth]{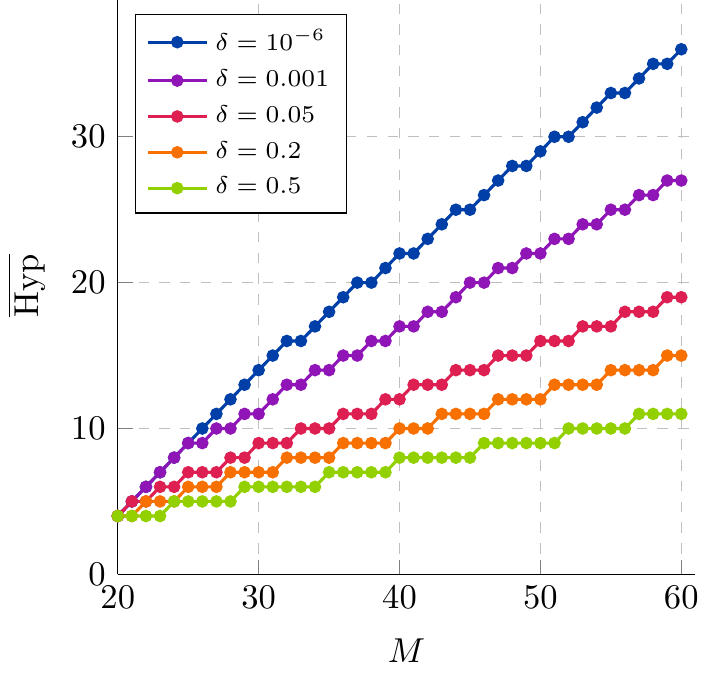}
    \caption{$\HypInv(k,m,\delta,M)$ as a function of $M$ for various values of $\delta$.}
    \label{fig:hyp_tail_plot_inv_M}
\end{subfigure}
\caption{Hypergeometric tail pseudo-inverse as a function of each of its four parameters $k$, $m$, $\delta$ and $M$, presented in this order. If not specified, the default parameters are $k=3$, $m=20$, $\delta=0.05$ and $M=40$. The pseudo-inverse is discrete in $k$, $m$ and $M$, and so the lines joining the marks are only there to ease the reading for these parameters. One can observe the monotonicity properties of the hypergeometric tail pseudo-inverse.}
\label{fig:hyp_tail_plot_inv}
\end{figure}

It is well known that the inverse of a strictly monotone function of a real variable is also strictly monotone.
However, this is less evident for pseudo-inverses.
The following lemma shows that monotonicity properties still hold for the hypergeometric tail pseudo-inverse.

\begin{lemma}[Monotonicity of the hypergeometric tail pseudo-inverse]\label{lem:monotonicity_hyp_tail_inv}

Let $\HypInv(k,m,\delta,M)$ be the pseudo-inverse of the hypergeometric tail function as described in Definition~\ref{def:app_hyp_tail_inv}.
Then, for $\delta\in (0,1)$, for $0 \le k < \min(m,K)$ and for $m-k \le M-K$, $\HypInv(k,m,\delta,M)$ is increasing in its first and last parameters $k$ and $M$, and decreasing in its second and third parameters $m$ and $\delta$.
Furthermore, the monotonicity in $k$ is strict.
\end{lemma}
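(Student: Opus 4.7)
The plan is to lift the monotonicity of $\Hyp$ (Lemma~\ref{lem:monotonicity_hyp_tail}) through the defining formula $\HypInv(k, m, \delta, M) = \min \cb{K : \Hyp(k,m,K,M) \le \delta}$. Since $\Hyp$ is decreasing in $K$, the set $\mathcal{K}(k,m,\delta,M) \eqdef \cb{K : \Hyp(k,m,K,M) \le \delta}$ is an upwards-closed subset of the integers, and $\HypInv$ is simply its least element. Each of the four monotonicity claims then reduces to comparing two such sets: if enlarging a parameter causes $\mathcal{K}$ to shrink, then $\HypInv$ increases in that parameter, and vice versa.

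The dependence on $\delta$ is immediate from the definition: for $\delta_1 < \delta_2$, the inequality $\Hyp \le \delta_1$ is strictly stronger than $\Hyp \le \delta_2$, so $\mathcal{K}(k,m,\delta_1,M) \subseteq \mathcal{K}(k,m,\delta_2,M)$ and $\HypInv$ is decreasing in $\delta$. For the other three parameters, I would invoke the monotonicities of $\Hyp$ itself. Passing from $k$ to $k' > k$ only makes $\Hyp$ larger, so any $K \in \mathcal{K}(k',m,\delta,M)$ lies in $\mathcal{K}(k,m,\delta,M)$; the set shrinks, and $\HypInv$ is increasing in $k$. An identical argument with ``$\Hyp$ is decreasing in $m$'' yields $\HypInv$ decreasing in $m$, and ``$\Hyp$ is increasing in $M$'' yields $\HypInv$ increasing in $M$.

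The main difficulty is strict monotonicity in $k$, since non-strict inclusion of upper sets only yields $\HypInv(k+1, m, \delta, M) \ge \HypInv(k, m, \delta, M)$. Let $K^* = \HypInv(k, m, \delta, M)$; by minimality, $\Hyp(k, m, K^*-1, M) > \delta$, so it suffices to show $\Hyp(k+1, m, K^*, M) > \Hyp(k, m, K^*-1, M)$. The plan is to establish the exact identity
\[
\Hyp(k+1, m, K, M) - \Hyp(k, m, K-1, M) = \frac{\binom{K-1}{k+1}\binom{M-K}{m-k-1}}{\binom{M}{m}},
\]
obtained by decomposing the left-hand side as $\hyp(k+1, m, K, M) - \bigl[\Hyp(k, m, K-1, M) - \Hyp(k, m, K, M)\bigr]$, substituting the closed form $\hyp(k+1, m, K, M) = \binom{K}{k+1}\binom{M-K}{m-k-1}/\binom{M}{m}$ together with the single-step difference $\Hyp(k, m, K-1, M) - \Hyp(k, m, K, M) = \binom{K-1}{k}\binom{M-K}{m-k-1}/\binom{M}{m}$ read off from the proof of Lemma~\ref{lem:monotonicity_hyp_tail}, and simplifying via Pascal's identity $\binom{K}{k+1} - \binom{K-1}{k} = \binom{K-1}{k+1}$. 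Under the stated conditions on $k$, $m$, $K$ and $M$, both binomials on the right are positive, so the difference is strictly positive, forcing $\Hyp(k+1, m, K^*, M) > \delta$ and hence $\HypInv(k+1, m, \delta, M) \ge K^* + 1$.
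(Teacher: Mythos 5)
Your plan mirrors the paper's own proof for the four non-strict monotonicity claims: the paper likewise introduces the upper set $\mathcal{K}(k,m,\delta,M) = \{K : \Hyp(k,m,K,M) \le \delta\}$, observes nested inclusions of these sets as each parameter moves, and compares minima. For strict monotonicity in $k$, both arguments rest on the same single-step difference $\Hyp(k,m,K-1,M)-\Hyp(k,m,K,M) = \binom{K-1}{k}\binom{M-K}{m-k-1}/\binom{M}{m}$ coming from Berkopec's identity, but you package it differently. The paper shows a multiplicative relation, $\Hyp(k+1,m,K,M) - \Hyp(k,m,K,M) = \tfrac{K}{k+1}\bigl(\Hyp(k,m,K-1,M) - \Hyp(k,m,K,M)\bigr)$, invokes $K \ge k+1$ to get the non-strict inequality $\Hyp(k+1,m,K,M) \ge \Hyp(k,m,K-1,M)$, and then threads it through $\Hyp(k+1,m,K',M) \le \delta < \Hyp(k,m,K-1,M)$ together with the strict monotonicity of $\Hyp$ in $K$. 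You instead derive the additive closed form $\Hyp(k+1,m,K,M) - \Hyp(k,m,K-1,M) = \binom{K-1}{k+1}\binom{M-K}{m-k-1}/\binom{M}{m}$ via Pascal's identity and feed it directly into the defining threshold, which is a somewhat cleaner presentation.

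One small correction is needed. You assert that the binomials on the right are strictly positive under the stated conditions, but the hypothesis only gives $k < K$, i.e.\ $K \ge k+1$; at the boundary $K^* = k+1$ one has $\binom{K^*-1}{k+1} = \binom{k}{k+1} = 0$, and the identity's right-hand side vanishes (indeed $\Hyp(k+1,m,k+1,M) = \Hyp(k,m,k,M) = 1$). This does not invalidate your proof, because the chain of inequalities only needs
\begin{equation*}
\Hyp(k+1,m,K^*,M) \;\ge\; \Hyp(k,m,K^*-1,M) \;>\; \delta,
\end{equation*}
and the second inequality, which comes from minimality of $K^*$, is already strict. You should therefore weaken the claim about the identity to non-negativity, and let the strict separation from $\delta$ come from $\Hyp(k,m,K^*-1,M) > \delta$ rather than from a strictly positive increment.
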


\begin{proof}
We start by presenting an approach to showing the monotonicity properties of the pseudo-inverse in any of its parameters.
The approach is quite general, but for the sake of clarity, we apply it only for the parameter $k$ and explain how to adapt it for the other parameters.
We then conclude by a more specific approach to show the pseudo-inverse is strictly increasing in $k$.

Let $m$, $\delta$ and $M$ take some values that satisfy the requirements of Lemma~\ref{lem:monotonicity_hyp_tail_inv} and choose $k<m-1$.
Let $\K(k,m,\delta,M) \eqdef \cb{ K : \Hyp(k,m,K,M) \le \delta }$ such that we have $\HypInv(k,m,\delta,M) = \min \K(k,m,\delta,M)$.
We want to show that $\HypInv(k,m,\delta,M)$ is increasing with $k$.
First observe that if, for some $K$, we have that $\Hyp(k+1,m,K,M) \le \delta$, then we know that $\Hyp(k,m,K,M) \le \delta$ is true too by the monotonic increasing nature of the hypergeometric tail in the parameter $k$.
Therefore, for all $K' \in \K(k+1,m,\delta,M)$, we must have $K' \in \K(k,m,\delta,M)$ too.
Hence, we have that $\K(k,m,\delta,M) \supseteq \K(k+1,m,\delta,M)$.
Then, muting dependence on $m$, $\delta$ and $M$ for clarity, and assuming that $\K(k+1)$ is not empty, we have
\begin{align*}
\HypInv(k,m,\delta,M) 
    &= \min \K(k)\\
    &= \min \K(k+1) \cup \pr{ \K(k)\backslash \K(k+1)}\\
    &= \min \cb{ \min \K(k+1), \min \K(k)\backslash \K(k+1) }\\
    &\le \min \K(k+1)\\
    &= \HypInv(k+1,m,\delta,M),
\end{align*}
as desired.
The non-emptiness of $\K(k+1,m,\delta,M)$ requires that $k+1 < m$, because we know $\Hyp(m,m,K,M)=1$, but $\delta$, being restricted to the interval $(0,1)$, is always less than $1$.
Therefore, we have that $\HypInv(k,m,\delta,M)$ is increasing for $k < m$.

It is straightforward to adapt this approach to show monotonicity in the other parameters too.
Indeed, to show the decreasingness in $m$, simply take $m$ and $m-1$ instead of $k$ and $k+1$ and repeat the proof.
Similarly, for $\delta$, one instead takes $\delta$ and $\delta'$ such that $\delta > \delta'$ to show monotonic decreasingness; for $M$, just take $M$ and $M+1$ to prove monotonic increasingness.

Now, we refine our approach to show that the monotonicity in $k$ is in fact strict.
Fix $m$, $\delta$ and $M$.
Let $K \eqdef \HypInv(k,m,\delta,M)$ and let $K'\eqdef \HypInv(k+1,m,\delta,M)$.
To show strict increasingness, we must show that $K < K'$.
First, note that by definition of the pseudo inverse, we have that $\Hyp(k,m,K,M)\le\delta$ but $\Hyp(k,m,K-1,\delta) > \delta$, and similarly, $\Hyp(k+1,m,K',M) \le \delta$ but $\Hyp(k+1,m,K'-1,\delta) > \delta$.
Therefore, combining both facts, we have
\begin{equation}\label{eq:app1}
    \Hyp(k+1,m,K',M) \le \delta < \Hyp(k,m,K-1,M).
\end{equation}
Thus, if $\Hyp(k+1,m,K,M) \ge \Hyp(k,m,K-1,M)$, the result will follow immediately.
Indeed, if this inequality is true, as will be shown in the next paragraph, using it in Equation~\eqref{eq:app1} yields
\begin{equation*}
    \Hyp(k+1,m,K',M) < \Hyp(k+1,m,K,M).
\end{equation*}
Since $\Hyp(k,m,K,M)$ is strictly decreasing in $K$, we have that $K < K'$ as desired.

The proof of the inequality goes as follows.
Start from $\Hyp(k+1,m,K,M)$, and then subtract $\Hyp(k,m,K,M)$ from it and multiply by $\binom{M}{m}$ to obtain
\begin{align*}
    \binom{M}{m} (\Hyp(k+1&, m,K,M) - \Hyp(k,m,K,M))\\
    &= \binom{K}{k+1} \binom{M-K}{m-k-1}\\
    &= \frac{K}{k+1} \binom{K-1}{k} \binom{M-K}{m-k-1}\\
    &= \frac{K}{k+1} \binom{M}{m}\pr{ \Hyp(k,m,K-1,M) - \Hyp(k,m,K,M) },
\end{align*}
where we used Berkopec's identity at the last line.
Since $K \geq k+1$, the inequality must hold.
\end{proof}

\subsection{Computing the hypergeometric tail pseudo-inverse}
\label{app:computing_HypInv}

Lacking a closed form for the hypergeometric tail pseudo-inverse forces us to resort to an algorithm to evaluate it.
We give two algorithms with different running time for that purpose.

\subsubsection{A bisection algorithm}

The first algorithm we propose is a modification to the classical bisection algorithm, adapted to handle the discrete nature of the hypergeometric tail function, and is presented in Algorithm~\ref{algo:bisection_hyp_inv}.

\def\low{\textnormal{low}}
\def\high{\textnormal{high}}
\def\mid{\textnormal{mid}}
\renewcommand\AlCapFnt{\normalfont}

\begin{algorithm2e}[ht]
\caption{Bisection for the hypergeometric tail pseudo-inverse} \label{algo:bisection_hyp_inv}
\Input{Integers $k, m, M$ such that $0 \le k \le m \le M$, and $\delta \in (0, 1)$}
\Output{$\HypInv(k, m, \delta, M)$}
\BlankLine
$K_\low \assign k$ \;

$K_\high \assign M-m+k+1$ \;

$K_\mid \assign \left\lceil \frac{K_\low + K_\high}{2} \right\rceil$ \;

\While{$K_\high - K_\low > 1$}{
    \uIf{$\Hyp(k, m, K_\mid, M) > \delta$}{
        $K_\low \assign K_\mid$ \;
    }\Else{
        $K_\high \assign K_\mid$ \;
    }
    $K_\mid \assign \left\lceil \frac{K_\low + K_\high}{2} \right\rceil$ \;
    }
\Return{$K_\high$}

\end{algorithm2e}

The while loop will iterate about $\log_2(M-m)$ times.
Moreover, using Definition~\ref{def:app_hyp_tail} of the hypergeometric tail function to evaluate $\Hyp(k, m, K_\mid, M)$ requires summing $k$ terms, making the algorithm of order $\Theta(k\log(M-m))$, regardless of $\delta$.
This property makes the algorithm quick and reliable when $k$ is small.

\subsubsection{A linear search algorithm}

The bisection algorithm relied on the standard Definition~\ref{def:app_hyp_tail} of the hypergeometric tail.
However, one can use Berkopec's identity of Proposition~\ref{prop:berkopec} to obtain an algorithm with a different asymptotic rate.

The idea is to evaluate $\Hyp(k,m,K,M)$ incrementally using Berkopec's identity by adding one term of the sum at a time.
At each step, the sum increases until it exceeds the value of $\delta$.
To avoid computing large binomial coefficients, one can simply update the term to be added at each step.
This is what is done in Algorithm~\ref{algo:linear_hyp_inv}, where $B$ is the whole sum representing $\Hyp(k,m,K,M)$ and $b$ is the individual term.
The algorithm starts the sum of Proposition~\ref{prop:berkopec} at $J=K=M-m+k$ and decreases $K$ by one at each step, adding exactly one term to the sum.

\begin{algorithm2e}[h]
\caption{\noindent Linear search for the hypergeometric tail pseudo-inverse} \label{algo:linear_hyp_inv}
\Input{Integers $k, m, M$ such that $0 \le k \le m \le M$, and $\delta \in (0, 1)$}
\Output{$\HypInv(k, m, \delta, M)$}
\BlankLine
$K \assign M - m + k$ \;
$b \assign \binom{K}{k} \binom{M - K - 1}{M - K - m + k}$ \hspace{10pt} (Individual term of the sum) \;
$B \assign b$ \hspace{10pt} (Partial Berkopec sum) \;
\While{$B/ \binom{M}{m} \le \delta$ {\rm \bf and} $K > k$}{
    $b \assign b \frac{(K-k)(M-K)}{K(M-K-m+k+1)}$\hspace{10pt} (Update term of the sum)\;
    $B \assign B + b$ \hspace{10pt} (Update Berkopec sum) \;
    $K \assign K - 1$ \;
    }
\Return{$K + 1$}
\end{algorithm2e}

The running time of this algorithm varies between $\Omega(1)$ and $O(M-m)$ depending on $k$, $m$, $\delta$ and $M$.
Hence, depending on the situation, Algorithm~\ref{algo:linear_hyp_inv} might be faster than Algorithm~\ref{algo:bisection_hyp_inv}.
However, when $k$ is small and $M$ is large, which happens most of the time, the bisection algorithm is probably the fastest and safest choice.

Note that this algorithm makes the choice to begin the linear search from below the threshold $\delta$, but it is also possible to slightly modify the algorithm so that it approaches $\delta$ from above.
However, this version would be less relevant in our context since $\delta$ is (very) small in all our applications.

\clearpage

\section{Langford's binominal tail inversion theorem}
\label{app:langfords_binomial_tail_inversion_theorem}

In its 2005 tutorial on prediction theory, \citeauthor{langford05} explores and compares elementary generalization bounds in the classification setting.
In particular, he presents a ``unifying'' framework for \emph{test bounds}, \ie, bounds that can only be used to evaluate the performance of a classifier on a test set, but cannot be used to learn a classifier.
In order to do so, he considers the binomial tail function
\begin{equation}
    \Bin(m, k, p) \eqdef \sum_{j=0}^{k} \binom{m}{j} {p}^{\,j} (1-p)^{m-j},
\end{equation}
which can be interpreted as the probability of making $k$ errors or less on a sample of $m$ examples if the true probability of making an error is $p \in (0,1)$.
Then, because $\Bin(m,k,p)$ is a continuous one-to-one function with respect to parameter $p$, one can define the \emph{inverse} of the binomial tail function\footnote{For our convenience, this definition differs from \cite{langford05} in that it uses the minimum instead of the maximum; but because $\Bin$ is a continuous strictly decreasing function with respect to $p$, both definitions are equivalent.} as follows:
\begin{equation}\label{def:bin_tail_inv}
    \BinInv(m, k, \delta) \eqdef
    \begin{cases}
        \min \cb{ p \in (0,1): \Bin(m,k,p) \le \delta } & \textnormal{if } k < m,\\
        1 & \textnormal{if } k=m.
    \end{cases}
\end{equation}
We handle the case $k=m$ separately, as $\Bin(m,m,p)=1$ for all $p\in (0,1)$.

With this quantity at hand, \citeauthor{langford05} states the following theorem (Theorem 3.3 in the tutorial).
\begin{theorem}[Binomial tail inversion bound]\label{thm:bin_tail_inv}
For any distribution $\D$ on $\X \times \Y$, for any classifier $h\in\H$, any integer $m>0$ and for any $\delta\in(0,1)$, we have
\begin{equation}\label{eq:theorem_bintailinv}
    \prob{S\sim \D^m}{R_\D(h) \le \BinInv(m, m R_S(h), \delta)} > 1-\delta
\end{equation}
\end{theorem}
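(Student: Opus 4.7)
Fix the classifier $h$, write $p = R_\D(h)$, and let $K \eqdef mR_S(h)$. Since the $m$ examples of $S$ are i.i.d.\ and $h$ is fixed, $K$ is distributed as a Binomial random variable with parameters $m$ and $p$, so its cumulative distribution function is exactly $F(k) \eqdef \Bin(m,k,p)$. The entire proof then reduces to a careful probability integral transform argument adapted to the discrete setting.

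First I would translate the event of interest into one on $F(K)$. For $k < m$, the map $p' \mapsto \Bin(m,k,p')$ is continuous and strictly decreasing on $(0,1)$, so by the definition \eqref{def:bin_tail_inv} of $\BinInv$ we have the equivalence
\begin{equation*}
    p \le \BinInv(m,k,\delta) \quad\Longleftrightarrow\quad \Bin(m,k,p) \ge \delta.
\end{equation*}
The case $k=m$ is handled separately and trivially: $\BinInv(m,m,\delta)=1 \ge p$ always. Combining both cases, the bound \eqref{eq:theorem_bintailinv} fails if and only if $K<m$ and $F(K) < \delta$. Hence it suffices to prove
\begin{equation*}
    \prob{S\sim\D^m}{F(K) < \delta} < \delta.
\end{equation*}

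The second step is the discrete probability integral transform. Enumerate the possible values of $K$ in increasing order, $k_0 < k_1 < \cdots$, and set $v_i \eqdef F(k_i) = \Prob[K \le k_i]$, which is strictly increasing in $i$. Since the events $\{K = k_i\}$ form a partition and $\Prob[F(K) \le v_i] = \Prob[K \le k_i] = v_i$, one gets $\Prob[F(K) \le v_i]=v_i$ for every $i$. Now for any $\delta \in (0,1)$, let $i^\star$ be the largest index with $v_{i^\star} < \delta$ (or, if no such index exists, the probability we are bounding is $0 < \delta$ and we are done). Then $\{F(K) < \delta\} = \{F(K) \le v_{i^\star}\}$, and therefore
\begin{equation*}
    \prob{S}{F(K) < \delta} = v_{i^\star} < \delta,
\end{equation*}
as required, with the strict inequality being a direct consequence of how $i^\star$ was chosen.

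The main obstacle is the discreteness of $F$: unlike the continuous case, $F(K)$ is not uniform on $[0,1]$, and a naive appeal to the probability integral transform does not apply. The trick is that the pseudo-inverse in \eqref{def:bin_tail_inv} is defined with a $\min$ precisely so that the complement event aligns with $\{F(K) \le v_{i^\star}\}$ for the appropriate $v_{i^\star} < \delta$, giving a strict gap of $\delta - v_{i^\star} > 0$. Once this alignment is noticed, the rest of the argument is essentially bookkeeping on the staircase structure of the binomial CDF.
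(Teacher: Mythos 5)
Your proof is correct and follows essentially the same route as the paper's: you both condition on the number of errors $k$, exploit the monotonicity of $\BinInv$ in $k$ to identify the ``bad'' event with a truncated binomial tail, and observe that this truncated tail is strictly below $\delta$ by the definition of the pseudo-inverse. The main difference is presentational---you package the argument as a discrete probability integral transform via the equivalence $p\le\BinInv(m,k,\delta)\Leftrightarrow\Bin(m,k,p)\ge\delta$, whereas the paper works directly with the binomial-weighted sum of indicators---but the underlying calculation, the role of $\kappa(h)=i^\star$, and the source of the strict inequality are identical.
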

Finally, by upper bounding $\BinInv$ in different ways, one is able to recover several other known looser bounds.

\begin{proof}
In the following, we let $k_S(h) \eqdef m R_S(h)$ be the number of errors made by the classifier $h$ on the sample.
We start from the left-hand side of Equation~\eqref{eq:theorem_bintailinv} and we decompose the probability over the various values of $k$:
\begin{align*}
\prob{S}{R_\D(h) \leq \BinInv(m, k_S(h), \delta)}
	&= \exv{S}{ \Id{R_\D(h)\leq \BinInv(m, k_S(h), \delta)} }\\
	&= \sum_{k=0}^{m} \exv{S}{ \Id{R_\D(h)\leq \BinInv(m, k, \delta)} \Id{k_S(h) = k} }\\
	&= \sum_{k=0}^{m} \prob{S}{ k_S(h) = k } \Id{R_\D(h)\leq \BinInv(m, k, \delta)}.
\end{align*}
Here, $k_S(h)$ is a binomial random variable with probability $R_\D(h)$, and therefore we have that $\prob{S}{ k_S(h) = k } = \binom{m}{k} (R_\D(h))^k (1 - R_\D(h))^{m-k}$.
This implies that 
\begin{align*}
\prob{S}{R_\D(h) \leq \BinInv(m, k_S(h), \delta)}&\\
	&\hspace{-75pt}= \sum_{k=0}^{m} \binom{m}{k} (R_\D(h))^k (1 - R_\D(h))^{m-k}\, \Id{R_\D(h)\leq \BinInv(m, k, \delta)}\\
	&\hspace{-75pt}=1 - \sum_{k=0}^{m} \binom{m}{k} (R_\D(h))^k (1 - R_\D(h))^{m-k} \Id{R_\D(h)> \BinInv(m, k, \delta)},
\end{align*}
where we have used the fact that $\Bin(m, m, p) = 1$ for all values of $p$.

Because $\BinInv(m,k,\delta)$ is strictly increasing with $k$, the indicator function implies that the sum is truncated at the largest value of $k$ such that $R_\D(h) > \BinInv(m,k,\delta)$.
Thus, define $\kappa(h) \eqdef \max\{ k:R_\D(h) > \BinInv(m,k,\delta)\}$, so that the right-hand side can be written as
\begin{align*}
\prob{S}{R_\D(h) \leq \BinInv(m, k_S(h), \delta)}
	&=1 - \sum_{k=0}^{\kappa(h)} \binom{m}{k} (R_\D(h))^k (1 - R_\D(h))^{m-k}\\
	&=1 - \Bin(m, \kappa(h), R_\D(h)).
\end{align*}

Finally, because $\Bin(m,k,p)$ is strictly decreasing with $p$ and because we have that $R_\D(h) > \BinInv(m, \kappa(h), \delta)$, it follows that
\begin{equation*}
    \Bin(m, \kappa(h), R_\D(h)) < \Bin(m, \kappa(h), \BinInv(m, \kappa(h), \delta))=\delta
\end{equation*} by definition of the inverse of the binomial tail.
Therefore, we end up with
\begin{equation*}
\prob{S}{R_\D(h) \leq \BinInv(m, k_S(h), \delta)} > 1 - \delta,
\end{equation*}
as desired
\end{proof}

Remark that every step of the proof holds with equality: the theorem is thus ``essentially perfectly tight'', as written by \cite{langford05}.
Indeed, it is impossible to be tighter without further assumptions about the distribution $\D$ at hand.

We would like to point out two critical steps of the proof which allow the approach to be perfectly tight, as opposed to other ones.
The first is the observation that one can truncate the summation over $k$ using the indicator function, which eliminates superfluous terms leads to major improvements over other bounds.
The second, which follows from the first, is to notice that the truncated sum is nothing else than the distribution tail, which can be inverted quite easily with today's computing power.

This bound is only valid for a single classifier at a time, and therefore cannot be used to select the best classifier $h$ from a hypothesis class $\H$.
To mitigate this shortcoming, \cite{langford05} extends it to the case of a countable hypothesis class using the union bound.
However, machine learning algorithms usually consider uncountably infinite hypothesis classes, a challenge this present paper aims to tackle.

\clearpage

\section{In-depth analysis of the ghost sample trade-off}
\label{app:ghost_sample_size_tradeoff}

\emph{
This appendix is an extended version of Section~\ref{ssec:ghost_sample_size_trade-off} and aims to provide more information on the ghost sample trade-off.
}

In their derivations, \cite{vapnik98} and \cite{as-93} both set the ghost sample size $m'$ equal to the number $m$ of examples in the sample.
This choice might seem like a negligible detail at first, but upon further inspection, we show that the value of the bound can be improved significantly by choosing $m'$ more carefully.

We first investigate how the value of $m'$ can affect the bound $\epsilon$ of Theorem~\ref{thm:main} and what factors influence the optimal value of $m'$.
We then discuss how to select the best $m'$ and we examine the possible gain obtained by making this choice.

The upper bound $\epsilon$ of Theorem~\ref{thm:main} presents an interesting dependence in $m'$.
Indeed, increasing $m'$ makes the factor of $\frac{1}{m'}$ shrink, while $\HypInv(k,m,\delta,M)$ decreases when $M$ grows (keeping the other three parameters fixed), and increases when $\delta$ becomes smaller, as shown in Appendix~\ref{app:hyp_tail_inv}.
Since the confidence parameter $\delta$ is divided by the growth function $\tau_\H(m+m')$, which we assume is polynomial in $m+m'$ (for $m+m'$ large enough), we can expect that at some point, increasing $m'$ becomes less beneficial than decreasing it, thus presenting a non trivial trade-off.

Due to the discrete nature of the hypergeometric tail pseudo-inverse, finding an analytic expression for the optimal value for $m'$ is unfeasible.
We are therefore forced to resort to an empirical approach to analyze the factors that influence the minimum of the bound.
To obtain a numerical value for the bound, we assume a binary classification setting and we use Sauer-Shelah's lemma to upper bound the growth function as $\tau_\H(m+m') \le \big(\frac{e(m+m')}{d}\big)^d$, where $d$ is the VC dimension of $\H$.
Using this approximation, we have that the factors that can affect the optimal value of $m'$ are $m$, $k$, $d$ and $\delta$.
To have an idea of the behavior we are dealing with, we plot in Figure~\ref{fig:bound_comp} the bound $\epsilon$ of Theorem~\ref{thm:main} against the ghost sample size $m'$ for various
representative sets of parameters as a function of $k$, $m$, $\delta$ and $d$ along with the minimum of each curve indicated by a bullet point.
We use Algorithm~\ref{algo:bisection_hyp_inv} of Appendix~\ref{app:computing_HypInv} to compute the pseudo-inverse of the hypergeometric tail.

\begin{figure}[h!]
\centering
\begin{subfigure}[t]{0.485\textwidth}
    \centering
    \includegraphics[width=\textwidth]{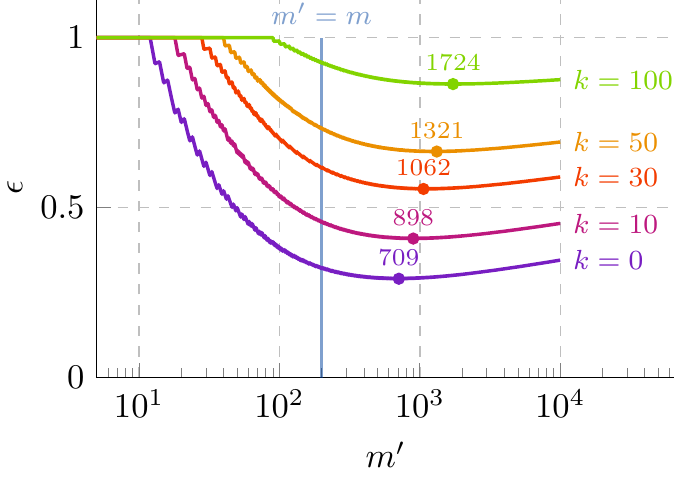}
    \caption{The influence of the number $k$ of errors on $\epsilon$. The optimal value of $m'$ increases with $k$.}
    \label{fig:comp_k}
\end{subfigure}\hfill
\begin{subfigure}[t]{0.485\textwidth}
    \centering
    \includegraphics[width=\textwidth]{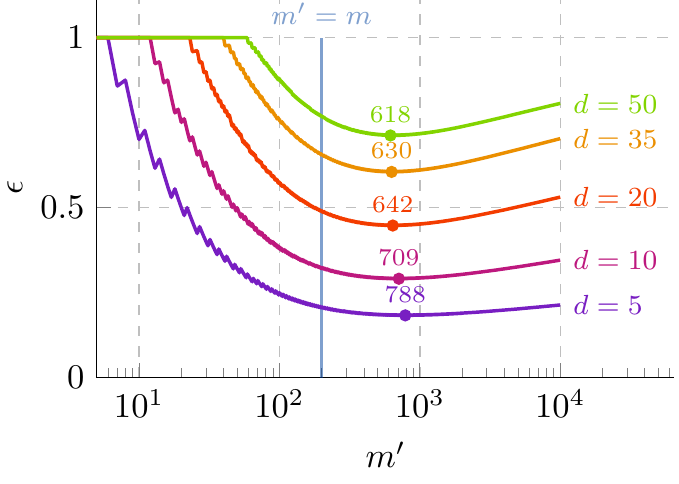}
    \caption{The influence of the VC dimension $d$ on $\epsilon$. The optimal value of $m'$ decreases with $d$.}
    \label{fig:comp_d}
\end{subfigure}

\vspace{10pt}

\begin{subfigure}[t]{0.485\textwidth}
    \centering
    \includegraphics[width=\textwidth]{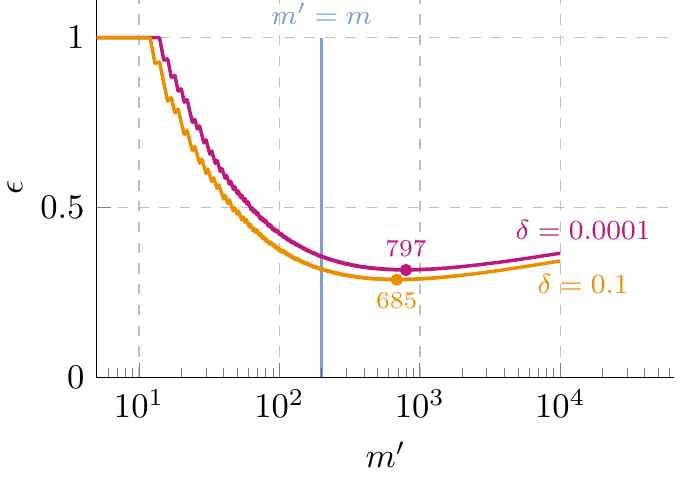}
    \caption{The influence of the confidence parameter $\delta$ on $\epsilon$. The optimal value of $m'$ decreases with $\delta$. Only two values of $\delta$ are shown to avoid clutter.}
    \label{fig:comp_delta}
\end{subfigure}\hfill
\begin{subfigure}[t]{0.485\textwidth}
    \centering
    \includegraphics[width=\textwidth]{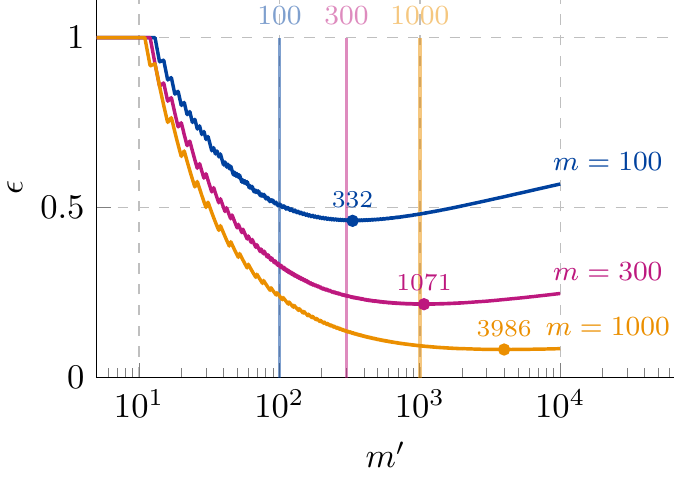}
    \caption{The influence of the number $m$ of examples on $\epsilon$. The optimal value of $m'$ increases with $m$.}
    \label{fig:comp_m}
\end{subfigure}
\caption{Upper bound $\epsilon$ as a function of $m'$, varying the four factors that influence its behavior.
The default values of the parameters are $k=0$, $d=10$, $\delta=0.05$ and $m=200$; overwritten values are specified at the right of each line plot.
The minimum of each line plot is denoted by a bullet point with the associated value of $m'$ close to it.}
\label{fig:bound_comp}
\end{figure}

Note that even though $\epsilon$ is a discrete function of $m'$, we have joined consecutive values of $\epsilon$ by line segments in Figure~\ref{fig:bound_comp} in order to ease the reading. 
Furthermore, the jagged look is due to the fact that the pseudo-inverse is a step function in its third parameter, with steps of uneven lengths.

Observe that, in all cases, the curve takes a U-shape, typical of trade-offs.
Furthermore, the position of the minimum is very far from $m$ (note that the axes have a logarithmic scale), and moves further with $k$ and $\delta$ increasing, and with $m$ and $d$ decreasing.
The parameters with the greatest impact on the optimal value of $m'$ are $m$ and $k$.

To illustrate this phenomenon, Figure~\ref{fig:mprime_best} presents the $m'$ that minimizes $\epsilon$ relative to the number $m$ of examples as a function of the risk for $\delta=0.05$, and $d=10$ or $d=35$.
While Figure~\ref{fig:mprime_best_d=10} hints at an approximately linear relation between $k$ and $m'_\textnormal{best}$, Figure~\ref{fig:mprime_best_d=35} points towards an exponential behavior, where $m'_\textnormal{best}$ can be multiple orders of magnitude greater than $m$ in some situations.

\begin{figure}[h!]
\centering
\begin{subfigure}[t]{0.485\textwidth}
    \centering
    \includegraphics[width=\textwidth]{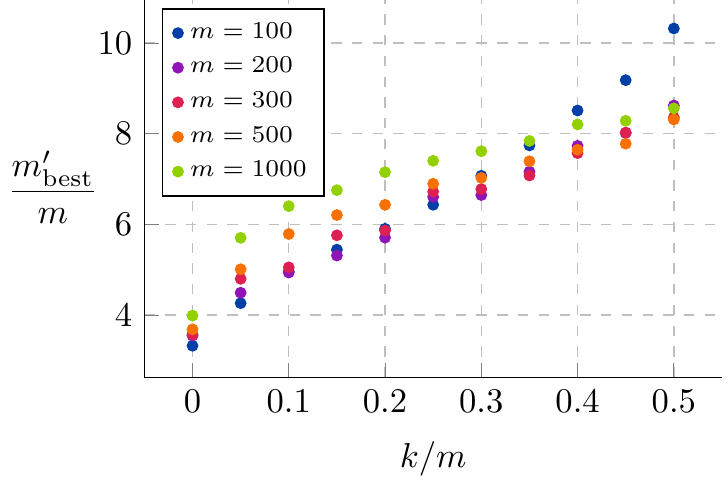}
    \caption{VC dimension set to $d=10$.}
    \label{fig:mprime_best_d=10}
\end{subfigure}\hfill
\begin{subfigure}[t]{0.485\textwidth}
    \centering
    \includegraphics[width=\textwidth]{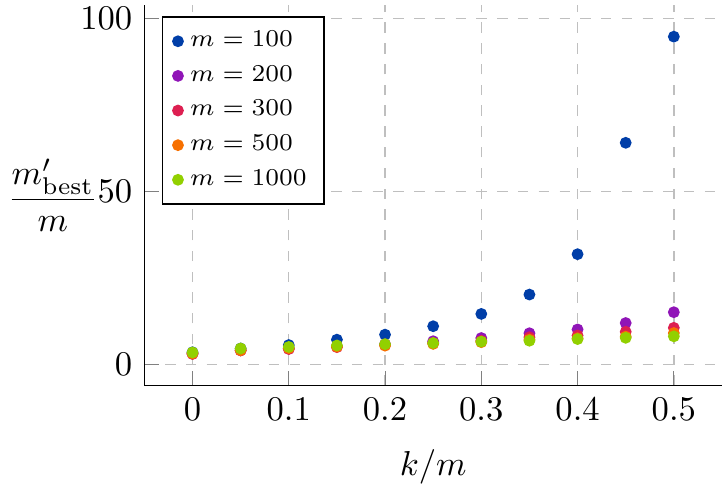}
    \caption{VC dimension set to $d=35$.}
    \label{fig:mprime_best_d=35}
\end{subfigure}
\caption{Value of $m'$ that minimizes the upper bound $\epsilon$ relative to the number $m$ of examples as a function of the risk $k/m$ for $\delta=0.05$ and two values of VC dimension, $d=10$ and $d=35$.}
\label{fig:mprime_best}
\end{figure}

The choice of the ghost sample size is subject to an important constraint we have not discussed yet.
Indeed, while it is true that the value of $m'$ that minimizes $\epsilon$ depends on the number $k$ of errors, the choice of $m'$ \emph{cannot} depend on the actual empirical risk.
As specified in the statement of Theorem~\ref{thm:main}, $m'$ must be chosen ahead of time, similarly to the number $m$ of examples, the confidence parameter $\delta$ and the hypothesis class $\H$.
In fact, letting $m'$ depend on $k$ would imply that we have a different ghost sample $S'$ for every number $k$ of errors, which is incompatible with the proof.
On the other hand, one can still choose an $m'$ for an \emph{anticipated} number of errors classifiers will make, as some sort of prior knowledge.
Of course, one might use the union bound so that the theorem holds simultaneously for all $m'$, but this could potentially lead to an increase in the bound. 

Since it is out of our current reach to get an exact analytic expression for the optimal value of $m'$, we can only find the optimal value of $m'$ numerically for a preemptively fixed set of parameters.
Unfortunately, $\epsilon$ being discrete and non-convex, one cannot use an efficient gradient descent or bisection algorithm to find the minimum: we have to compute the bound for all values of $m'$ in some interval and make a linear search to find the best value of $m'$.
In general, this optimization has to be made only once per experiment.
However, if for some reason one cannot afford to proceed with this computation, the heuristic $m'\approx 4m$ seems like a reasonable choice according to Figure~\ref{fig:mprime_best_d=10}---assuming an anticipated empirical risk around $5\%$.

As a final note, to give an idea of the gain that can be attained by choosing the best $m'$, we computed the optimal value of $m'$ for a large number of sets of parameters $(m, k/m, d, \delta)$.
We considered $m \in \cb{100, 200, 300, 500, 1000}$, $k/m \in \cb{0, 0.05, 0.1, \dots, 0.5}$, $d \in \cb{5, 10, 20, 35}$ and $\delta \in \cb{0.0001, 0.0025, 0.05, 0.1}$, for a total of 880 combinations. 
The mean relative gain (given by $1 - \epsilon(m'=m'_\text{best})/\epsilon(m'=m)$) was about $8.54\%$ with a standard deviation of $2.36\%$.
When restricting along the $k=0$ subspace, the gain went up to $10.29\%$, with a standard deviation of $1.84\%$,
which is a substantial improvement.

\clearpage

\section{The hypergeometric tail inversion relative deviation bound}
\label{app:hyptailinv_reldev}

In this section, we consider the relative deviation version of the hypergeometric tail inversion bound.
We first state and prove the theorem from which Corollary~\ref{coro:relative_deviation} stems, and we then study the impact of the ghost sample size to numerically compare this version of the bound with the original of Theorem~\ref{thm:main}.

\subsection{Proof of the hypergeometric tail inversion relative deviation bound}
\label{app:proof_reldev}
Theorem~\ref{thm:relative_deviation} states that the difference between the true risk and the empirical risk is bounded from above as a function of the true risk with high probability.

\begin{theorem}
\label{thm:relative_deviation}
Let $\H \subseteq \mathcal{Y}^\mathcal{X}$ be a hypothesis class.
Then for any distribution $\D$ on $\mathcal{X} \times \mathcal{Y}$, any confidence parameter $\delta \in (0, 1)$ and any integers $m$, $m'>0$, the following inequality holds:
\begin{equation*}
\prob{S\sim\D^m}{\forall h \in \H, R_\D(h) - R_S(h) \leq \eta(m R_S(h)) \sqrt{R_\D(h)}} > 1 - \delta,
\end{equation*}
where
\begin{equation*}
\eta(k) = \begin{cases}
    \max\cb{ \frac{1}{\sqrt{m'}}, \frac{m+m'}{m'} \frac{u(k) - \frac{k}{m}}{\sqrt{u(k)}}  } & \text{if } 0 \leq k < m \\
    0 & \text{if } k = m
\end{cases},
\end{equation*}
and
\begin{equation*}
    u(k) = \frac{1}{m+m'}\pr{ \HypInv\big(k, m, \textstyle\frac{\delta}{4 \tau_\H(m + m')}, m + m'\big) - 1}.
\end{equation*}
\end{theorem}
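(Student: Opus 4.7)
The plan is to parallel the proof of Theorem~\ref{thm:main} step by step, with the ghost sample trick adapted to account for the $\sqrt{R_\D(h)}$ scaling on the right-hand side. Writing $k_S(h) \eqdef m R_S(h)$, the goal is to show that the probability of the bad event $\exists h\in\H: R_\D(h) - R_S(h) > \eta(k_S(h))\sqrt{R_\D(h)}$ is less than $\delta$. As before, the case $k_S(h)=m$ is trivial (we set $\eta(m)=0$ so the bound is vacuous), so I focus on $k_S(h)<m$.

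For the ghost sample step, I would still apply \citeauthor{greenberg14}'s inequality $\prob{S'}{m'R_{S'}(h) \ge m'R_\D(h)} > 1/4$, valid whenever $R_\D(h) > 1/m'$. The role of the term $1/\sqrt{m'}$ inside the maximum defining $\eta(k)$ is precisely to guarantee that the bad event implies $R_\D(h) > 1/m'$, since under the bad event $R_\D(h) - R_S(h) > \sqrt{R_\D(h)/m'}$ forces $\sqrt{R_\D(h)} > 1/\sqrt{m'}$. Then, combining the bad event with the ghost sample inequality, and writing $v \eqdef k_T(h)/(m+m')$ for the total empirical error on $T=(S,S')$, the relation $m'R_{S'}(h) \ge m'R_\D(h)$ yields $v \ge (k_S(h) + m'R_\D(h))/(m+m')$, which together with $R_\D(h) - R_S(h) > \eta(k_S(h))\sqrt{R_\D(h)} \ge \eta(k_S(h))\sqrt{v}$ (using $v \le R_\D(h)$) leads, after rearrangement, to
\begin{equation*}
\frac{m+m'}{m'}\cdot\frac{v - k_S(h)/m}{\sqrt{v}} \ge \eta(k_S(h)).
\end{equation*}
By the definition of $\eta$ this inequality is equivalent to $v \ge u(k_S(h))$, i.e., $k_T(h) \ge \HypInv\bigl(k_S(h), m, \tfrac{\delta}{4\tau_\H(m+m')}, m+m'\bigr)$ (using integrality to bridge the ``$-1$'' in $u$).

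From this point onward the proof proceeds identically to that of Theorem~\ref{thm:main}. I swap the supremum with the expectation over $S'$, introduce the symmetrization over uniform subsets $\sigma \in \Sigma$ of size $m$ taken from $T$, decompose the event on the values $k = k_{T(\sigma)}(h)$, and recognize the inner expectation as the hypergeometric probability $\hyp(k, m, k_T(h), m+m')$. The truncation of the sum at $\kappa(h) = \max\{k : k_T(h) > (m+m')u(k)\}$ is justified by the monotonicity of $\HypInv$ in $k$ established in Lemma~\ref{lem:monotonicity_hyp_tail_inv}. Monotonicity of $\Hyp$ in $K$ (Lemma~\ref{lem:monotonicity_hyp_tail}) and the defining property of the pseudo-inverse give $\Hyp(\kappa, m, k_T(h), m+m') \le \delta/(4\tau_\H(m+m'))$. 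A final application of the union bound through $|\H|_T| \le \tau_\H(m+m')$ and the factor $4$ from the ghost sample step produces the desired $\delta$.

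The main obstacle, and the place where this proof genuinely diverges from that of Theorem~\ref{thm:main}, is the algebraic step that translates ``bad event plus $R_{S'}(h) \ge R_\D(h)$'' into the clean threshold $k_T(h) \ge \HypInv(\cdots)$. One must verify that the function $x \mapsto x - k/m - (u-k/m)\sqrt{x/u}$ is nonnegative precisely on $[u, \infty)$, which requires a careful monotonicity analysis and uses $u(k) > k/m$ (ensured by the monotonicity of $\HypInv$). This is also the reason Corollary~\ref{coro:relative_deviation} is slightly looser than Theorem~\ref{thm:main}: unlike the pessimistic case where $R_\D(h) > \epsilon$ yields $R_{S'}(h) > \epsilon$ directly, the relative deviation argument loses a factor through the intermediate inequality $\sqrt{v} \le \sqrt{R_\D(h)}$, which accounts for the use of a ``looser version of the result of Greenberg'' mentioned in Section~\ref{ssec:relative_deviation_bound_revisited}.
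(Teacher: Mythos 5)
Your overall strategy matches the paper's: adapt the ghost sample trick, symmetrize, unfold into a hypergeometric tail, and invert. You also correctly identify the quadratic-root analysis (the paper's factorization $(R_T - r_-)(R_T - r_+) > 0$ with $r_+ = u(k)$) needed to translate the post-symmetrization inequality into the threshold $k_T(h) \ge \HypInv(\cdots)$, and correctly note that $u(k) > k/m$ is needed there.

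However, there is a genuine gap in your ghost sample step. You claim that under the bad event and $R_{S'}(h) \ge R_\D(h)$, one has $v \eqdef k_T(h)/(m+m') \le R_\D(h)$, and use this to write $\sqrt{R_\D(h)} \ge \sqrt{v}$. This is false in general: $v$ is a convex combination of $R_S(h)$ and $R_{S'}(h)$, and while $R_S(h) < R_\D(h)$, we only know $R_{S'}(h) \ge R_\D(h)$, with no upper bound; for $m' \gg m$, $v \approx R_{S'}(h)$ can exceed $R_\D(h)$ by a lot. The information you actually get is a \emph{lower} bound on $v$, namely $v \ge v_0 \eqdef (m R_S(h) + m' R_\D(h))/(m+m')$, and it is $v_0$ (not $v$) that satisfies $v_0 < R_\D(h)$. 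To close the argument one must therefore observe that the function $w \mapsto (w - R_S(h))/\sqrt{w}$ is increasing for $w > 0$, so that $\frac{v - R_S(h)}{\sqrt{v}} \ge \frac{v_0 - R_S(h)}{\sqrt{v_0}} > \frac{m'}{m+m'}\,\eta(k_S(h))$, where the last step uses $v_0 < R_\D(h)$ and the bad event. The paper carries out exactly this analysis in a standalone lemma by showing that the function $f_{\alpha,\beta}(x,y) = (x-y)/\sqrt{\alpha x + \beta y}$ is increasing in $x$ and decreasing in $y$, which subsumes both the lower bound on $R_{S'}(h)$ and the upper bound on $R_S(h)$ in one monotonicity statement. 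Your proof needs this monotonicity observation to survive; as written, the ``$v \le R_\D(h)$'' shortcut is wrong and the rearrangement does not follow.

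A minor additional point: when you say ``$R_\D(h) - R_S(h) > \sqrt{R_\D(h)/m'}$ forces $\sqrt{R_\D(h)} > 1/\sqrt{m'}$'' to justify the Greenberg condition, you must also confirm ``$>$'' versus ``$\ge$'' in the Greenberg statement; the paper quotes Greenberg's lower-bound version as $R_\D(h) \ge 1/m'$ and so the strict $>$ you derive is more than sufficient, but it is worth being explicit that your $\max$ floor $1/\sqrt{m'}$ in $\eta$ is what gives $R_\D(h) > \eta(k)^2 \ge 1/m'$.
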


Notice that stating the theorem the way it is instead of in the form $\frac{R_\D(h) - R_S(h)}{\sqrt{R_\D(h)}} \le \eta(m R_S(h))$ avoids the problem of division by zero completely.
This shows that dealing with $R_\D(h) = 0$ is not as complicated as was argued by \cite{cortes2019relative}.
To prove the theorem, we will make use of the following lemma, which will be used for the ghost sample trick.

\begin{lemma}\label{lem:rel_dev}
Consider the setting of Theorem~\ref{thm:relative_deviation} and let $S$ and $S'$ be samples of sizes $m$ and $m'$ respectively such that $T \eqdef (S, S')$ is i.i.d.
Then, for any hypothesis $h\in\H$ and $\eta^2 \geq \frac{1}{m'}$, we have
\begin{equation*}
\Id{R_\D(h) - R_S(h) > \eta \sqrt{R_\D(h)}}
    < 4\prob{S'}{R_{S'}(h) - R_S(h) > \eta \sqrt{R_T(h)}}.
\end{equation*}
\end{lemma}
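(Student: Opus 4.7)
\emph{Proof plan.} The structure mirrors the usual ghost sample trick, but with the ordinary deviation replaced by a relative one. If $R_\D(h) - R_S(h) \le \eta\sqrt{R_\D(h)}$, the indicator on the left vanishes and the inequality is trivial, so assume henceforth that $R_\D(h) - R_S(h) > \eta\sqrt{R_\D(h)}$. It then suffices to exhibit an event $E \subseteq \{ R_{S'}(h) - R_S(h) > \eta\sqrt{R_T(h)}\}$ whose probability (under $S'$, conditional on $S$) strictly exceeds $1/4$. I will take $E = \{R_{S'}(h) \ge R_\D(h)\}$.

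First I will verify that $\prob{S'}{R_{S'}(h) \ge R_\D(h)} > 1/4$. For this I invoke the inequality~\eqref{eq:ghost_sample} of Greenberg--Mohri, which only requires $R_\D(h) > 1/m'$. Since $R_S(h) \ge 0$, the standing assumption yields $R_\D(h) > \eta\sqrt{R_\D(h)}$, hence $\sqrt{R_\D(h)} > \eta$, hence $R_\D(h) > \eta^2 \ge 1/m'$ by the hypothesis on $\eta$. So Greenberg--Mohri applies.

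Second, I will show that the events $\{R_\D(h) - R_S(h) > \eta\sqrt{R_\D(h)}\}$ and $\{R_{S'}(h) \ge R_\D(h)\}$ together force $R_{S'}(h) - R_S(h) > \eta\sqrt{R_T(h)}$. Writing $a = R_S(h)$, $b = R_{S'}(h)$, $p = R_\D(h)$, and noting that $R_T(h) = (ma + m'b)/(m+m')$, the target inequality $b - a > \eta\sqrt{R_T(h)}$ is equivalent (since $b \ge p > a$) to
\begin{equation*}
  (m+m')(b-a)^2 \;>\; \eta^2 (ma + m'b).
\end{equation*}
View the difference $f(b) \eqdef (m+m')(b-a)^2 - \eta^2(ma + m'b)$ as a function of $b$ on $[p,1]$. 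At $b = p$, the assumption $(p-a)^2 > \eta^2 p$ gives $(m+m')(p-a)^2 > (m+m')\eta^2 p \ge \eta^2(ma + m'p)$ because $p \ge a$, so $f(p) > 0$. Moreover $f'(b) = 2(m+m')(b-a) - \eta^2 m'$, and for $b \ge p$ one has $b - a \ge p - a > \eta\sqrt{p} > \eta \cdot \eta = \eta^2 \ge \eta^2 m'/(2(m+m'))$, so $f'(b) > 0$ throughout $[p,1]$. Thus $f(b) > 0$ for every $b \ge p$, as desired.

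Combining the two steps, $\prob{S'}{R_{S'}(h) - R_S(h) > \eta\sqrt{R_T(h)}} \ge \prob{S'}{R_{S'}(h) \ge R_\D(h)} > 1/4$, which is exactly the claimed inequality after multiplying by $4$. The only delicate point is the algebraic verification in the last paragraph; once one writes down the right sufficient event $\{R_{S'}(h) \ge R_\D(h)\}$ (rather than something stronger, which would ruin the $1/4$ lower bound), the inequalities fall into place essentially because $\eta^2 \ge 1/m'$ prevents the linear ``drag'' term $\eta^2 m'$ in $f'$ from ever dominating.
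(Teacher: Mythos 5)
Your proof is correct and follows the same overall route as the paper's: assume the event on the left holds, invoke the Greenberg--Mohri bound $\prob{S'}{R_{S'}(h)\ge R_\D(h)}>1/4$ (after verifying $R_\D(h)>\eta^2\ge 1/m'$), and then show that $R_{S'}(h)\ge R_\D(h)$ together with the standing assumption implies $R_{S'}(h)-R_S(h)>\eta\sqrt{R_T(h)}$. Where you differ is only in the last (algebraic) verification: the paper introduces the two-variable function $f_{\alpha,\beta}(x,y)=(x-y)/\sqrt{\alpha x+\beta y}$, proves it increasing in $x$ and decreasing in $y$ via partial derivatives, and then plugs in the endpoint values $x=R_\D(h)$ and $y=R_\D(h)-\eta\sqrt{R_\D(h)}$; you instead square the target into a quadratic in $b=R_{S'}(h)$ with $a=R_S(h)$ fixed, check positivity at $b=R_\D(h)$, and check positivity of the derivative on $[R_\D(h),1]$. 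Both are clean; yours is more elementary (no multivariate calculus), the paper's is slightly more conceptual and reusable. One small inaccuracy in your closing remark: the condition $\eta^2\ge 1/m'$ is not what makes $f'(b)>0$ — that step only uses $b-a>\eta^2$, which comes from $R_\D(h)>\eta^2$, itself a consequence of the standing assumption and $R_S(h)\ge 0$. The hypothesis $\eta^2\ge 1/m'$ is needed only to license the Greenberg--Mohri inequality. This mis-attribution does not affect the validity of the argument.
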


Note that similar results are offered by \cite{vapnik98}, \cite{as-93} and \cite{cortes2019relative}, but differ in that $m'$ is set equal to $m$.

\begin{proof} \hspace{-4pt}\textbf{of Lemma~\ref{lem:rel_dev}}
Assume $R_\D(h) - R_S(h) > \eta \sqrt{R_\D(h)}$ holds.
Then we need to show that
\begin{equation*}
    \frac{1}{4} < \prob{S'}{R_{S'}(h) - R_S(h) > \eta \sqrt{R_T(h)}}.
\end{equation*}
To do so, we will make use of the theorem of \cite{greenberg14} stating that \begin{equation}
    \frac{1}{4} < \prob{S'}{R_{S'}(h) \ge R_\D(h)},\label{eq:lemma_greenberg}
\end{equation}
provided that $R_\D(h) \ge \frac{1}{m'}$.
Let us first confirm that our situation permits the use of this result.
From our assumption, we have that $R_\D(h) - \eta \sqrt{R_\D(h)} > R_S(h) \ge 0$, which implies $R_\D(h) > \eta^2 \ge \frac{1}{m'}$ (from the definition of $\eta$ as stated in the lemma).

If we can show that we have
\begin{equation*}
    \prob{S'}{R_{S'}(h) \ge R_\D(h)} < \prob{S'}{R_{S'}(h) - R_S(h) > \eta \sqrt{R_T(h)}},
\end{equation*}
then the lemma will follow directly from \eqref{eq:lemma_greenberg}.
This inequality can be proven by showing that $R_{S'}(h) \ge R_\D(h)$ implies $R_{S'}(h) - R_S(h) > \eta \sqrt{R_T(h)}$.
Therefore, assuming $R_{S'}(h) \ge R_\D(h)$ holds, we will show that $\frac{R_{S'}(h) - R_S(h)}{\sqrt{R_T(h)}} > \eta$.

Consider the function
\begin{equation*}
    f_{\alpha, \beta}(x, y) \eqdef \frac{x - y}{\sqrt{\alpha x + \beta y}},
\end{equation*}
for $\alpha, \beta > 0$, $x > 0$ and $y \ge 0$.
Taking the partial derivatives with respect to $x$ and $y$, we have
\begin{equation*}
    \frac{\partial f_{\alpha, \beta}}{\partial x} = \frac{\alpha x + (\alpha+2\beta)y}{2(\alpha x + \beta y)^{3/2}} > 0
    \qquad \text{ and } \qquad
    \frac{\partial f_{\alpha, \beta}}{\partial y} = -\frac{(2\alpha + \beta) x + \beta y}{2(\alpha x + \beta y)^{3/2}} < 0.
\end{equation*}
This informs us that $f_{\alpha, \beta}(x,y)$ is increasing with $x$ and decreasing with $y$.

Applying this result to our original problem, we have
\begin{align*}
    \frac{R_{S'}(h) - R_S(h)}{\sqrt{R_T(h)}}
    &= \frac{R_{S'}(h) - R_S(h)}{\sqrt{\frac{m' R_{S'}(h) + m R_S(h)}{m+m'}}}\\
    &= f_{\frac{m'}{m+m'}, \frac{m}{m+m'}}(R_{S'}(h), R_S(h))\\
    &> f_{\frac{m'}{m+m'}, \frac{m}{m+m'}}(R_\D(h), R_\D(h)-\eta\sqrt{R_\D(h)})\\
    &= \frac{\eta \sqrt{R_\D(h)}}{\sqrt{ R_\D(h) - \frac{m}{m+m'}\eta\sqrt{R_\D(h)} }}\\
    &> \eta,
\end{align*}
where we have used both hypotheses to go from the second to the third line.
Note that the radicand at the fourth line is always greater than zero by our assumption that $R_\D(h)- \eta \sqrt{R_\D(h)} > 0$.
\end{proof}

We are now ready to prove the relative deviation bound.
Note that many steps of the proof are similar to the steps of our main theorem's proof.
As such, we do not delve into details and justifications here unless the proof differs.

\begin{proof}\hspace{0pt}\textbf{of Theorem~\ref{thm:relative_deviation}}
We will prove the result by proving that the opposite event occurs with probability less than or equal to $\delta$, \ie,
\begin{equation*}
\prob{S\sim\D^m}{\exists h \in \H, R_\D(h) - R_S(h) > \eta(m R_S(h)) \sqrt{R_\D(h)}} \leq \delta.
\end{equation*}

We begin with the ghost sample trick, using Lemma~\ref{lem:rel_dev} to replace the dependence on the true risk by the empirical risk of a second sample.
Notice that the definition of $\eta(k)$ in Theorem~\ref{thm:relative_deviation} makes the probability zero whenever $k=m$ and otherwise we have that $\eta(k)^2 \geq \frac{1}{m'}$ so that the requirements of the lemma are always satisfied.

Starting with the left-hand side, we have
\begin{align*}
&\prob{S\sim\D^m}{\exists h \in \H, R_\D(h) - R_S(h) > \eta(m R_S(h)) \sqrt{R_\D(h)}}\\
    &\hspace{40pt} = \exv{S}{\sup_{h\in\H} \Id{ R_\D(h) - R_S(h) > \eta(m R_S(h)) \sqrt{R_\D(h)} }}\\
    &\hspace{40pt} \le 4 \exv{S}{\sup_{h\in\H} \prob{S'}{R_{S'}(h) - R_S(h) > \eta(m R_S(h)) \sqrt{R_T(h)}} }\\
    &\hspace{40pt} \le 4 \exv{S, S'}{\sup_{h\in\H} \Id{R_{S'}(h) - R_S(h) > \eta(m R_S(h)) \sqrt{R_T(h)}} }\\
    &\hspace{40pt} = 4 \exv{T}{\max_{h\in\H|_T} \Id{R_{S'}(h) - R_S(h) > \eta(m R_S(h)) \sqrt{R_T(h)}} }\\
    &\hspace{40pt} = 4 \exv{T}{\max_{h\in\H|_T} \Id{R_{T}(h) - R_S(h) > \frac{m'}{m+m'} \eta(m R_S(h)) \sqrt{R_T(h)}} },
\end{align*}
where $\H|_T$ is the restriction of $\H$ to the sample $T$, as defined in the proof of Theorem~\ref{thm:main}, and at the last line we have used the relation $R_T(h) = \frac{m R_S(h) + m' R_{S'}(h)}{m+m'}$ to express the inequality in terms of $R_T(h)$.

We can now proceed to the symmetrization trick, using the same notation as before, letting $\sigma \in \Sigma \eqdef \{ \sigma \subseteq [m + m'] \: : \: \abs{\sigma} = m \}$ denote a set of $m$ indices, and defining $T(\sigma)$ as the examples of $T$ pointed by the indices of $\sigma$, such that $S=T([m])$.
Again assuming the uniform distribution over $\Sigma$, we have
\begin{align}
&\prob{S\sim\D^m}{\exists h \in \H, R_\D(h) - R_S(h) > \eta(m R_S(h)) \sqrt{R_\D(h)}}\nonumber\\
    &\hspace{20pt} \le 4 \exv{T}{\max_{h\in\H|_T} \Id{R_{T}(h) - R_{T([m])}(h) > \frac{m'}{m+m'} \eta(m R_S(h)) \sqrt{R_T(h)}} }\nonumber\\
    &\hspace{20pt} \le 4 \exv{T}{\sum_{h\in\H|_T} \Id{R_{T}(h) - R_{T([m])}(h) > \frac{m'}{m+m'} \eta(m R_S(h)) \sqrt{R_T(h)}} }\nonumber\\
    &\hspace{20pt} = 4 \exv{\sigma}{\exv{T}{\sum_{h\in\H|_T} \Id{R_{T}(h) - R_{T(\sigma)}(h) > \frac{m'}{m+m'} \eta(m R_S(h)) \sqrt{R_T(h)}} } }\nonumber\\
    &\hspace{20pt} = 4 \exv{\sigma}{\exv{T}{\sum_{h\in\H|_T} \sum_{k=0}^{m-1}\Id{R_{T}(h) - \frac{k}{m} > \frac{m'}{m+m'} \eta(k) \sqrt{R_T(h)}}\Id{ R_{T(\sigma)}(h) = \frac{k}{m}} } }\nonumber\\
    &\hspace{20pt} = 4 \exv{T}{\sum_{h\in\H|_T} \sum_{k=0}^{m-1}\Id{R_{T}(h) - \frac{k}{m} > \frac{m'}{m+m'} \eta(k) \sqrt{R_T(h)}} \prob{\sigma}{ R_{T(\sigma)}(h) = \frac{k}{m}} }.\label{eq:reldev5}
\end{align}

Before going further, let us take a closer look at the inequality inside the indicator function.
We will show that the expression of $\eta(k)$ of Theorem~\ref{thm:relative_deviation} implies
\begin{equation*}
    \Id{R_{T}(h)\hspace{-1pt} -\hspace{-1pt} \frac{k}{m} \!>\! \frac{m'}{m+m'} \eta(k) \sqrt{R_T(h)}}
    \hspace{-1pt}\le\hspace{-1pt} \Id{R_T(h) \!\ge\! \frac{1}{m+m'}\HypInv\big(k, m, \textstyle\frac{\delta}{4 \tau_\H(m + m')}, m + m'\big)\! }\!,
\end{equation*}
or in other words, that the statement inside the left indicator function implies the statement inside the right indicator function.

Therefore, assume that 
\begin{equation}\label{eq:reldev1}
    R_{T} - \frac{k}{m} > \frac{m'}{m+m'} \eta \sqrt{R_T}
\end{equation}
is true, where we mute dependencies to alleviate the notation.
As a first step, we transform the inequality into a quadratic form by squaring both sides to get
\begin{equation}\label{eq:reldev2}
    R_T^2 - R_T\pr{2\frac{k}{m}+\frac{m'^2}{(m+m')^2} \eta^2} + \frac{k^2}{m^2} > 0.
\end{equation}
According to the fundamental theorem of algebra, we can factor the left-hand side to obtain
\begin{equation}
   (R_T - r_-)(R_T - r_+) > 0,\label{eq:reldev3}
\end{equation}
where 
\begin{equation}\label{eq:reldev4}
    r_{\pm} = \frac{k}{m} + \frac{\eta^2}{2}\frac{m'^2}{(m+m')^2} \pr{ 1 \pm \sqrt{1 + \frac{4k}{m}\frac{(m+m')^2}{m'^2 \eta^2} } } 
\end{equation}
are the roots of the second degree polynomial on the left-hand side of Equation~\eqref{eq:reldev2}.
This polynomial is an upward parabola, hence Inequality~\eqref{eq:reldev3} is satisfied if $R_T < r_-$ or $R_T > r_+$.
However, one must reject the former case because $r_- < \frac{k}{m}$, which is in contradiction with the original Inequality~\eqref{eq:reldev1}.
Thus we have shown that Inequality~\eqref{eq:reldev1} is true if and only if $R_T > r_+$.

We now want to show that the choice of $\eta(k)$ produces the desired result.
To do so, let us write $r_+ = r_+(\eta)$ as a function of $\eta$ and observe that $r_+$ is increasing with $\eta$.
This can be seen by distributing $\eta$ inside the square root in Equation~\eqref{eq:reldev4}.
Using the definition of $\eta(k)$ in the statement of Theorem~\ref{thm:relative_deviation}, keeping in mind that $k\neq m$, the following series of inequalities holds.
\begin{align*}
    R_T(h)
    &> r_+(\eta(k))\\
    &= r_+\pr{\max\cb{ \textstyle \frac{1}{\sqrt{m'}},  \frac{m+m'}{m'}\frac{u(k) - \frac{k}{m}}{\sqrt{u(k)}}}}\\
    &= \max\cb{\textstyle  r_+\pr{\frac{1}{\sqrt{m'}}}, r_+\pr{ \frac{m+m'}{m'}\frac{u(k) - \frac{k}{m}}{\sqrt{u(k)}} } }\\
    &\ge r_+\pr{\textstyle  \frac{m+m'}{m'}\frac{u(k) - \frac{k}{m}}{\sqrt{u(k)}}}\\
    &= u(k)
\end{align*}
where the third line follows from the monotonic increase in $\eta$ of $r_+$, and the last line holds because $\eta(k) = \frac{m+m'}{m'}\frac{u(k) - \frac{k}{m}}{\sqrt{u(k)}}$ is a solution to the equation $r_+(\eta(k)) = u(k)$.
This can be seen by setting $r_+$ equal to $u(k)$ in Equation~\eqref{eq:reldev4} and then solving for $\eta$ (or even more simply by starting from Inequality~\eqref{eq:reldev1}).

We have gathered so far that Inequality~\eqref{eq:reldev1} implies that we have
\begin{equation*}
    R_T > u(k) = \frac{1}{m+m'}\pr{\HypInv\big(k, m, \textstyle\frac{\delta}{4 \tau_\H(m + m')}, m + m'\big)-1}.
\end{equation*}
To conclude, notice that both $(m+m')R_T$ and $\HypInv$ must be integers.
Hence, the strict inequality can be rewritten into the following equivalent inequality
\begin{equation*}
    R_T \ge \frac{1}{m+m'}\HypInv\big(k, m, \textstyle\frac{\delta}{4 \tau_\H(m + m')}, m + m'\big),
\end{equation*}
as desired.

Getting back to our original problem, the indicator function of Equation~\eqref{eq:reldev5} can be upper bounded with
\begin{equation}
    \Id{R_{T}(h)\ge \frac{1}{m+m'}\HypInv\big(k, m, \textstyle\frac{\delta}{4 \tau_\H(m + m')}, m + m'\big) }.\label{eq:reldev6}
\end{equation}
But since $\HypInv$ is an increasing function in $k$ as proven in Lemma~\ref{lem:monotonicity_hyp_tail_inv} of Appendix~\ref{app:hyp_tail_inv}, the indicator function will truncate the sum over $k$ at some point.
Let $\kappa(h)$ be the largest value of $k$ such that expression~\eqref{eq:reldev6} equals one.
Using this fact in Equation~\eqref{eq:reldev5}, we have 
\begin{align*}
&\prob{S\sim\D^m}{\exists h \in \H, R_\D(h) - R_S(h) > \eta(m R_S(h)) \sqrt{R_\D(h)}}\\
    &\hspace{40pt} < 4 \exv{T}{\sum_{h\in\H|_T} \sum_{k=0}^{\kappa(h)} \hyp(k,m,(m+m')R_T(h),m+m') }\\
    &\hspace{40pt} = 4 \exv{T}{\sum_{h\in\H|_T} \Hyp(\kappa(h),m,(m+m')R_T(h),m+m') }\\
    &\hspace{40pt} \le 4 \exv{T}{\sum_{h\in\H|_T} \Hyp(\kappa(h),m,\HypInv\big(\kappa(h), m, \textstyle\frac{\delta}{4 \tau_\H(m + m')}, m + m'\big),m+m') }\\
    &\hspace{40pt} \le 4 \exv{T}{\sum_{h\in\H|_T} \frac{\delta}{4 \tau_\H(m + m')} }\\
    &\hspace{40pt} \le \max_T \sum_{h\in\H|_T} \frac{\delta}{\tau_\H(m + m')}\\
    &\hspace{40pt} = \delta,
\end{align*}
where we made use of the fact that $\Hyp$ is decreasing in its third parameter as proven in Lemma~\ref{lem:monotonicity_hyp_tail} of Appendix~\ref{app:hyp_tail} at the fourth line, and we used the definition of the pseudo-inverse at the fifth line.
This concludes the proof.
\end{proof}

The proof of Theorem~\ref{thm:relative_deviation} closely follows that of Theorem~\ref{thm:main}.
In fact, almost all places where the bound is loosened are identical---except for the ghost sample trick.
Indeed, we replaced the result of \cite{greenberg14} by Lemma~\ref{lem:rel_dev} in the proof.
But Lemma~\ref{lem:rel_dev} relies on this former result in addition to introducing more looseness.
Therefore, one can only expect that Corollary~\ref{coro:relative_deviation} is less tight than our main theorem.
We confirm this phenomenon empirically in Section~\ref{app:numerical_comparison}.

\subsection{Ghost sample trade-off for the hypergeometric tail inversion relative deviation bound}
\label{app:ghost_sample_tradeoff_hti-rd}

In this section, we consider the ghost sample size trade-off for the relative deviation version (Corollary~\ref{coro:relative_deviation}) of the new bound.
We consider the exact same setup as in Section~\ref{ssec:ghost_sample_size_trade-off}, using Sauer-Shelah's lemma for binary classification, and we examine the influence of the parameters $k$, $m$, $\delta$ and $d$ on the optimal value of $m'$.
Figure~\ref{fig:bound_comp_reldev} reproduces Figure~\ref{fig:bound_comp} but for the relative deviation bound version.

\begin{figure}[h]
\centering
\begin{subfigure}[t]{0.485\textwidth}
    \centering
    \includegraphics[width=\textwidth]{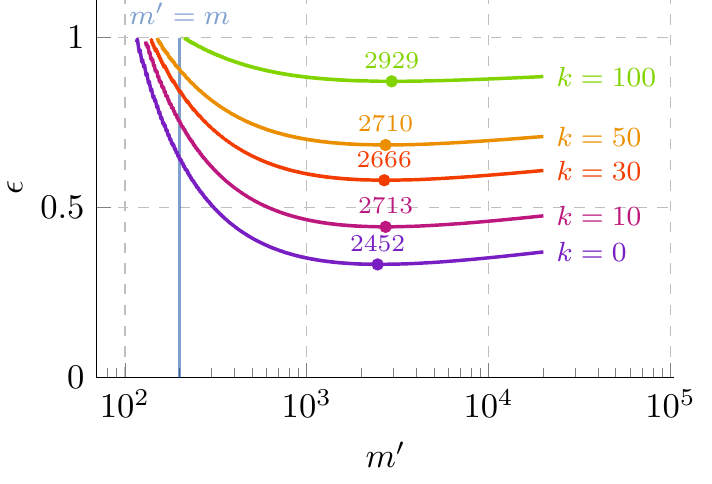}
    \caption{The influence of the number $k$ of errors on $\epsilon$.}
    \label{fig:comp_reldev_k}
\end{subfigure}\hfill
\begin{subfigure}[t]{0.485\textwidth}
    \centering
    \includegraphics[width=\textwidth]{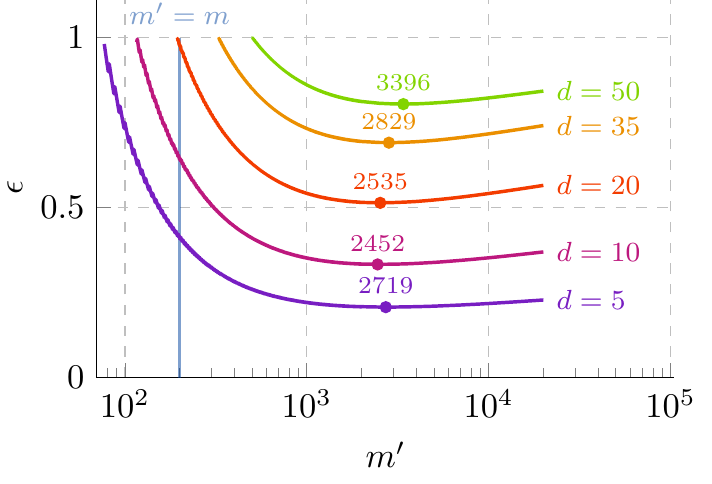}
    \caption{The influence of the VC dimension $d$ on $\epsilon$.}
    \label{fig:comp_reldev_d}
\end{subfigure}

\vspace{10pt}

\begin{subfigure}[t]{0.485\textwidth}
    \centering
    \includegraphics[width=\textwidth]{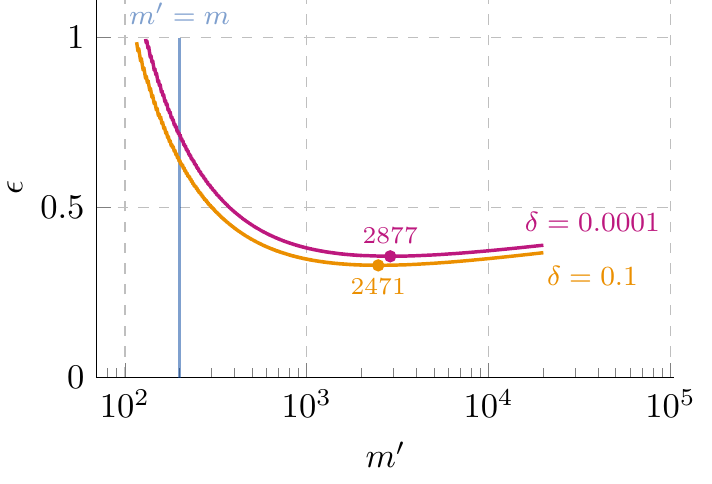}
    \caption{The influence of the confidence parameter $\delta$ on $\epsilon$. Only two values of $\delta$ are shown to avoid clutter.}
    \label{fig:comp_reldev_delta}
\end{subfigure}\hfill
\begin{subfigure}[t]{0.485\textwidth}
    \centering
    \includegraphics[width=\textwidth]{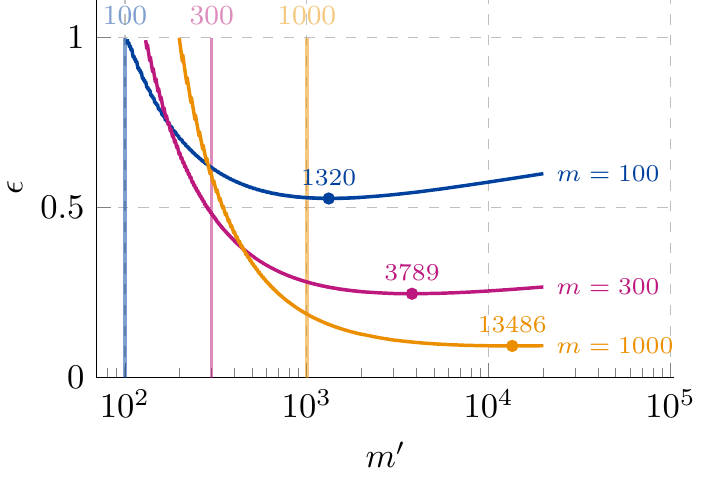}
    \caption{The influence of the number $m$ of examples on $\epsilon$.}
    \label{fig:comp_reldev_m}
\end{subfigure}
\caption{Upper bound of Corollary~\ref{coro:relative_deviation} as a function of $m'$ varying the four factors that influence its behavior.
The default values of the parameters are $k=0$, $d=10$, $\delta=0.05$ and $m=200$; overwritten values are specified at the right of each line plot.
The minimum of each line plot is denoted by a bullet point with the associated value of $m'$ close to it.}
\label{fig:bound_comp_reldev}
\end{figure}

At first glance, we can see that the optimal value of $m'$ is very large, larger than for the non-relative deviation bound.
Indeed, the minima of the curves seem to occur for $m'$ between $12m$ and $17m$, as opposed to between $3m$ and $8m$.
Furthermore, the gain to be made by selecting the optimal value of $m'$ is much larger than before.
This indicates that choosing $m'$ correctly is even more important for this new bound.

\subsection{Numerical comparison}
\label{app:numerical_comparison}

In this section, we give a numerical comparison between the hypergeometric tail inversion bound (Theorem~\ref{thm:main}, denoted by HTI) and its relative deviation counterpart (Corollary~\ref{coro:relative_deviation}, denoted by HTI-RD).
We assume binary classification so that we can use Sauer-Shelah's lemma.
We set the parameters equal to $m=1000$, $\delta=0.05$ and $d=20$.
Figure~\ref{fig:rd_bounds_comparison} presents both bounds for an $m'$ optimized for a risk of $0$ (left figure) and a risk of $0.8$ (right figure), as well as their non-optimized version where $m'=m$.
The HTI bound is presented in solid lines, while the HTI-RD is dashed.
The optimal values of $m'$ that were found for a risk of 0 are $m'=3611$ for the HTI bound and $m'=12,462$ for the HTI-RD bound.
When the risk is $0.8$, the values of $m'$ are $m'=12,851$ for the HTI bound and $m'=15,766$ for the HTI-RD bound.

Notice that in Figure~\ref{fig:rd_comp_risk_k=0}, the HTI\textsubscript{opti} bound is slightly better than the HTI-RD\textsubscript{opti} for low values of risk, up until the risk is about $0.5$, where the relative deviation bound becomes better than the HTI bound.
This may seem to contradict our argument that the HTI bound is tighter because less approximations are made.
However, this argument only holds when comparing the bounds for a risk near the risk for which the $m'$ were optimized.
In fact, when we optimize $m'$ for a high risk (for example for a risk of $0.8$), the HTI bound is always better than the HTI-RD, as can be seen in Figure~\ref{fig:rd_comp_risk_k=800}.
In any case, the difference between both bounds becomes negligible when they are optimized.

As a final observation one can make from Figure~\ref{fig:rd_bounds_comparison}, notice that the difference between the optimized and the non-optimized versions of the HTI-RD bound is considerably large, as we noted in the previous section.
This is particularly marked for low values of risk when compared with the HTI bound.

\begin{figure}[h]
\centering
\begin{subfigure}[t]{0.485\textwidth}
    \centering
    \includegraphics[width=\textwidth]{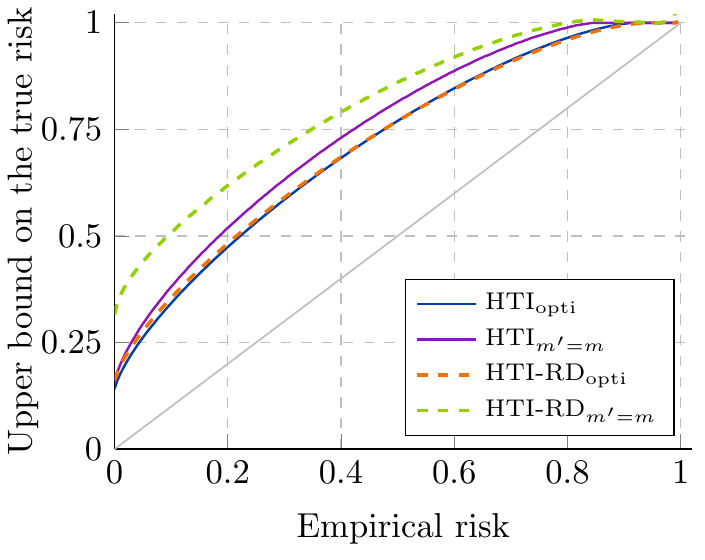}
    \caption{Bounds as functions of the empirical risk. The gray line corresponds to the empirical risk. Optimized bounds are for $k=0$.}
    \label{fig:rd_comp_risk_k=0}
\end{subfigure}\hfill
\begin{subfigure}[t]{0.485\textwidth}
    \centering
    \includegraphics[width=\textwidth]{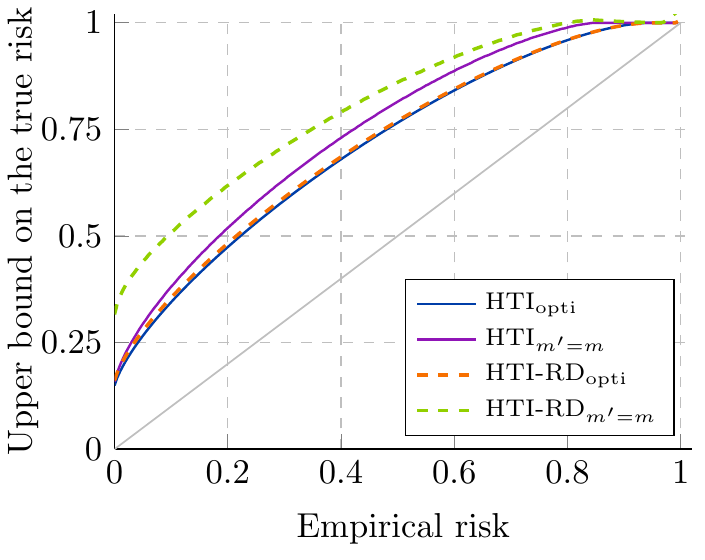}
    \caption{Bounds as functions of the empirical risk. The gray line corresponds to the empirical risk. Optimized bounds are for $k=800$.}
    \label{fig:rd_comp_risk_k=800}
\end{subfigure}
\caption{Comparison of the new hypergeometric tail inversion bound with its relative deviation version.
Bounds are plotted as functions of the risk for $m=1000$ examples with the confidence parameter $\delta$ equal to $0.05$ and the VC dimension set to $d=20$.
On the left, the optimized bounds are optimized for a risk of $0$, and on the right for a risk of $0.8$.}
\label{fig:rd_bounds_comparison}
\end{figure}

\clearpage

\section{Proof of the hypergeometric tail inversion margin bound}
\label{app:margin_bound}

In this Appendix, we prove Theorem~\ref{thm:margin_bound}, which bounds the risk of margin classifiers.
The bound is inspired from the work of \citet{anthonybartlett99}.

We assume the set-up of Section~\ref{ssec:margin_classifiers}, where classifiers are of the form 
\begin{equation*}
    h_\f(\x) = \argmax_{i\in[n]} f_i(\x),
\end{equation*}
with $\f \in \F \subseteq ([0,1]^n)^\X$ a vector-valued function.
We define the margin\footnote{Note that we have adapted the setting of \cite{anthonybartlett99} to apply to multiclass problems, and therefore the definition of the margin is different.}
of a classifier $h_\f$ as
\begin{equation*}
    \mu(\f(\x), y) \eqdef f_y(\x) - \max_{j\neq y} f_j(\x).
\end{equation*}

The mixed $L_p$-$L_\infty$ metric $\ell$ on $\reals^{n \times m}$ we consider is explicitly defined as
\begin{equation*}
    \ell(A, B) = \max_{i \in [m]} \norm{\mathbf{a}_i - \mathbf{b}_i}_p
\end{equation*}
where $\mathbf{a}_i$ and $\mathbf{b}_i$ are the $i$-th columns of $A$ and $B$, respectively, and $p > 0$.

We additionally define the true margin risk $R^\gamma_\D(\f)$ as the probability that the margin of a classifier is smaller than a margin parameter $\gamma$:
\begin{equation}\label{eq:def_margin_risk}
    R^\gamma_\D(\f) = \prob{(\x,y)}{\mu(\f(\x), y) < \gamma}.
\end{equation}
Since we have $\Id{ h(\x) \neq y } \le \Id{ \mu(\f(\x), y) \le 0 }$ (with strict inequality only on tie breaks), it implies that $R_\D(\f) \le R^\gamma_\D(\f)$ for any $\gamma > 0$.

Below, we repeat the theorem for convenience.

{\noindent\bf Theorem~\ref{thm:margin_bound}.}
\textit{
Let $\F \subseteq ([0,1]^n)^\X$ be a vector-valued function class, $h_\f(\x) = \argmax_{i\in[n]} f_i(\x)$ be a classifier and $R^\gamma_S(\f) \eqdef \exv{S}{\Id{\mu(\f(\x)),y) < \gamma}}$ be the empirical margin risk.
Then for any distribution $\D$ on $\X \times \Y$, any confidence parameter $\delta \in (0, 1)$, any margin $\gamma \in (0,1)$, any $p\ge1$ and any integers $m$ and $m' > 0$,
\begin{equation*}
\prob{S\sim \D^m}{ \forall \f \in \F, R_\D(h_\f) \leq \epsilon(m R^\gamma_S(\f), m, \delta) } > 1 - \delta,
\end{equation*}
where $\epsilon(k,m,\delta) = 1$ if $k=m$ and otherwise
\begin{equation*}
\epsilon(k, m, \delta) = \frac{1}{m'} \max\cb{ 1, \HypInv\big(k, \frac{\delta}{4 \N^p_\F\pr{\frac{\gamma}{2^{2 - 1/p}}, m + m'}}, m, m + m'\big) - 1 - k }.
\end{equation*}
}

Note that we ask for the metric of the cover to have a $p \ge 1$ rather than greater than 0.
It is possible to let $p \in (0,1)$, by considering a $\frac{\gamma}{2}$-cover instead of a $\gammacover$-cover, but this can only worsen the bound.
The details are discussed in a footnote of the proof.

Many steps of the proof are similar to the proof of Theorem~\ref{thm:main}.
Therefore, we shall not justify all steps and invite the reader to consult the aforementioned proof if needed.

\begin{proof}
As in the proof of Theorem~\ref{thm:main}, we will prove the equivalent statement:
\begin{equation*}
\prob{S}{ \exists \f \in \F, R_\D(h_\f) > \epsilon(m R^\gamma_S(\f), m, \delta) } \le \delta.
\end{equation*}

We first proceed with the ghost sample trick using the same result of \cite{greenberg14} as before.
It states that $\Id{R_\D > \epsilon(k)} < 4\prob{S'}{R_{S'}(h)> \epsilon(k)}$, provided that $\epsilon(k) \ge \frac{1}{m'}$, which holds according to our definition of $\epsilon$.
We have, muting dependencies,
\begin{align*}
\prob{S}{ \exists \f \in \F, R_\D(h_\f) > \epsilon(m R^\gamma_S(\f)) }
&= \exv{S}{ \sup_{\f\in\F} \Id{R_\D(h_\f) > \epsilon(m R^\gamma_S(\f))} }\\
&< \exv{S}{ \sup_{\f\in\F} 4 \prob{S'}{R_{S'}(\f) > \epsilon(m R^\gamma_S(\f))} }\\
&< 4 \exv{S,S'}{ \sup_{\f\in\F} \Id{R_{S'}(\f) > \epsilon(m R^\gamma_S(\f))} }.
\end{align*}

Next, we continue with the symmetrization trick, using the same notation as before, letting $\sigma \in \Sigma \eqdef \{ \sigma \subseteq [m + m'] \: : \: \abs{\sigma} = m \}$ denote a set of $m$ indices, and defining $T(\sigma)$ as the examples of $T$ with index in $\sigma$, which means $S=T([m])$ and $S'=T([m+m']-[m])$.
Again assuming the uniform distribution over $\Sigma$, we have
\begin{align*}
\prob{S}{ \exists \f \in \F, R_\D(h_\f) > \epsilon(m R^\gamma_S(\f)) }
&< 4 \exv{T}{ \sup_{\f\in\F} \Id{R_{T([m+m']-[m])}(\f) > \epsilon(m R^\gamma_{T([m])}(\f))} }\\
&= 4 \exv{T}{ \sup_{\f\in\F|_T} \Id{R_{T([m+m']-[m])}(\f) > \epsilon(m R^\gamma_{T([m])}(\f))} }\\
&= 4 \exv{\sigma}{\exv{T}{ \sup_{\f\in\F|_T} \Id{R_{T([m+m']-\sigma)}(\f) > \epsilon(m R^\gamma_{T(\sigma)}(\f))} }}.
\end{align*}

At this point, we must depart from the proof of our main theorem for two reasons: the supremum over $\F|_T$ might be infinite, and we deal with standard and $\gamma$-margin risks at the same time.

To solve the problem, we aim to replace the dependence on $\F|_T$ by a finite minimal $c$-cover $\fcover\subseteq \reals^{n\times \abs{T}}$, where $c=\frac{\gamma}{2^{2-1/p}}$.
By definition of a $c$-cover, we have that for each $\f \in \F|_T$, there exists $\fhat \in \fcover$ such that $\max_i \lVert\f(\x_i) - \fhat^i\rVert_p < c$.
To limit possible confusion, we denote by $\fhat^i$ the $n$-dimensional vector associated to example $i$ and by $\hat{f}_j^i$ the $j$-th component of $\fhat^i$.

We consider how the standard risk $R_{T([m+m']-\sigma)}(\f)$ is affected by this modification first.
Let $(\x_i, y_i)$ be any example in $T$ and define $\Delta^i \eqdef \f(\x_i) - \fhat^i$ to be the cover error of $\fhat^i$.
The condition $\argmax_j f_j(\x_i) \neq y_i$ implies $\mu(\f(\x_i), y_i) \leq 0$.
Furthermore, we have that $\max_{j\ne y_i} \hat{f}_j^i \ge \hat{f}_u^i$ is true for any $u\ne y_i$, and this holds in particular for $l \eqdef \argmax_{j \neq y_i} f_{j}(\x_i)$.
Therefore, assuming $\mu(\f(\x_i), y_i) = f_{y_i}(\x_i) - f_{l}(\x_i) \leq 0$, we have
\begin{align}
    \mu(\fhat^i, y_i)
    &= \hat{f}^i_{y_i} - \max_{j\ne y_i} \hat{f}^i_j\nonumber\\
    &\le \hat{f}^i_{y_i} - \hat{f}^i_l\nonumber\\
    &= (f_{y_i}(\x_i) - \Delta_{y_i}^i) - (f_{l}(\x_i) - \Delta_{l}^i)\nonumber\\
    &\le - \Delta_{y_i}^i + \Delta_{l}^i.\label{eq:margin_classif1}
\end{align}

We now need to upper bound the quantity $-\Delta_{y_i}^i + \Delta_{l}^i$ knowing that $\max_i\lVert\Delta^i\rVert_p<c$.
Let $\z = (z_1, z_2) \in \reals^2$ be some 2-dimensional vector.
Then we can simplify the problem by noting that
\begin{align*}
    -\Delta_{y_i}^i + \Delta_{l}^i < \sup_{\lVert\z\rVert_p < c} -z_1 + z_2.
\end{align*}
Notice that the strict inequality comes from the observation that no $\z \in \reals^2$ with $\norm{\z}_p < c$ can achieve the supremum.

One can use the constraint $\norm{\z}_p < c$ to remove one degree of freedom from the optimization problem.
Since we want to maximize $-z_1 + z_2$, we can assume without loss of generality that $z_1 \le 0$ and $z_2 \ge 0$ (the constraint $\norm{\z}_p < c$ affords us this assumption).
We have
\begin{equation*}
    c^p > \norm{\z}_p^p = (-z_1)^p + z_2^p.
\end{equation*}
Isolating $-z_1$ and substituting it in the previous inequality, we get
\begin{equation*}
    \sup_{\lVert\z\rVert_p < c} -z_1 + z_2 \le \max_{z_2} (c^p-z_2^p)^{1/p} + z_2.
\end{equation*}
Differentiating the right-hand side with respect to $z_2$ and equating to $0$, we find that the maximum occurs for $z_2 = 2^{-1/p} c$.
\footnote{Note that this is a maximum for $p \ge 1$ only; it is a minimum for $p \in (0,1)$.
For $p \in (0,1)$, the maximum occurs at $z_2=\frac{\gamma}{2}$ and $z_1=0$.
This implies that, for the proof to carry over, the minimal cover of $\F|_T$ should be a $\frac{\gamma}{2}$-cover instead of a $\gammacover$-cover, the same as when $p=1$.
Therefore, there is no gain to be made by choosing $p<1$.}
This yields, using the definition of $c$,
\begin{equation*}
\max_{z_2} (c^p-z_2^p)^{1/p} + z_2 = 2^{1 - 1/p} c = \frac{\gamma}{2}.
\end{equation*}

Using this result in our original Equation~\eqref{eq:margin_classif1}, we conclude that when $\argmax_j f_j(\x_i) \neq y_i$ is true, then we must have $\mu(\fhat^i,y_i) < \frac{\gamma}{2}$.
Therefore, according to the margin risk definition~\eqref{eq:def_margin_risk}, we have
\begin{equation*}
    R_{T([m+m']-\sigma)}(\f) \le R^{\frac{\gamma}{2}}_{T([m+m']-\sigma)}(\fhat).
\end{equation*}

Next, to address the margin risk $R^\gamma_{T(\sigma)}(\f)$, we instead assume that $\mu(\fhat_i, y_i) < \frac{\gamma}{2}$ and we define $l \eqdef \argmax_{j \ne y_i} \hat{f}_j^i$.
Using the same reasoning as before, we have
\begin{align*}
    \mu(\f(\x_i), y_i)
    &= f_{y_i}(\x_i) - \max_{j\ne y_i} f_j(\x_i)\\
    &\le f_{y_i}(\x_i) - f_l(\x_i)\\
    &= (\hat{f}_{y_i}^i + \Delta_{y_i}^i) - (\hat{f}_{l}^i + \Delta_{l}^i)\\
    &< \frac{\gamma}{2} + \Delta_{y_i}^i - \Delta_{l}^i\\
    &< \frac{\gamma}{2} + \sup_{\lVert \z \rVert_p < c}(z_1-z_2)\\
    &= \gamma. 
\end{align*}
This shows that $\Id{\mu(\fhat_i, y_i) < \frac{\gamma}{2}} \le \Id{\mu(\f(\x_i), y_i) < \gamma}$, and so we have $R^\frac{\gamma}{2}_{T(\sigma)}(\fhat) \le R^\gamma_{T(\sigma)}(\f)$.

Putting everything back together with the fact that $\epsilon(k)$ is increasing with $k$, as discussed in the proof of the main theorem, we end up with
\begin{equation*}
    R^{\frac{\gamma}{2}}_{T([m+m']-\sigma)}(\fhat) \ge R_{T([m+m']-\sigma)}(\f) > \epsilon(m R^\gamma_{T(\sigma)}(\f)) \ge \epsilon(R^\frac{\gamma}{2}_{T(\sigma)}(\fhat)),
\end{equation*}
which allows us to write
\begin{align*}
\prob{S}{ \exists \f \in \F, R_\D(h_\f) > \epsilon(m R^\gamma_S(\f)) }< 4 \exv{\sigma}{\exv{T}{ \max_{\fhat \in \fcover} \Id{R^{\frac{\gamma}{2}}_{T([m+m']-\sigma)}(\fhat) > \epsilon(m R^\frac{\gamma}{2}_{T(\sigma)}(\fhat)) } }}.
\end{align*}

The remainder of the proof is then identical to the proof of Theorem~\ref{thm:main} with the exception that the empirical risks are replaced by margin risks.
Denoting $\abs{T} \cdot R^{\frac{\gamma}{2}}_T(\f)$ by $k^{\frac{\gamma}{2}}_T(\f)$, it goes as follows.

\begin{align*}
&\prob{S}{ \exists \f \in \F, R_\D(h_\f) > \epsilon(m R^\gamma_S(\f)) }\\
&\hspace{40pt}< 4 \exv{\sigma}{\exv{T}{ \max_{\fhat \in \fcover} \Id{k^{\frac{\gamma}{2}}_{T([m+m']-\sigma)}(\fhat) > m' \epsilon(k^\frac{\gamma}{2}_{T(\sigma)}(\fhat)) } }}\\
&\hspace{40pt}< 4 \exv{\sigma}{\exv{T}{ \sum_{\fhat \in \fcover} \Id{k^{\frac{\gamma}{2}}_{T([m+m']-\sigma)}(\fhat) > m' \epsilon(k^\frac{\gamma}{2}_{T(\sigma)}(\fhat)) } }}\\
&\hspace{40pt}= 4 \exv{\sigma}{\exv{T}{ \sum_{\fhat \in \fcover} \sum_{k=0}^{m-1} \Id{k^{\frac{\gamma}{2}}_{T([m+m']-\sigma)}(\fhat) > m'\epsilon(k) } \Id{k^\frac{\gamma}{2}_{T(\sigma)}(\fhat) = k} }}\\
&\hspace{40pt}= 4 \exv{\sigma}{\exv{T}{ \sum_{\fhat \in \fcover} \sum_{k=0}^{m-1} \Id{k^{\frac{\gamma}{2}}_{T}(\fhat) > m'\epsilon(k) + k } \Id{k^\frac{\gamma}{2}_{T(\sigma)}(\fhat) = k} }}\\
&\hspace{40pt}= 4 \exv{T}{ \sum_{\fhat \in \fcover} \sum_{k=0}^{m-1} \Id{k^{\frac{\gamma}{2}}_{T}(\fhat) > m'\epsilon(k) + k } \hyp\big(k,m, k^\frac{\gamma}{2}_{T}(\fhat), m+m'\big) }\\
\end{align*}
Defining $\kappa(\fhat) \eqdef \max\cb{k : k^{\frac{\gamma}{2}}_{T}(\fhat) > m'\epsilon(k) + k}$, the last expression becomes
\begin{align*}
&\prob{S}{ \exists \f \in \F, R_\D(h_\f) > \epsilon(m R^\gamma_S(\f)) }\\
&\hspace{40pt}< 4 \exv{T}{ \sum_{\fhat \in \fcover} \sum_{k=0}^{\kappa(\fhat)}  \hyp\big(k,m, k^\frac{\gamma}{2}_{T}(\fhat), m+m'\big) }\\
&\hspace{40pt}= 4 \exv{T}{ \sum_{\fhat \in \fcover} \Hyp\big(\kappa(\fhat),m, k^\frac{\gamma}{2}_{T}(\fhat), m+m'\big) }\\
&\hspace{40pt}\le 4 \exv{T}{ \sum_{\fhat \in \fcover} \Hyp\big(\kappa(\fhat),m, \HypInv\big(\kappa(\fhat), \textstyle \frac{\delta}{4 \N_\F\pr{\frac{\gamma}{2^{2 - 1/p}}, m + m'}}, m, m + m'\big), m+m'\big) }\\
&\hspace{40pt}\le 4 \exv{T}{ \sum_{\fhat \in \fcover} \textstyle \frac{\delta}{4 \N_\F\pr{\frac{\gamma}{2^{2 - 1/p}}, m + m'}} }\\
&\hspace{40pt}< \max_T \sum_{\fhat \in \fcover} \textstyle \frac{\delta}{ \N_\F\pr{\frac{\gamma}{2^{2 - 1/p}}, m + m'}}\\
&\hspace{40pt}< \delta.
\end{align*}
This concludes the proof.
\end{proof}

\clearpage
\section{The hypergeometric tail lower bound}
\label{app:lower_bound}

In this appendix, we prove the lower bound, and then we compare it to the upper bound.

\subsection{Proof of the hypergeometric tail lower bound}
\label{app:proof_lower_bound}

In this section, we prove Theorem~\ref{thm:lower_bound}.
For convenience, we reproduce below the statement of the theorem.

\noindent\textbf{Theorem~\ref{thm:lower_bound}}~(Hypergeometric tail inversion lower bound)\textbf{.}
\textit{
Let $\H \subseteq \mathcal{Y}^\mathcal{X}$ be a hypothesis class.
Then for any distribution $\D$ on $\mathcal{X} \times \mathcal{Y}$, any confidence parameter $\delta \in (0, 1)$ and any integers $m, m'>0$, the following inequality holds:
\begin{equation*}
\prob{S\sim\D^m}{\forall h \in \H, R_\D(h) \leq \epsilon(m R_S(h), m, \delta)} > 1 - \delta,
\end{equation*}
where $\epsilon(k,m,\delta) = 0$ if $k=0$ and otherwise
\begin{equation*}
\epsilon(k, m, \delta) = \frac{1}{m'} \min\cb{ m'-1,\; \HypInvLower\big(k-1, m, 1- \textstyle\frac{\delta}{4 \tau_\H(m + m')}, m + m'\big) + 1 - k }.
\end{equation*}
}

The proof being fairly similar to that of Theorem~\ref{thm:main}, we do not give a thorough explanation of steps common to both proofs, and we focus of the differences instead.

\begin{proof}
As in the proof of Theorem~\ref{thm:main}, we shall prove that the opposite event occurs with probability less than or equal to $\delta$, \ie, we want to have
\begin{equation*}
\prob{S}{\exists h \in \H, R_\D(h) < \epsilon(k_S(h))} \le \delta,
\end{equation*}
where we used $k_S(h) = m R_S(h)$.

To apply the ghost sample trick, the result of \citet{greenberg14} used in the proof of the upper bound cannot be used; however, they also provide a similar result which holds for our current needs.
It states that, for all $h\in\H$,
\begin{equation*}
    \prob{S'}{R_{S'}(h) \le R_\D(h)} > \frac{1}{4},
\end{equation*}
as long as $R_\D(h) < \frac{m'-1}{m'}$ holds, a condition satisfied by the definition of $\epsilon$.
Following the same train of thoughts as in the proof of Theorem~\ref{thm:main}, this inequality implies
\begin{align*}
\prob{S}{\exists h \in \H, R_\D(h) < \epsilon(k_S(h))}
&= \exv{S}{\sup_{h\in\H} \Id{R_\D(h) < \epsilon(k_S(h))} }\\
&< 4 \exv{S}{\sup_{h\in\H} \prob{S'}{k_{S'}(h) < m'\epsilon(k_S(h))} }\\
&\le 4 \exv{S,S'}{\sup_{h\in\H} \Id{k_{S'}(h) < m'\epsilon(k_S(h))} }\\
\end{align*}

Next, let $T=(S,S')$ be an i.i.d. sample and let $\H|_T$ be the restriction of $\H$ to $T$.
Using the same notation as before, we have
\begin{align*}
\prob{S}{\exists h \in \H, R_\D(h) < \epsilon(k_S(h))}
&< 4 \exv{T}{\sup_{h\in\H} \Id{k_{T([m+m']-[m])}(h) < m'\epsilon(k_{T([m])}(h))} }\\
&= 4 \exv{T}{\max_{h\in\H|_T} \Id{k_{T}(h) < m'\epsilon(k_{T([m])}(h)) +k_{T([m])}(h) } }\\
&= 4 \exv{T}{\max_{h\in\H|_T} \sum_{k=1}^m \Id{k_{T}(h) < m'\epsilon(k) +k } \Id{k_{T([m])}(h) = k} }\\
&\le 4 \exv{T}{\sum_{h\in\H|_T} \sum_{k=1}^m \Id{k_{T}(h) < m'\epsilon(k) +k } \Id{k_{T([m])}(h) = k} }.
\end{align*}
Note that here the sum over $k$ goes from $1$ to $m$ because the term $k=0$ vanishes since $\epsilon(0)=0$.

For the symmetrization trick, let $\sigma \in \Sigma$ be a combination of $m$ indices chosen among $[m+m']$ as in the proof for the upper bound.
Assuming the uniform distribution over $\Sigma$, we have
\begin{align*}
&\prob{S}{\exists h \in \H, R_\D(h) < \epsilon(k_S(h))}\\
&\hspace{50pt}< 4 \exv{T}{\sum_{h\in\H|_T} \sum_{k=1}^m \Id{k_{T}(h) < m'\epsilon(k) +k } \exv{\sigma}{\Id{k_{T(\sigma)}(h) = k}} }\\
&\hspace{50pt}= 4 \exv{T}{\sum_{h\in\H|_T} \sum_{k=1}^m \Id{k_{T}(h) < m'\epsilon(k) +k } \hyp(k,m,k_T(h),m+m')}.
\end{align*}
Now, define $\kappa(h) \eqdef \min \cb{ k \in [m] : k_{T}(h) < m'\epsilon(k) +k }$.
Because $m'\epsilon(k)+k$ is an increasing function of $k$, this allows us to write
\begin{align}
\prob{S}{\exists h \in \H, R_\D(h) < \epsilon(k_S(h))}\nonumber
&< 4 \exv{T}{\sum_{h\in\H|_T} \sum_{k=\kappa(h)}^m \hyp(k,m,k_T(h),m+m')}\\
&= 4 \exv{T}{\sum_{h\in\H|_T} 1 - \Hyp(\kappa(h)-1,m,k_T(h), m+m')}.\label{eq:lowerbound1}
\end{align}
On the other hand, the definition of $\kappa(h)$ implies that we have (using the definition of $\epsilon$)
\begin{equation*}
k_T(h) \le \HypInvLower\big(\kappa(h)-1, m, \textstyle 1-\frac{\delta}{4 \tau_\H(m+m')}, m+m'\big).
\end{equation*}
Moreover, $\Hyp(k, K, m, M)$ being decreasing with $K$ as shown in Lemma~\ref{lem:monotonicity_hyp_tail}, implies
\begin{align*}
    &1 - \Hyp(\kappa(h)-1, m, k_T(h), m+m')\\
    &\hspace{50pt}\le 1 - \Hyp(\kappa(h)-1, m, \HypInvLower\big(\kappa(h)-1, m, \textstyle 1-\frac{\delta}{4 \tau_\H(m+m')}, m+m'\big), m+m')\\
    &\hspace{50pt}\le 1 - \pr{\textstyle 1-\frac{\delta}{4 \tau_\H(m+m')}}\\
    &\hspace{50pt}= \frac{\delta}{4 \tau_\H(m+m')},
\end{align*}
where the third line holds by definition of the lower pseudo-inverse.

Substituting this result in Equation~\eqref{eq:lowerbound1} yields
\begin{align*}
\prob{S}{\exists h \in \H, R_\D(h) < \epsilon(k_S(h))}
< 4 \exv{T}{\sum_{h\in\H|_T} \frac{\delta}{4 \tau_\H(m+m')}}
< \frac{\delta}{\tau_\H(m+m')} \max_T \Big|\H|_T\Big|
= \delta,
\end{align*}
which concludes the proof.
\end{proof}

\subsection{Numerical comparison of the upper bound with the lower bound}
\label{app:comp_upper_lower}

The lower bound of Theorem~\ref{thm:lower_bound} takes a very similar form to that of the upper bound of Theorem~\ref{thm:main}.
In fact, it is possible to show that they are perfectly symmetric, as we have $\epsilon_\textnormal{upper}(k) = 1 - \epsilon_\textnormal{lower}(m-k)$.
One can observe in Figure~\ref{fig:lower_upper_bounds_comp} that the lower bound is the same as the upper bound reflected simultaneously on both axis.

\begin{figure}[h]
    \centering
    \includegraphics[width=.5\textwidth]{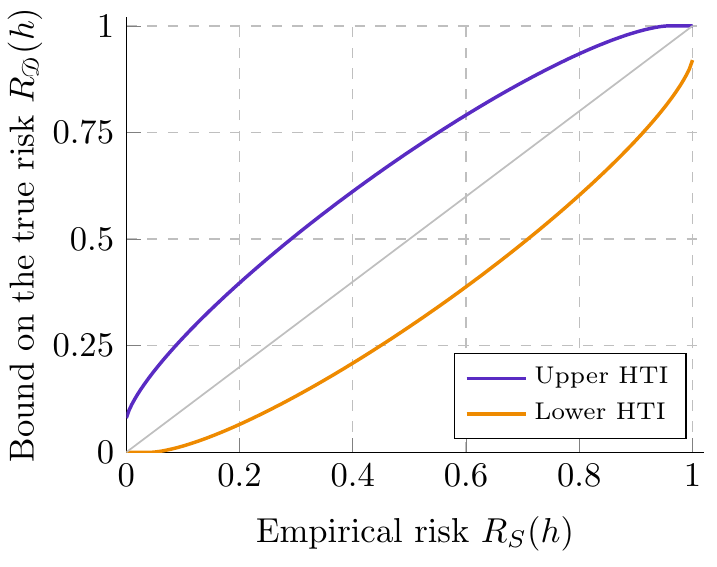}
    \caption{Upper and lower bounds on the true risk as functions of the empirical risk for $m=2000$ examples and a VC dimension of $d=20$. The confidence parameter is $\delta=0.05$.}
    \label{fig:lower_upper_bounds_comp}
\end{figure}

\clearpage

\section{Further numerical comparisons}
\label{app:further_numerical_comparisons}

In this Appendix, we compare the new hypergeometric bound with the other VC bounds for other sets of parameters to give a better overview of the situations for various contexts.
We also discuss sample compression bounds, and how they compare to the new bound.

\subsection{Other sets of parameters}
\label{ssec:other_numerical_settings}

In Section~\ref{sec:numerical_comparison}, we have compared the new hypergeometric tail inversion (HTI) bound to Catoni's Theorem 4.6 (C4.6), to Vapnik's pessimistic (VP) bound, to Vapnik's relative deviation (VRD) bound, and to Lugosi's chaining bound, as functions of the empirical risk, the VC dimension and the sample size.
While the parameters chosen to plot each bound in these figures may represent a typical machine learning problem, and coincidentally allows to give a good view of all the bounds, it is not necessarily representative of all possible situations.
In this section, we reproduce Figures~\ref{fig:bounds_comp_risk} and \ref{fig:bounds_comparison_m} for other values of $m$ and $R_S(h)$ in order to have a more complete view of how the new HTI bound compares in other settings.

Figure~\ref{fig:bounds_comp_risks_app} reproduces Figure~\ref{fig:bounds_comp_risk} for values of $m$ equal to $100$, $500$, $2000$ and $20$,$000$, keeping $d=50$ and $\delta=0.05$.
The hyperparameter $m'$ is optimized for $R_S(h)=0$ in all four cases.
Lugosi's chaining bound is vacuous and does not appear in any of the plots.

\begin{figure}[p]
\centering
\begin{subfigure}[t]{0.485\textwidth}
    \centering
    \includegraphics[width=\textwidth]{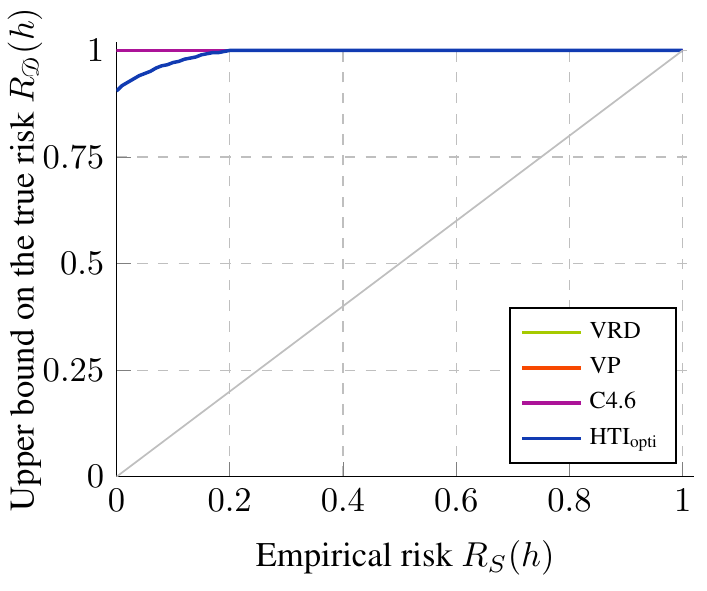}
    \caption{Bounds as functions of the empirical risk for $m=100$ and $d=50$.}
    \label{fig:bounds_comp_risk_m=100_d=50}
\end{subfigure}\hfill
\begin{subfigure}[t]{0.485\textwidth}
    \centering
    \includegraphics[width=\textwidth]{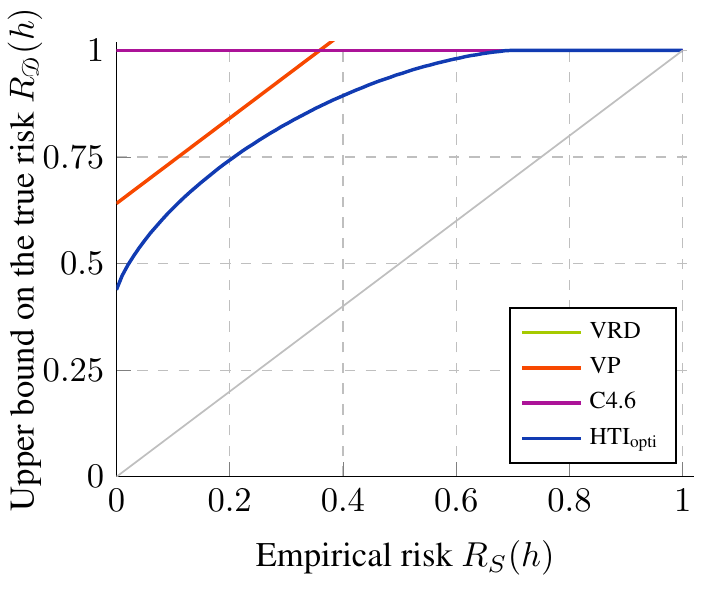}
    \caption{Bounds as functions of the empirical risk for $m=500$ and $d=50$.}
    \label{fig:bounds_comp_risk_m=500_d=50}
\end{subfigure}

\begin{subfigure}[t]{0.485\textwidth}
    \centering
    \includegraphics[width=\textwidth]{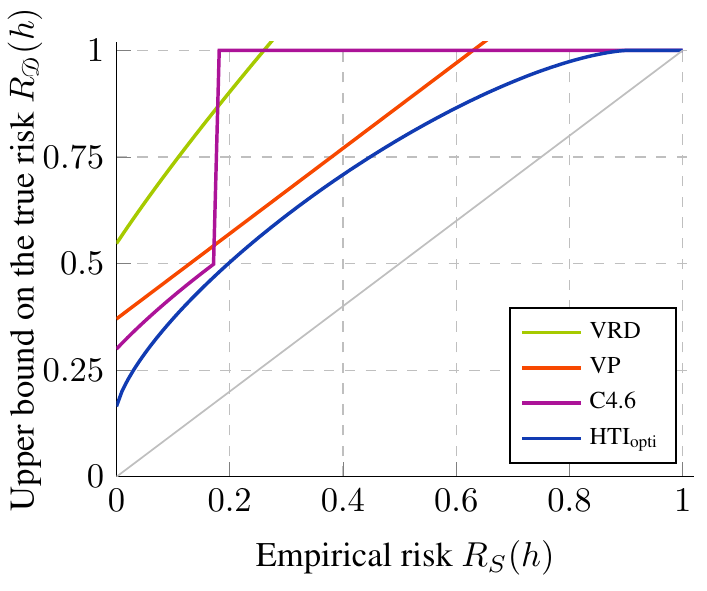}
    \caption{Bounds as functions of the empirical risk for $m=2000$ and $d=50$.}
    \label{fig:bounds_comp_risk_m=2000_d=50}
\end{subfigure}\hfill
\begin{subfigure}[t]{0.485\textwidth}
    \centering
    \includegraphics[width=\textwidth]{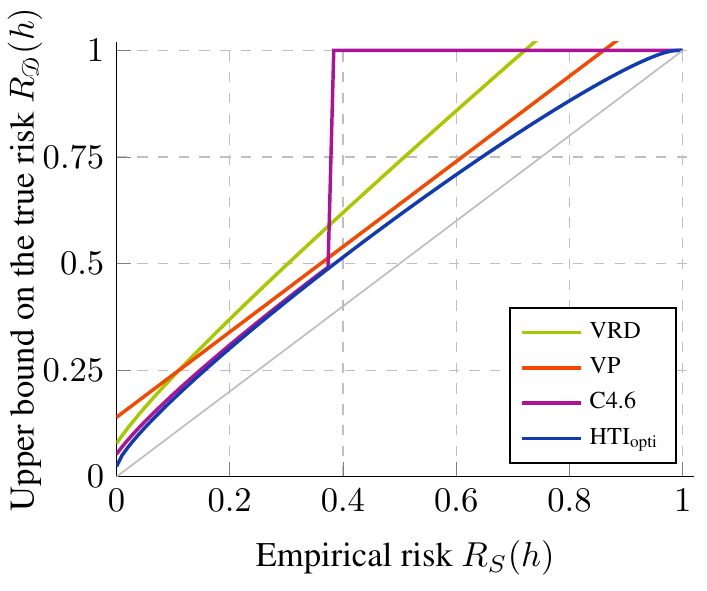}
    \caption{Bounds as functions of the empirical risk for $m=20$,$000$ and $d=50$.}
    \label{fig:bounds_comp_risk_m=20000_d=50}
\end{subfigure}
\caption{The bounds as functions of the empirical risk for multiple sample sizes $m$.
The confidence parameter is $\delta=0.05$ and the VC dimension is $d=50$.}
\label{fig:bounds_comp_risks_app}
\end{figure}

We note that when $m=100$, which is as small as $2d$, only the new HTI bound is non-vacuous, and for $m=500=10d$, C4.6 is still vacuous.
Figure~\ref{fig:bounds_comp_risk_m=2000_d=50}, with $m=2000$, gives a clearer picture of the situation, where all bounds are non-vacuous for low risks and are visually distinct.
Finally, for a large sample size of $m=20$,$000$ examples, all bounds seem to clump up, but the HTI bound is still always tighter than the other bounds.
In particular, at $R_S(h)=0$, the HTI bound takes a value of $0.024$ while C4.6 takes a value of $0.053$---a relative gain of 54.5\%.

Figure~\ref{fig:bounds_comp_ms_app} reproduces Figure~\ref{fig:bounds_comparison_m} for values of $R_S(h)$ equal to $0$, $0.1$ in addition to $0.2$ and $0.3$, again keeping $d=50$ and $\delta=0.05$.
The hyperparameter $m'$ is optimized individually for each value of $m$ and $R_S(h)$.

\begin{figure}[p]
\centering
\begin{subfigure}[t]{0.485\textwidth}
    \centering
    \includegraphics[width=\textwidth]{figures/bounds_comparison_m_risk=0_d=50.pdf}
    \caption{Bounds as functions of the sample size for $R_S(h)=0$ and $d=50$.}
    \label{fig:bounds_comp_m_risk=0_d=50_app}
\end{subfigure}\hfill
\begin{subfigure}[t]{0.485\textwidth}
    \centering
    \includegraphics[width=\textwidth]{figures/bounds_comparison_m_risk=0.1_d=50.pdf}
    \caption{Bounds as functions of the sample size for $R_S(h)=0.1$ and $d=50$.}
    \label{fig:bounds_comp_m_risk=0.1_d=50_app}
\end{subfigure}

\begin{subfigure}[t]{0.485\textwidth}
    \centering
    \includegraphics[width=\textwidth]{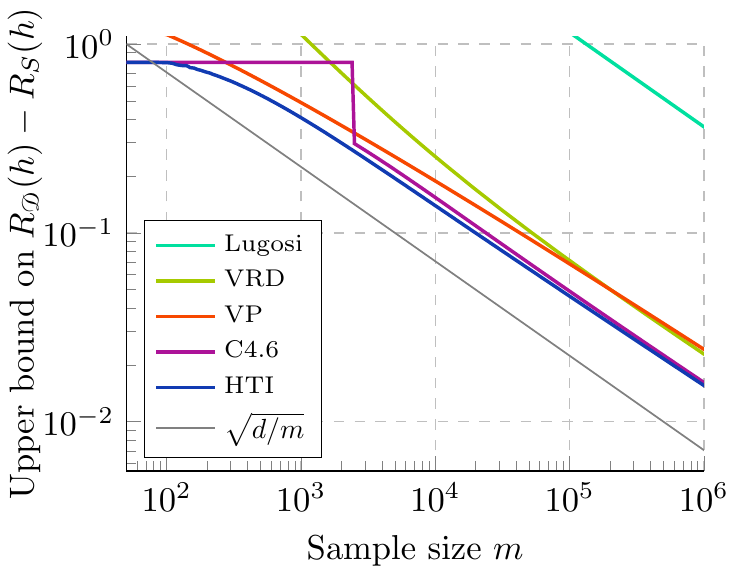}
    \caption{Bounds as functions of the sample size for $R_S(h)=0.2$ and $d=50$.}
    \label{fig:bounds_comp_m_risk=0.2_d=50}
\end{subfigure}\hfill
\begin{subfigure}[t]{0.485\textwidth}
    \centering
    \includegraphics[width=\textwidth]{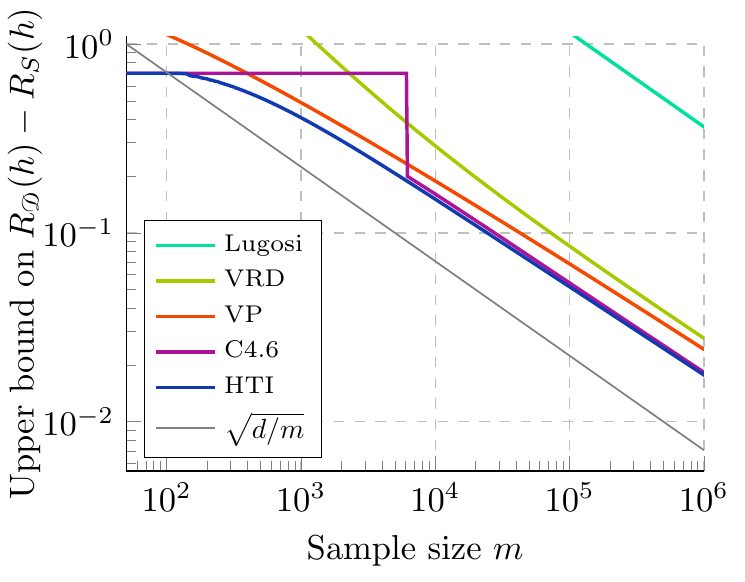}
    \caption{Bounds as functions of the sample size for $R_S(h)=0.3$ and $d=50$.}
    \label{fig:bounds_comp_m_risk=0.3_d=50}
\end{subfigure}
\caption{The bounds as functions of the sample size for multiple empirical risks.
The confidence parameter is $\delta=0.05$ and the VC dimension is $d=50$.}
\label{fig:bounds_comp_ms_app}
\end{figure}

Figure~\ref{fig:bounds_comp_m_risk=0_d=50_app} and \ref{fig:bounds_comp_m_risk=0.1_d=50_app} are identical to the figures shown in the body of the paper.
However, we can see from Figures~\ref{fig:bounds_comp_m_risk=0.2_d=50} and \ref{fig:bounds_comp_m_risk=0.3_d=50} that increasing the empirical risks seems to worsen the asymptotic rates of all bound except the chaining bound.
Hence, we can observe that the effect of the extra unnecessary log terms in the bounds matters more when the empirical risk is large.

For all the previously considered settings, we have not varied the VC dimension $d$, which was kept equal to $50$.
However, since all bounds are asymptotic functions of $\frac{d}{m}$, increasing $d$ has roughly the same effect on all bounds as decreasing $m$ in the same proportions (and vice versa).

Indeed, Figure~\ref{fig:bounds_comp_d=10_app} plots the bounds as a function of the empirical risk and the sample size for $d$ to $10$.
In Figure~\ref{fig:bounds_comp_risk_m=400_d=10}, the sample size is set to $m=400$, a five-fold decrease compared to the sample size of Figure~\ref{fig:bounds_comp_risk}.
Comparing both figures, one can see that they are nearly identical, confirming our claim.

Figure~\ref{fig:bounds_comp_m_risk=0.1_d=10} on the other hand shows the asymptotic rate of the bounds for $d=10$ and an empirical risk of $0.1$.
One can see that the curves are extremely similar to what is observed in Figure~\ref{fig:bounds_comp_m_risk=0.1_d=50}, with the exception that the $x$-axis is shifted by a constant (due to the log scale), as one would expect.

\begin{figure}[h!]
\centering
\begin{subfigure}[t]{0.485\textwidth}
    \centering
    \includegraphics[width=\textwidth]{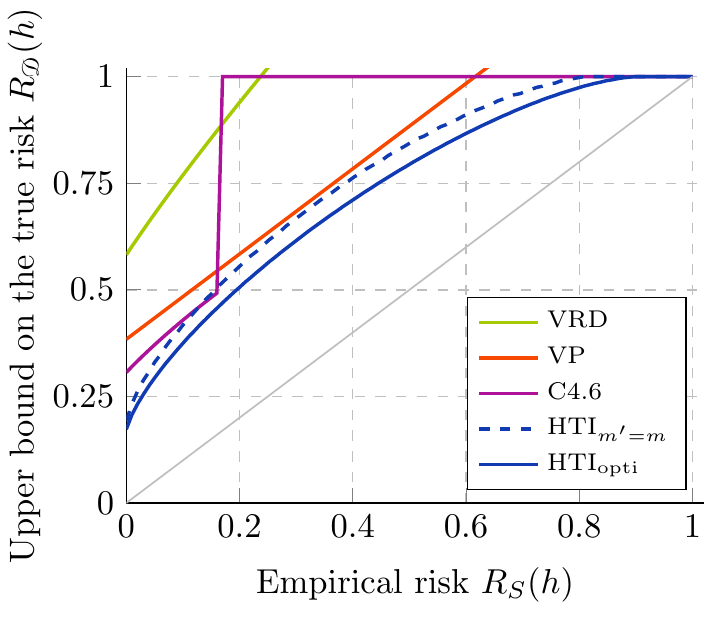}
    \caption{Bounds as functions of the empirical risk for $m=400$ and $d=10$.}
    \label{fig:bounds_comp_risk_m=400_d=10}
\end{subfigure}\hfill
\begin{subfigure}[t]{0.485\textwidth}
    \centering
    \includegraphics[width=\textwidth]{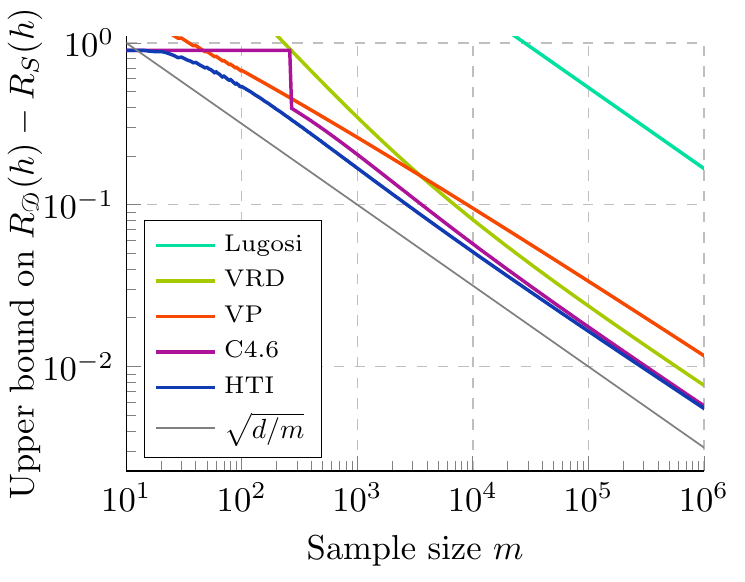}
    \caption{Bounds as functions of the sample size for $R_S(h)=0.1$ and $d=10$.}
    \label{fig:bounds_comp_m_risk=0.1_d=10}
\end{subfigure}
\caption{Comparison of the bounds.
Left: the bounds as functions of the empirical risk for $m=400$. Right: the bounds on $R_\D(h) - R_S(h)$ as functions of the sample size $m$ for $R_S(h) = 0.1$.
The confidence parameter is $\delta=0.05$ and the VC dimension is $d=10$ for both cases.}
\label{fig:bounds_comp_d=10_app}
\end{figure}



We have not yet discussed the confidence parameter $\delta$.
The HTI, VP, VRD and C4.6 bounds are all functions of $\log(\tau_\H(m+m')/\delta)$.
Therefore, a linear decrease in $\delta$ is roughly equivalent to a logarithmic increase of the VC dimension (and vice versa).
As for Lugosi's chaining bound, it depends on $\delta$ via an additive factor of $\log(1/\delta)$.
Consequently, the variation of $\delta$ has marginal effects on all of the bounds compared to the other three parameters.


\subsection{Comparison to sample compression}
\label{ssec:sc_comparaison}

Sample compression bounds~\citep{floyd95sample} exploit the idea that a classifier can be described with only a small subset of the sample.
Such a subset is called a compression set, and its size $d$ acts as an indicator of the complexity of the chosen class of hypotheses.
In that sense, this parameter is analogous to the VC dimension.
Sample compression bounds are known to be very tight (although subject to the same optimal asymptotic rate as VC bounds).
Indeed, using the binomial tail inversion, \citet{langford05} and \citet{Laviolette09} have shown that they are nearly as tight as one might hope.

The Warmuth-Littlestone~\citep{floyd95sample} conjecture states that any hypothesis class with VC dimension $d$ admits a compression scheme with size in $\O(d)$.%
\footnote{A strong version of the conjecture, stated by \citet{kuzmin2007unlabeled}, requires a compression size of at most $d$, which has been refuted by \cite{palvolgyi2020unlabeled}, although this result concerns unlabeled compression schemes.}
If this conjecture was true, it would imply that one could use the (tighter) sample compression bounds instead of traditional VC bounds for most applications.
Yet, despite some significant progress \citep{moran16sample}, the weak version of the Warmuth-Littlestone conjecture is still an open problem.
Furthermore, it is not always clear how to construct an appropriate compression scheme with a reasonable message length, which has a compression size equal to (or slightly larger than) the VC dimension of the hypothesis class.
Therefore, there are still plenty of situations where VC bounds can prove useful.

Nevertheless, it is of interest to see how the new VC bound compares to sample compression bounds.
Thus, we assume a context where both generalization bounds apply, and we consider an \emph{optimistic} situation where no reconstruction message is required, since it would be specific to the hypothesis class.
Note that such a reconstruction message would loosen the compression bound.
With these assumptions, we can use the following bound \citep{langford05, Laviolette09}:
\begin{equation*}
    \epsilon_\textnormal{SC}(R_S(h)) = \BinInv\pr{mR_S(h), m-d,\textstyle \frac{\delta}{m \binom{m}{d}}},
\end{equation*}
where $\BinInv$ is the binomial tail inversion (defined analogously to $\HypInv$) and $d$ is the size of the compression scheme.
Assuming the Warmuth-Littlestone conjecture holds, we set the compressed sample size equal to the VC dimension at a value of $50$.
Figure~\ref{fig:sc_comparison} reproduces the comparison of Figures~\ref{fig:bounds_comp_risk} and \ref{fig:bounds_comp_m_risk=0.1_d=50}, but for the sample compression (SC) bound and the new hypergeometric tail inversion (HTI) bound.

\begin{figure}[h]
\centering
\begin{subfigure}[t]{0.485\textwidth}
    \centering
    \includegraphics[width=\textwidth]{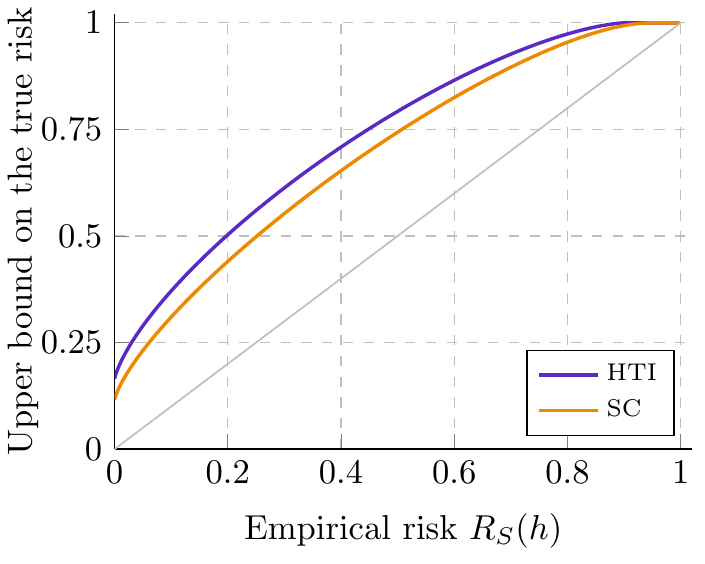}
    \caption{Bounds as functions of the empirical risk. The gray line corresponds to the empirical risk.}
    \label{fig:sc_comp_risk}
\end{subfigure}\hfill
\begin{subfigure}[t]{0.485\textwidth}
    \centering
    \includegraphics[width=\textwidth]{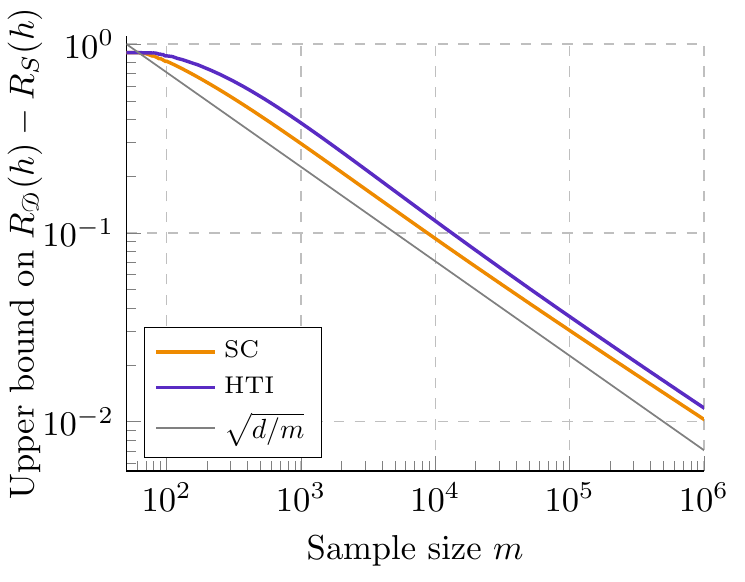}
    \caption{Bounds on the risks' difference as functions of the sample size. The gray line corresponds to the optimal asymptotic behavior.}
    \label{fig:sc_comp_m}
\end{subfigure}
\caption{Comparison of the bounds.
Left: the bounds as functions of the empirical risk for $m=2000$. Right: the bounds on $R_\D(h) - R_S(h)$ as functions of the sample size $m$ for $R_S(h) = 0.1$.
The confidence parameter is set to $\delta=0.05$. Both the VC dimension and the compressed sample size are set to $d=50$.}
\label{fig:sc_comparison}
\end{figure}

We can see that the sample compression bound is always slightly tighter than the HTI bound.
However, it is interesting to see that both bounds have a very similar ``arc'' shape which dips for risks near 0---a phenomenon absent in other VC bounds.
Hence, even though the sample compression bound may not be always valid for VC classes, the HTI bound is very close to it and has a similar behaviour.

\vskip 0.2in
\bibliography{references}

\end{document}